\newtheorem{theorem}{Theorem}
\newtheorem{definition}{Definition}
\newtheorem{proposition}{Proposition}
\newtheorem{corollary}{Corollary}
\newtheorem{lemma}{Lemma}
\newcommand{\numberthis}{\addtocounter{equation}{1}\tag{\theequation}}
\newcommand{\pref}[1]{\prettyref{#1}}
\newcommand{\savehyperref}[2]{\texorpdfstring{\hyperref[#1]{#2}}{#2}}
\newcommand{\EE}{\mathbb{E}}
\newcommand{\RR}{\mathbb{R}}
\newcommand{\calX}{\mathcal{X}}
\newcommand{\calV}{\mathcal{V}}
\newcommand{\calF}{\mathcal{F}}
\newcommand{\calK}{\mathcal{K}}
\newcommand{\calN}{\mathcal{N}}
\newcommand{\calM}{\mathcal{M}}
\newcommand{\calR}{\mathcal{R}}
\newcommand{\tO}{\tilde{\mathcal{O}}}
\newcommand{\bs}{\mathbf{s}}
\newcommand{\by}{\mathbf{y}}
\newcommand{\bx}{\mathbf{x}}
\newcommand{\bp}{\mathbf{p}}
\newcommand{\bv}{\mathbf{v}}
\newcommand{\hy}{\widehat{y}}
\newcommand{\tx}{\tilde{x}}
\newcommand{\btx}{\tilde{\bx}}
\newcommand{\be}{\mathbf{e}}
\newcommand{\unif}{\mathrm{Unif}}
\newcommand{\argmin}{\mathop{\arg\min}} 
\newcommand{\bepsilon}{{\pmb{\epsilon}}}
\renewcommand{\epsilon}{\varepsilon}
\newcommand{\simiid}{\stackrel{\mathrm{i.i.d.}}{\sim}}
\newcommand{\frakc}{\mathfrak{c}}
\newcommand{\barf}{\bar{f}}
\newcommand{\bars}{\bar{s}}
\newcommand{\barbs}{\bar{\bs}}
\newcommand{\barF}{\bar{\calF}}
\newcommand{\barV}{\bar{\calV}}
\newcommand{\barv}{\bar{v}}
\newcommand{\barbv}{\bar{\bv}}
\newcommand{\btepsilon}{\tilde{\bepsilon}}
\newcommand{\tepsilon}{\tilde{\epsilon}}
\newcommand{\hepsilon}{\widehat{\epsilon}}
\newcommand{\bmu}{\pmb{\mu}}
\newcommand{\tmu}{\tilde{\mu}}
\newcommand{\btmu}{\tilde{\bmu}}
\newcommand{\sign}{\mathrm{sign}}
\newcommand{\fix}{{\sf{fix}}}
\newcommand{\tcalF}{\widetilde{\calF}}
\newcommand{\calT}{\mathcal{T}}
\newcommand{\tcalX}{\tilde{\mathcal{X}}}
\newcommand{\vc}{\mathsf{vc}}
\newcommand{\ldim}{\mathsf{ldim}}
\newcommand{\fat}{\mathrm{sfat}}
\newcommand{\reals}{\mathbb{R}}
\newcommand{\En}{\mathbb{E}}
\newcommand{\veps}{\varepsilon}
\newcommand{\dseq}{d^{\sf{seq}}}
\newcommand{\dnon}{d^{\sf{}}}
\newcommand{\calNnon}{\calN^{\sf{}}}
\newcommand{\calNseq}{\calN^{\sf{seq}}}
\newcommand{\calVseq}{\calV^{\sf{seq}}}
\newcommand{\tOmega}{\tilde{\Omega}}
\def\ddefloop#1{\ifx\ddefloop#1\else\ddef{#1}\expandafter\ddefloop\fi}
\def\ddef#1{\expandafter\def\csname bb#1\endcsname{\ensuremath{\mathbb{#1}}}}
\def\ddefloop#1{\ifx\ddefloop#1\else\ddef{#1}\expandafter\ddefloop\fi}
\def\ddef#1{\expandafter\def\csname b#1\endcsname{\ensuremath{\mathbf{#1}}}}
\def\ddef#1{\expandafter\def\csname sf#1\endcsname{\ensuremath{\mathsf{#1}}}}
\def\ddef#1{\expandafter\def\csname c#1\endcsname{\ensuremath{\mathcal{#1}}}}
\def\ddef#1{\expandafter\def\csname h#1\endcsname{\ensuremath{\widehat{#1}}}}
\def\ddef#1{\expandafter\def\csname hc#1\endcsname{\ensuremath{\widehat{\mathcal{#1}}}}}
\def\ddef#1{\expandafter\def\csname t#1\endcsname{\ensuremath{\widetilde{#1}}}}
\def\ddef#1{\expandafter\def\csname tc#1\endcsname{\ensuremath{\widetilde{\mathcal{#1}}}}}
\def\ddefloop#1{\ifx\ddefloop#1\else\ddef{#1}\expandafter\ddefloop\fi}
\def\ddef#1{\expandafter\def\csname scr#1\endcsname{\ensuremath{\mathscr{#1}}}}
\title{A Gapped Scale-Sensitive Dimension and Lower Bounds for Offset Rademacher Complexity}
	\author{
		Zeyu Jia
		\\
		\normalsize
        {\texttt{zyjia@mit.edu}}
		\and
		Yury Polyanskiy
		\\
		\normalsize
        {\texttt{yp@mit.edu}}
		\and
		Alexander Rakhlin
		\\
		\normalsize
        {\texttt{rakhlin@mit.edu}}
		\and
	}
\begin{document}
\maketitle

\begin{abstract}
    We study gapped scale-sensitive dimensions of a function class in both sequential and non-sequential settings. We demonstrate that covering numbers for any uniformly bounded class are controlled above by these gapped dimensions, generalizing the results of \cite{anthony2000function,alon1997scale}. Moreover, we show that the gapped dimensions lead to lower bounds on offset Rademacher averages, thereby strengthening existing approaches for proving lower bounds on rates of convergence in statistical and online learning.
\end{abstract}

\section{Introduction} 
 
The celebrated Vapnik-Chervonenkis dimension $\vc(\cF)$ of a binary-valued function class $\cF$ and the scale-sensitive dimension $\vc(\cF,\alpha)$ of a real-valued function class $\cF$ are central notions in the study of empirical processes and convergence of statistical learning methods \cite{VapChe71,bartlett1994fat,kearns1994efficient}. Sequential analogues of these notions---the Littlestone dimension $\ldim(\cF)$ and the sequential scale-sensitive dimension $\fat(\cF,\alpha)$---have been shown to play an analogously central role in the study of uniform martingale laws and online prediction \cite{littlestone1988learning, ben2009agnostic, rakhlin2010online}. 

In this paper, we study ``gapped'' versions of $\vc(\cF,\alpha)$ and $\fat(\cF,\alpha)$. The modification yields a dimension that is no larger than the original one, yet can still be shown to control covering numbers in both sequential and non-sequential cases. More importantly, the new notion gives us a more precise control on the functions involved in ``shattering'' and thus yields non-vacuous lower bounds for offset Rademacher complexities for \textit{any} uniformly bounded class---both in the classical and sequential cases---and, as a consequence, tighter lower bounds for online prediction problems, such as online regression or transductive learning. Our definition in the non-sequential case can also be seen as a modification of the Natarajan dimension \cite{natarajan1988two,natarajan1989learning}, and was, in fact, introduced in \cite{anthony2000function}.

We first motivate the development in this paper on the simpler case of non-sequential data. We start by recalling the definition of the Vapnik-Chervonenkis dimension and its scale-sensitive version. Given a class $\cF\subseteq \{f:\cX\to\{-1,1\}\}$ of binary-valued functions on some set $\cX$, consider the projection $\cF|_{x_1,\ldots,x_d}=\left\{(f(x_1),\ldots,f(x_d)):f\in\cF\right\}$ onto $d$ elements $x_1,\ldots,x_d\in\cX$. The VC-dimension $\vc(\cF)$ is the largest $d$ such that there exist  $\{x_1,\ldots,x_d\}$ with $\cF|_{x_1,\ldots,x_d}=\{-1,1\}^d$. For a real-valued class $\cF\subseteq \{f:\cX\to[-1, 1]\}$ and a scale $\alpha\geq 0$, the scale-sensitive dimension $\vc(\cF,\alpha)$ is defined to be the largest $d$ such that there exist $x_1,\ldots,x_d \in \cX$ and $\bs\in[-1, 1]^d$ with the following property: for any $\bepsilon\in\{-1,1\}^d$ there exists $f^{\bepsilon}\in\cF$ with $\veps_i(f^{\bepsilon}(x_i)-s_i)\geq \alpha/2$ for all $i\in \{1,\ldots,d\}$. We say that $\cF$ \textit{shatters} $x_1,\ldots,x_n$ at scale $\alpha$ if the aforementioned property holds. Shattering can also be visualized as a property that $\cF|_{x_1,\ldots,x_d}$ ``contains'' a cube $\bs+(\alpha/2)\{-1,+1\}^d$ at scale $\alpha$ with a center at $\bs$, in the sense that for each direction $\bepsilon$, there is a function $f^{\bepsilon}\in\cF$ whose projection onto the data lies in the quadrant outside the vertex $\bs+(\alpha/2)\bepsilon$. 

Note that the definition of shattering does not tell us whether $f^{\bepsilon}$ is close to the corresponding vertex, i.e. 
\begin{align}
\label{eq:closeness_nonsequential}
    f^{\veps}(x_i)\approx s_i+\veps_i \alpha/2
\end{align}
for every $i\in\{1,\ldots,d\}$. We can see that such a requirement (in the non-sequential case described here) can be satisfied under the assumption of convexity of $\cF$. 
Unfortunately, for the sequential case described in \pref{sec:sequential}, the nature of the restriction that \eqref{eq:closeness_nonsequential} imposes on $\cF$ is less clear. 

We finish this introductory section by motivating the utility of the additional requirement \eqref{eq:closeness_nonsequential}. Once again, we only discuss the non-sequential case here.  Consider the following lower bound on Rademacher averages of a class $\cF$ in terms of $\vc(\cF,\alpha)$:
\begin{align}
    \label{eq:iid_rad}
    \En_{\veps}\sup_{f\in\cF} \sum_{i=1}^d \veps_i f(x_i) \geq \En_{\veps} \sum_{i=1}^d \veps_i (f^{\bepsilon}(x_i) -s_i) \geq n \alpha/2
\end{align}
where $d=\vc(\cF,\alpha)$ and $\{x_1,\ldots,x_d\}$ is a set whose existence is guaranteed by the definition. Here the expectation is with respect to independent Rademacher random variables $\veps_i$ (taking values $\pm1$ with equal probability). The lower bound argument can be extended to $d>\vc(\cF,\alpha)$ by considering repeated blocks of points and appealing to Khintchine's inequality. Such an argument leads to lower bounds of order $\Omega(\alpha\sqrt{\vc(\cF,\alpha)n})$, implying that the scale-sensitive dimension is an inherent barrier for Rademacher averages to be small, and, as a consequence, a barrier for certain learning problems.  Indeed, the notion of Radmacher averages in \eqref{eq:iid_rad} is known to be a key object in the study of prediction with i.i.d. data and ``non-curved'' losses. On the other hand, for loss functions such as square loss, it is the \textit{offset Rademacher averages}---or closely related local Rademacher averages \cite{bartlett2005local}---that in many situations correctly quantify the rates of convergence. The (non-sequential) offset Rademacher averages are defined as:\footnote{We replaced $\cF$ with $\cF-\cF$ to simplify the centering issues.}
\begin{align}
    \label{eq:iid_offset} 
    \En_{\veps}\sup_{f\in\cF-\cF} \sum_{i=1}^n \veps_i f(x_i) - c\cdot f(x_i)^2.
\end{align}
Unfortunately, the lack of control on the magnitudes of departures of $f^\bepsilon(x_i)$ from $s_i\pm \alpha/2$  prevents us from obtaining sufficiently strong lower bounds when considering a shattered set, as the negative term in \eqref{eq:iid_offset} may render the lower bound vacuous. This question motivates the study of gapped scale-sensitive dimensions, as presented in the next sections for both the non-sequential and sequential cases.

We briefly mention that a number of other versions of combinatorial dimensions have been proposed over the last few decades (see \cite{DanShaSh14, brukhim2022characterization,hanneke2023universal} and references therein). To the best of our knowledge, these notions are different from those proposed in the present paper, and do not immediately imply the lower bounds we seek.

\paragraph{Organization} We start the technical part of the paper with the non-sequential version of the gapped dimension in \pref{sec:non-sequential}.  We first introduce the gapped dimension for \emph{integer-valued} classes in \pref{sec:multiclass_nonseq} and state a version of the Sauer-Shelah-Vapnik-Chervonenkis lemma for this combinatorial definition due to \cite{anthony2000function},\footnote{After this paper was completed, we were informed by Peter Bartlett that the definition of the ``gapped'' dimension and Lemma~\ref{lem: non-sequential-integer} are contained in \cite{anthony2000function}.} yielding control of covering numbers. We then extend the definition to the real-valued case in \pref{sec:realvalued_nonseq} and prove that this scale-sensitive dimension controls covering numbers, similarly to the development in \cite{alon1997scale} for the standard definition. We then prove that offset Rademacher averages for any uniformly bounded class are lower bounded according to the behavior of the gapped scale-sensitive dimension of this class (\pref{sec:lower_bound_offset_nonsequential}), and present the ensuing lower bound for the problem of online transductive regression. \pref{sec:sequential} mirrors the development in \pref{sec:non-sequential} for the sequential case, with an application to online regression.

\paragraph{Notation} We denote $x_{1:d}=\{x_1,\ldots,x_d\}$ and $[M]=\{1,\ldots,M\}$ for integer $M>1$.For functions $A(\alpha, n)$ and $B(\alpha, n)$, we use $A(\alpha, n) = \Omega(B(\alpha, n))$ or $B(\alpha, n) = \mathcal{O}(A(\alpha, n))$ to denote $A(\alpha, n) \ge c\cdot B(\alpha, n)$ for any $\alpha > 0$ and positive integer $n$, with some fixed positive constant $c$. We use $A(\alpha, n) = \tilde{\Omega}(B(\alpha, n))$ or $B(\alpha, n) = \tO(A(\alpha, n))$ to denote $A(\alpha, n) \ge c \cdot B(\alpha, n) / \log^r(n/\alpha)$ holds for any $\alpha > 0$ and positive integer $n$, with some fixed positive constants $c, r$.

\section{Non-Sequential Gapped Dimensions}\label{sec:non-sequential}

\subsection{Integer-Valued Functions}
\label{sec:multiclass_nonseq}

Suppose $M$ is a positive integer. Let $\frakc: [M]\times [M]\to \RR_+\cup\{0\}$ be a distance metric. Consider the following combinatorial parameter \cite{anthony2000function}:
\begin{definition}[(Non-Sequential) Gapped Dimension]\label{def: non-sequential-finite}
	Let $\calF\subseteq  \{f: \calX\to [M]\}$ be a function class and fix $\alpha \ge 0$. We sat that $\calF$ shatters the set $\{x_1, \ldots, x_d\}\subseteq \calX$ at scale $\alpha$ if there exists $s_1 = (s_1[-1], s_1[1]), s_2 = (s_2[-1], s_2[1]), \ldots, s_d = (s_d[-1], s_d[1])\in [M]\times [M]$ with the following properties:
	\begin{enumerate}
		\item For any $t\in [d]$, $\frakc(s_t[1], s_t[-1]) \ge \alpha$;
		\item For any $\bepsilon\in \{-1, 1\}^d$, there exists $f^\bepsilon\in \calF$ such that $f^{\bepsilon}(x_t) = s_t[\epsilon_t]$ for any $t\in [d]$.
	\end{enumerate}
	We define the (non-sequential) gapped scale-sensitive dimension $\dnon_\frakc(\calF, \alpha)$ (or $\dnon(\calF, \alpha)$ when $\frakc$ is clear from context) of $\calF$ as the largest $d$ such that there exists $\{x_1, \ldots, x_d\}$ which is shattered by $\calF$.
\end{definition}

The gapped dimension in \pref{def: non-sequential-finite} was introduced in \cite{anthony2000function}. The definition is similar to that of the Natarajan dimension \citep{natarajan1988two,natarajan1989learning} for multi-class learning, with the important difference that the two ``labels'' (denoted by the choice $s_t[1]$ and $s_t[-1]$) are $\alpha$-separated (hence the name \emph{gapped} dimension); unlike multi-class problems where the the labels are treated as a categorical variable, in our case they are ordinal. 

We now recall the standard definition of a covering number, which we state here with respect to the distance $\frakc$ on each coordinate.
\begin{definition}[(Non-Sequential) Covering Number for Integer-Valued Functions]
	Fix $x_1, \ldots, x_n\in \calX$. We say that the set $\calV = \{\bv = (v_1, \ldots, v_n) \in [M]^n\}$, is a (non-sequential) cover of $\calF$ on $x_{1:n}$ at scale $\alpha$ if for any function $f\in \calF$, there exists $\bv\in \calV$ such that
	$$\max_{t\in [n]} \frakc(f(x_t), v_t)\le \alpha.$$
	We use $\calNnon_{\frakc, \infty} (\calF, x_{1:n}, \alpha)$ (or $\calNnon_\infty(\calF, x_{1:n}, \alpha)$ when $\frakc$ is clear from context) to denote the size of the smallest non-sequential cover of $\calF$ on $x_{1:n}$ at scale $\alpha$.
\end{definition}
The following lemma upper bounds the covering number of a integer-valued class by the gapped dimension.
\begin{lemma}[\cite{anthony2000function}]\label{lem: non-sequential-integer}
	Let $\calF\subseteq  \{f: \calX\to [M]\}$ and $\{x_1, \ldots, x_n\}\subseteq  \calX$. Then we have 
	$$\log \calNnon_\infty(\calF, x_{1:n}, \alpha)\le 16\dnon(\calF, \alpha)\log^2(enM)$$
\end{lemma}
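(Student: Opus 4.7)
The plan is to follow the classical strategy of \cite{alon1997scale} for bounding covering numbers by scale-sensitive dimensions, adapted to the gapped dimension of \pref{def: non-sequential-finite}. First, I would reduce the covering problem to a packing problem: let $\calG \subseteq \calF|_{x_{1:n}}$ be a maximal $\alpha$-packing in the $\ell_\infty(\frakc)$ pseudo-metric, so that every element of $\calF|_{x_{1:n}}$ lies within $\alpha$ of some member of $\calG$. This gives $\calNnon_\infty(\calF, x_{1:n}, \alpha) \le |\calG|$, and since $\dnon(\calG, \alpha) \le \dnon(\calF, \alpha)$, it suffices to establish the claimed bound for an arbitrary $\alpha$-separated subset of $[M]^n$.

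The main work is then a Sauer-Shelah-type counting lemma: if $\calH \subseteq [M]^n$ consists of pairwise $\alpha$-separated functions and has gapped dimension at most $d$ at scale $\alpha$, then $\log |\calH| \le 16 d \log^2(enM)$. I would prove this by induction on the range size $M$ via a range-halving reduction. Pick a cut-point $m^\ast \in [M]$ and define the binary projection $\pi(f) = (\mathbf{1}[f(x_t) > m^\ast])_{t \in [n]} \in \{0,1\}^n$; this partitions $\calH$ into fibres indexed by $\pi(\calH) \subseteq \{0,1\}^n$. The VC dimension of $\pi(\calH)$ is at most $d$, because any set VC-shattered by $\pi(\calH)$ lifts to a gap-$\alpha$-shattered set in $\calH$ with centers $(s_t[-1], s_t[+1])$ drawn one from each half of the range, provided the two halves of $[M]$ are $\frakc$-separated by $\alpha$. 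Applying Sauer-Shelah to $\pi(\calH)$ bounds $|\pi(\calH)| \le (en/d)^d$, and each fibre is an $\alpha$-separated subclass of functions valued in a range of size at most $M/2$, so the inductive hypothesis bounds its size by $\exp(16 d \log^2(enM/2))$. Multiplying these bounds and solving the recursion yields the claimed $\log^2(enM)$ dependence.

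The main obstacle is ensuring that the two halves of $[M]$ defined by the cut-point $m^\ast$ are $\frakc$-separated by at least $\alpha$, so that a VC-shattering of $\pi(\calH)$ indeed lifts to a gap-$\alpha$-shattering of $\calH$ in the sense of \pref{def: non-sequential-finite}. For the Euclidean distance $\frakc(a,b) = |a - b|$ this is automatic when $m^\ast$ is chosen near the median, but for a general metric $\frakc$ some care is needed; a clean solution is to randomize over threshold pairs $(a,b) \in [M]^2$ with $\frakc(a,b) \ge \alpha$, apply the halving with respect to a random such pair, and extract a deterministic choice for which both the separation of the two halves and the reduction in effective range size hold. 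This randomization costs an extra factor of $\log M$ at each level of the induction which, combined with the $\log(en)$ from Sauer-Shelah at the base, produces the $\log^2(enM)$ factor in the final bound.
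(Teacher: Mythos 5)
The key lemma in your recursion---that $\vc(\pi(\calH)) \le \dnon(\calH,\alpha)$---does not hold, and this breaks the argument at its first step. VC-shattering of the binary projection $\pi(\calH)$ only constrains each $f^{\bepsilon}(x_t)$ to lie in the correct \emph{half} of $[M]$; the gapped dimension of \pref{def: non-sequential-finite}, by contrast, demands the exact equality $f^{\bepsilon}(x_t)=s_t[\epsilon_t]$ with a \emph{single} witness pair $(s_t[-1],s_t[1])$ shared by all $2^d$ sign patterns. Two different $f^{\bepsilon}$'s with $\epsilon_t=-1$ are free to take different values in $\{1,\dots,m^\ast\}$, so there is no common $s_t[-1]$ to declare, and the purported lift of a VC-shattered set to a gap-$\alpha$-shattered set fails. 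A concrete counterexample: take $\frakc(a,b)=|a-b|$, $\alpha=2$, $M\ge 4$, $n=2$, and
$\calH=\{(1,1),(1,M),(M,1),(M,M-1)\}\subseteq [M]^2$. This is pairwise $2$-separated in $\ell_\infty$, and with $m^\ast=\lfloor M/2\rfloor$ one gets $\pi(\calH)=\{0,1\}^2$, so $\vc(\pi(\calH))=2$. Yet $\dnon(\calH,2)=1$: any gap-$2$-shattering of $\{x_1,x_2\}$ must use $\{s_1[-1],s_1[1]\}=\{1,M\}$, forcing the second-coordinate witnesses to equal $\{1,M\}$ (from the two functions starting with $1$) and simultaneously $\{1,M-1\}$ (from the two starting with $M$), which is impossible. (Note also that your ``automatic for $\frakc=|\cdot|$'' remark is off: consecutive integers on the two sides of $m^\ast$ are at distance $1$, so the halves are \emph{not} $\alpha$-separated for $\alpha>1$; the randomization you sketch does not resolve the witness-uniqueness problem in any case.)

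For comparison, the argument the paper actually uses (following Alon, Ben-David, Cesa-Bianchi, Haussler) never projects to a binary class. It goes through the covering--packing reduction and then inducts on a quantity $t(h,\ell)$ that lower-bounds the number of shattered \emph{pairs} $(X,\bs)$ produced by any $\ell_\infty$-packing of size $h$ over a domain of size $\ell$. The crucial difference is that the induction step pairs up the packing elements two at a time and chooses the witness pair $(i^\star,j^\star)$ at the disagreeing coordinate from a specific pair of functions; this is exactly what fixes a single witness pair and makes gap-shattering go through. Your projection approach discards precisely this information, which is why it bounds the wrong quantity.
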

The proof of \pref{lem: non-sequential-integer} is deferred to \pref{sec: non-sequential-integer-app}.
This result is similar to that of \cite{alon1997scale}, which provides an upper bound on the covering number in terms of the (original) scale-sensitive dimension. Since the gapped dimension can be smaller than the original definition, \pref{lem: non-sequential-integer} is an improvement over the corresponding result in \cite{alon1997scale}.

\subsection{Real-Valued Functions}
\label{sec:realvalued_nonseq}

We now turn our attention to the case of real-valued function classes. Let $\frakc: [-1, 1]\times [-1, 1]\to \RR_{+}\cup \{0\}$ be a distance metric; for example, we may choose $\frakc(a, b)=|a-b|$. We define the following notion of non-sequential gapped scale-sensitive dimension:
\begin{definition}[(Non-Sequential) Gapped Scale-Sensitive Dimension]\label{def: non-sequential-real}
	Let $\calF\subseteq  \{f:\cX\to[-1, 1]\}$ and fix $\alpha, \beta > 0$. We say that $\calF$ shatters $\{x_1, \ldots, x_d\}\subseteq \cX$ at scale $(\alpha, \beta)$ if there exist $s_1 = (s_1[-1], s_1[1]), s_2 = (s_2[-1], s_2[1]), \ldots, s_d = (s_d[-1], s_d[1])\in [-1, 1]\times [-1, 1]$ with the following properties:
	\begin{enumerate}
		\item For any $t\in [d]$, $\frakc(s_t[1], s_t[-1]) \ge \alpha$;
		\item For any $\bepsilon\in \{-1, 1\}^d$, there exists $f^\bepsilon\in \calF$ such that $\frakc(f^{\bepsilon}(x_t), s_t[\epsilon_t])\le \beta$ for any $t\in [d]$.
	\end{enumerate}
	We define the (non-sequential) gapped scale-sensitive dimension $\dnon(\calF, \alpha, \beta)$ of $\calF$ as the largest $d$ such that there exist $x_{1:d}\in \calX$ shattered by $\calF$ at scale $(\alpha, \beta)$.
\end{definition}

To illustrate the geometric requirement of the gapped scale-sensitive dimension, we refer to \pref{fig:shattering}. The standard definition of scale-sensitive dimension asks for a cube of side length $\alpha$ to be inscribed in the set, where ``inscribed'' means that there is an element of the set $\cF|_{x_1,\ldots,x_d}$ in any of the $2^d$ quadrants, for each vertex of the hypercube (formally, for any $\bepsilon\in\{-1,1\}^d$ there exists $f^{\bepsilon}\in\cF$ with $\veps_i(f^{\bepsilon}(x_i)-s_i)\geq \alpha/2$ for all $i\in \{1,\ldots,d\}$). In contrast, the gapped scale-sensitive dimension asks for a hypercube of side-length at least $\alpha$ to be inscribed in the set, where ``inscribed'' means that each of the $2^d$ vertices are $\beta$-close coordinate-wise to some element of $\cF|_{x_1,\ldots,x_d}$. While it is immediate that $\dnon(\calF, \alpha, \beta)\leq \vc(\cF,\alpha - 2\beta)$ for $\beta < \alpha/2$,
we show that this gap cannot be too large. We prove this fact by first establishing a relation between covering numbers and the gapped dimension.
\begin{figure}[h]
\centering
\includegraphics[width=0.6\textwidth]{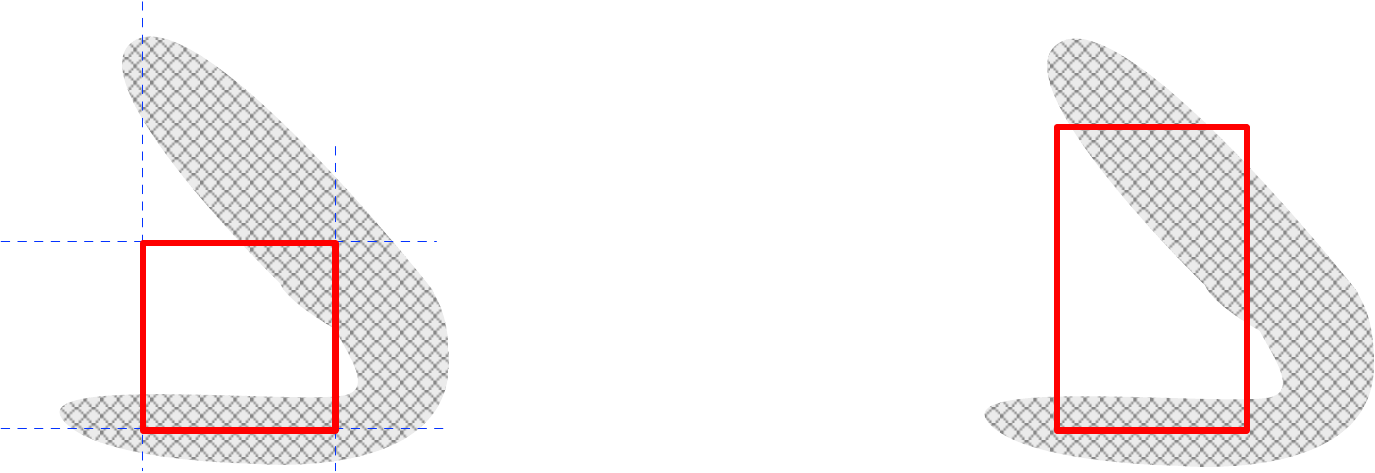}
\caption{Grey area depicts a set $\cF|_{x_1,\ldots,x_d}$. The hypercube on the left is ``inscribed'' in the classical sense, while the hyperrectangle on the right is ``inscribed'' according to the proposed definition.}
\label{fig:shattering}
\end{figure}

\begin{definition}[(Non-Sequential) Covering Number for Real-Valued Functions]\label{sec: def-nonsequential-covering}
	We say that a set $\calV \subseteq [-1, 1]^n$ is a cover of $\calF$ on $\{x_1,\ldots,x_n\}\subseteq \cX$ at scale $\alpha$ if for any function $f\in \calF$, there exists $v=(v_1,\ldots,v_n) \in \calV$ such that
	$$\max_{t\in [n]}\frakc(f(x_t), v_t)\le \alpha.$$
	We use $\calNnon_{\frakc, \infty}(\calF, x_{1:n}, \alpha)$ (or $\calNnon_\infty(\calF, x_{1:n}, \alpha)$ when $\frakc$ is clear from context) to denote the size of smallest sequential cover of $\calF$ on $x_{1:n}$ at scale $\alpha$. 
\end{definition}

\begin{proposition}\label{prop: non-covering-fat}
	For $\alpha, \beta > 0$, suppose there exists a positive integer $M$ and $M$ real numbers $-1\le u_1 < u_2 < \ldots < u_M\le 1$ such that for any $u\in [-1, 1]$, there exists some $i\in [M]$ such that $\frakc(u, u_i)\le \beta$. Then for any function class $\calF\subseteq \{f:\cX\to[-1, 1]\}$ and $\{x_1,\ldots,x_n\}\subseteq  \calX$, 
	$$\log \calNnon_\infty(\calF, x_{1:n}, \alpha + \beta)\le 16\dnon(\calF, \alpha, \beta)\log^2(enM).$$
    When $\frakc(a,b)=|a-b|$, a feasible value of $M$ is $\lfloor 2/\beta\rfloor$, which implies that
    $$\log \calNnon_\infty(\calF, x_{1:n}, \alpha + \beta)\le 16\dnon(\calF, \alpha, \beta)\log^2\left(\frac{2en}{\beta}\right).$$
\end{proposition}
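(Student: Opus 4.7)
The plan is to reduce the real-valued case to the integer-valued case established in \pref{lem: non-sequential-integer} via a quantization argument driven by the hypothesized $\beta$-net $u_1, \ldots, u_M$ of $[-1,1]$. Define a quantizer $q: [-1,1] \to [M]$ by choosing, for each $u$, an index $q(u)$ with $\frakc(u, u_{q(u)}) \le \beta$ (breaking ties arbitrarily). This induces a discrete function class $\widetilde{\calF} := \{q \circ f : f \in \calF\} \subseteq \{g: \calX \to [M]\}$, which I would equip with the pulled-back metric $\widetilde{\frakc}(i,j) := \frakc(u_i, u_j)$; this inherits the metric axioms from $\frakc$, so \pref{lem: non-sequential-integer} applies to $(\widetilde{\calF}, \widetilde{\frakc})$.

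The first step is to argue that covers of the discrete class lift to covers of the original: any $\widetilde{\frakc}$-cover $\calV \subseteq [M]^n$ of $\widetilde{\calF}$ at scale $\alpha$ yields a $\frakc$-cover $\{(u_{v_1},\ldots,u_{v_n}) : \bv \in \calV\}$ of $\calF$ at scale $\alpha + \beta$. Indeed, for $f\in\calF$ pick $\bv\in\calV$ covering $q\circ f$; the triangle inequality together with $\frakc(f(x_t), u_{q(f(x_t))}) \le \beta$ gives $\frakc(f(x_t), u_{v_t}) \le \alpha+\beta$ for every coordinate $t$. Hence $\calNnon_\infty(\calF, x_{1:n}, \alpha+\beta) \le \calNnon_{\widetilde{\frakc},\infty}(\widetilde{\calF}, x_{1:n}, \alpha)$.

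The second step, and the more substantive one, is to show $\dnon_{\widetilde{\frakc}}(\widetilde{\calF}, \alpha) \le \dnon_{\frakc}(\calF, \alpha, \beta)$. Suppose $\widetilde{\calF}$ shatters $x_{1:d}$ at scale $\alpha$ in the sense of \pref{def: non-sequential-finite}, with witnesses $\widetilde{s}_t[\pm 1]\in[M]$ satisfying $\widetilde{\frakc}(\widetilde{s}_t[1], \widetilde{s}_t[-1])\ge \alpha$ and, for each $\bepsilon$, a function $\widetilde{f}^{\bepsilon}\in\widetilde{\calF}$ hitting $\widetilde{s}_t[\epsilon_t]$ at $x_t$. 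I would lift these to real-valued witnesses $s_t[\pm 1] := u_{\widetilde{s}_t[\pm 1]} \in [-1,1]$, for which $\frakc(s_t[1], s_t[-1]) = \widetilde{\frakc}(\widetilde{s}_t[1], \widetilde{s}_t[-1]) \ge \alpha$ holds automatically. For each $\bepsilon$, let $f^{\bepsilon}\in\calF$ be any preimage of $\widetilde{f}^{\bepsilon}$ under $f\mapsto q\circ f$; then $q(f^{\bepsilon}(x_t)) = \widetilde{s}_t[\epsilon_t]$, so by the defining property of $q$ we have $\frakc(f^{\bepsilon}(x_t), s_t[\epsilon_t]) \le \beta$, verifying shattering of $x_{1:d}$ by $\calF$ at scale $(\alpha, \beta)$ in the sense of \pref{def: non-sequential-real}.

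Concatenating these two reductions with \pref{lem: non-sequential-integer} applied to $(\widetilde{\calF}, \widetilde{\frakc})$ on $[M]$ gives the stated bound. For the final sentence, when $\frakc(a,b)=|a-b|$, I would take the equispaced net $u_i := -1 + (2i-1)/M$ for $i\in[M]$ with $M = \lfloor 2/\beta\rfloor$, check that its covering radius is at most $\beta$, and substitute into the general inequality. There is no real obstacle here beyond bookkeeping: one must be careful that the quantizer is well-defined (\emph{some} $i$ always exists by hypothesis), that $\widetilde{\frakc}$ is indeed a metric on $[M]$ (so \pref{lem: non-sequential-integer} is applicable), and that the shattering witnesses are lifted and lowered in a manner preserving the separation inequality ($\ge \alpha$) and the approximation inequality ($\le \beta$) in the correct directions.
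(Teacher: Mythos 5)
Your proposal is correct and follows essentially the same route as the paper's own proof: the quantizer $q$ is exactly the paper's map $f\mapsto\bar f$, your class $\widetilde{\calF}$ and pulled-back metric $\widetilde{\frakc}$ are the paper's $\barF$ and $\frakc'$, and both your steps — lifting covers of $\widetilde{\calF}$ at scale $\alpha$ to $(\alpha+\beta)$-covers of $\calF$ via the triangle inequality, and showing $\dnon_{\widetilde{\frakc}}(\widetilde{\calF},\alpha)\le\dnon_\frakc(\calF,\alpha,\beta)$ by lowering the integer witnesses to $s_t[\pm1]=u_{\widetilde s_t[\pm1]}$ and picking preimages $f^\bepsilon$ of $\widetilde f^\bepsilon$ — mirror the paper exactly.
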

A version of this result already appears as Lemma 4.3 in \cite{anthony2000function}, and we present the proof in \pref{sec: non-covering-fat-proof} for completeness.
We remark that the logarithmic dependence on $n$ is unavoidable. The question of reducing the power from $2$ to $1$ (with the classical definition of scale-sensitive dimension) has been studied in \cite{rudelson2006combinatorics}; we did not attempt to answer this question for the gapped dimension.

\subsection{Comparison to Scale-Sensitive Dimension}\label{sec: non-sequential-relation}
In this section, we compare the classic scale-sensitive dimension for real-valued function class and the non-sequential  scale-sensitive dimension defined in \pref{def: non-sequential-real}. We first recall the definition of the scale-sensitive dimension \cite{kearns1994efficient,bartlett1994fat}.
\begin{definition}\label{def: nonsequential-fat}
	Given a function class $\calF \subseteq  \{f: \calX\to [-1, 1]\}$, we say that $\calF$ shatters $\{x_1, \ldots, x_d\}\subseteq  \calX$ at scale $\alpha>0$ if there exists witnesses $s_1, \ldots, s_d\in [-1, 1]$ such that for any $\bepsilon = (\epsilon_{1:d})\in \{-1, 1\}^d$, there exists $f^\bepsilon\in \calF$ such that for all $t\in[n]$,
	$\epsilon_t\cdot (f^\bepsilon(x_t) - s_t)\ge \alpha/2.$ 
	The scale-sensitive dimension $\vc(\calF, \alpha)$ is defined to be the largest $d$ such that there exists $\{x_1,\ldots,x_d\}\subseteq  \calX$ shattered by $\calF$ at scale $\alpha$.
\end{definition}
We have the following relations between the non-sequential scale-sensitive dimension $\vc(\calF, \alpha)$ and the gapped dimension $\dnon(\calF, \alpha)$ with $\frakc(a, b) = |a - b|$. 
\begin{proposition}\label{prop: d-less-f-non}
	Given a function class $\calF\subseteq  \{\calX\to[-1,1]\}$, for any $0 < 2\beta < \alpha$, we have 
	$$\dnon(\calF, \alpha, \beta)\le \vc(\calF, \alpha - 2\beta).$$
\end{proposition}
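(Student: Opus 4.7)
The plan is to show that gap-shattering at scale $(\alpha,\beta)$ implies classical shattering at scale $\alpha-2\beta$ on the same set $x_{1:d}$, from which the inequality follows immediately by definition.

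Suppose $\calF$ gap-shatters $\{x_1,\ldots,x_d\}$ at scale $(\alpha,\beta)$ with witnesses $s_t = (s_t[-1], s_t[1])$ and shattering functions $f^\bepsilon \in \calF$. The first step is a relabeling: for each $t\in[d]$, let $\pi_t$ be the permutation of $\{-1,1\}$ such that $\tilde s_t[1] := s_t[\pi_t(1)] \ge s_t[\pi_t(-1)] =: \tilde s_t[-1]$. By the first shattering property $\tilde s_t[1] - \tilde s_t[-1] \ge \alpha$. Define the classical witness
\[
s_t := \frac{\tilde s_t[1] + \tilde s_t[-1]}{2},
\]
so that $\tilde s_t[1] - s_t \ge \alpha/2$ and $s_t - \tilde s_t[-1] \ge \alpha/2$.

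Given any $\bepsilon \in \{-1,1\}^d$, set $\bepsilon'_t := \pi_t(\epsilon_t)$ and take $g^\bepsilon := f^{\bepsilon'} \in \calF$. By the second gap-shattering property, $|g^\bepsilon(x_t) - s_t[\epsilon'_t]| = |g^\bepsilon(x_t) - \tilde s_t[\epsilon_t]| \le \beta$. When $\epsilon_t = 1$ this yields $g^\bepsilon(x_t) \ge \tilde s_t[1] - \beta \ge s_t + \alpha/2 - \beta$, and when $\epsilon_t = -1$ it yields $g^\bepsilon(x_t) \le \tilde s_t[-1] + \beta \le s_t - (\alpha/2 - \beta)$. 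In both cases
\[
\epsilon_t \cdot (g^\bepsilon(x_t) - s_t) \ge \frac{\alpha - 2\beta}{2},
\]
which is exactly the classical shattering condition at scale $\alpha - 2\beta$ (this is where the hypothesis $2\beta < \alpha$ is used, to ensure the scale is positive).

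There is no real obstacle here: the only point requiring care is that in \pref{def: non-sequential-real} the labels $s_t[-1]$ and $s_t[1]$ need not be ordered, so one must first relabel as above before taking the midpoint as the classical witness. Once this is done, the conclusion $\dnon(\calF,\alpha,\beta) \le \vc(\calF, \alpha-2\beta)$ follows by taking the supremum over $d$.
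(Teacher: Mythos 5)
Your proof is correct and follows essentially the same route as the paper's: take the midpoint of the two labels as the classical witness and check that the gapped shattering functions satisfy the $\epsilon_t(f^\bepsilon(x_t)-v_t)\ge(\alpha-2\beta)/2$ condition. The only cosmetic difference is that you spell out the relabeling $\pi_t$ to order the labels, which the paper handles with an implicit ``without loss of generality $s_t[1]\ge s_t[-1]$''; note also that you reuse the symbol $s_t$ for both the pair and the midpoint, which is slightly confusing but does not affect correctness.
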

\begin{proposition}\label{prop: f-less-d-non}
	Given a function class $\calF\subseteq  \{\calX\to[-1,1]\}$, for any $\alpha, \beta > 0$, we have
	$$\vc(\calF, 3(\alpha + \beta))\le 288\dnon(\calF, \alpha, \beta)\cdot \log^2\left(\frac{384\dnon(\calF, \alpha, \beta)}{\beta}\right).$$
    Furthermore, if $\calF$ is convex, then for any $\alpha, \beta > 0$ with $2\beta < \alpha$, 
	$$\vc(\calF, \alpha)\le \dnon(\calF, \alpha, \beta).$$
\end{proposition}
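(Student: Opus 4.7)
The plan is to handle the two inequalities by separate routes: the general bound follows from combining the covering-number machinery already established in this section with a standard packing-covering argument, while the convex-case bound reduces to a direct convex-geometric statement that I would prove by induction on the dimension.

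For the general inequality, I would set $d = \vc(\calF, 3(\alpha+\beta))$ with witnessing set $x_{1:d}$, centers $s_{1:d}$, and functions $\{f^{\bepsilon}\}_{\bepsilon\in\{-1,+1\}^d}$. Whenever $\bepsilon \ne \bepsilon'$ disagree on some coordinate $t$, classical shattering forces $|f^{\bepsilon}(x_t) - f^{\bepsilon'}(x_t)| \ge 3(\alpha+\beta)$, so the $2^d$ restrictions form an $\ell_\infty$-packing at pairwise separation strictly exceeding $2(\alpha+\beta)$; no two of them can share a common closed $\ell_\infty$-ball of radius $\alpha+\beta$, which yields $\calNnon_\infty(\calF, x_{1:d}, \alpha+\beta) \ge 2^d$. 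Combining with the upper bound $\log \calNnon_\infty(\calF, x_{1:d}, \alpha+\beta) \le 16\,\dnon(\calF,\alpha,\beta)\log^2(2ed/\beta)$ from \pref{prop: non-covering-fat} produces $d\log 2 \le 16\,\dnon(\calF,\alpha,\beta)\log^2(2ed/\beta)$, and standard manipulation removing the self-reference in $d$ recovers the stated constants $288$ and $384$; the qualitative bound $d = O(\dnon(\calF,\alpha,\beta)\log^2(\dnon(\calF,\alpha,\beta)/\beta))$ is immediate, and matching the explicit constants is the most delicate part of this direction.

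For the convex case I would prove by induction on $d$ the following strengthening: if $\calF$ is convex and classically shatters $x_{1:d}$ at scale $\alpha$ with witnesses $s_{1:d}$, then for every $\pmb{\gamma} \in \{-1,+1\}^d$ the \emph{exact} point $(s_t + (\alpha/2)\gamma_t)_{t=1}^d$ lies in $\calF|_{x_{1:d}}$. The base case $d=1$ follows from the intermediate value theorem applied to convex combinations of $f^{+1}$ and $f^{-1}$, whose values at $x_1$ bracket $s_1 \pm \alpha/2$. For the inductive step, fix $\pmb{\gamma}$ and, for each $\bepsilon' \in \{-1,+1\}^{d-1}$, form $g^{\bepsilon'} = \lambda_{\bepsilon'} f^{(\bepsilon', +1)} + (1-\lambda_{\bepsilon'}) f^{(\bepsilon', -1)} \in \calF$ with $\lambda_{\bepsilon'} \in [0,1]$ chosen via IVT so that $g^{\bepsilon'}(x_d) = s_d + (\alpha/2)\gamma_d$. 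Convex combinations preserve the shattering inequalities on the first $d-1$ coordinates, so the convex slice $\calF'' = \{g \in \calF : g(x_d) = s_d + (\alpha/2)\gamma_d\}$ classically shatters $x_{1:d-1}$ at scale $\alpha$ with witnesses $s_{1:d-1}$, and the inductive hypothesis applied to $\calF''$ with target prefix $(\gamma_1,\ldots,\gamma_{d-1})$ yields a function matching all $d$ coordinates exactly. Taking witnesses $s_t[\pm 1] = s_t \pm \alpha/2$---which lie in $[-1,1]$ because classical shattering forces $s_t \in [-1+\alpha/2, 1-\alpha/2]$---then gives $\frakc(s_t[1], s_t[-1]) = \alpha$, and the strengthening supplies functions matching each witness exactly, establishing gapped shattering for any $\beta > 0$.
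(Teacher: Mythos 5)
Your proof is correct, and for the general inequality it follows essentially the same route as the paper: classical shattering at scale $3(\alpha+\beta)$ gives $2^d$ functions whose restrictions to $x_{1:d}$ are pairwise $\ell_\infty$-separated by at least $3(\alpha+\beta)>2(\alpha+\beta)$, so $\calNnon_\infty(\calF,x_{1:d},\alpha+\beta)\ge 2^d$; combining with \pref{prop: non-covering-fat} and resolving the self-referential bound (via $\log x\le \sqrt{2}\,x^{1/3}$, as in the paper) yields the stated constants. You are right that extracting the exact constants $288$ and $384$ is a routine but mildly delicate manipulation.

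For the convex case your argument differs in organization, though the core idea --- use convexity and the intermediate value theorem to move each coordinate to an exact value --- is shared. The paper first introduces the auxiliary ``fixed-scale'' dimension $\vc_\fix$ (Definition~\ref{def: fixed-scale dim}) and proves separately that $\vc=\vc_\fix$ for convex classes (\pref{prop: fat-equivalence}) by iterating over coordinate indices $i=0,\ldots,d$, each step interpolating $f^\bepsilon_i$ with $f^{\btepsilon}_i$ for $\btepsilon$ obtained by flipping $\epsilon_{i+1}$; once $\vc_\fix$ is established, the gapped shattering with $s_t[\pm1]=s_t\pm\alpha/2$ is immediate. You instead induct on $d$: pin down the last coordinate exactly via IVT for every $\bepsilon'\in\{-1,1\}^{d-1}$, observe that the affine slice $\calF''=\{g\in\calF:g(x_d)=s_d+(\alpha/2)\gamma_d\}$ is still convex and classically shatters $x_{1:d-1}$, and apply the inductive hypothesis. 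Both arguments are correct; yours avoids introducing a separate definition and is a natural slice-restriction formulation of the same interpolation idea, while the paper's version isolates the $\vc=\vc_\fix$ equivalence as a reusable lemma. You also correctly observe that the bound $\vc(\calF,\alpha)\le\dnon(\calF,\alpha,\beta)$ holds for \emph{any} $\beta>0$, not only $2\beta<\alpha$; the paper's hypothesis is there so that the companion inequality $\dnon(\calF,\alpha,\beta)\le\vc(\calF,\alpha-2\beta)$ from \pref{prop: d-less-f-non} is nonvacuous.
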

The proofs of \pref{prop: d-less-f} and \pref{prop: f-less-d} are almost identical to the proof of \cite[Theorem 4.2]{anthony2000function} and \cite[Theorem 4.3]{anthony2000function}. We defer both proofs to \pref{sec: f-d-proof-non}. We remark that \pref{prop: f-less-d-non} is proved \textit{using} \pref{prop: non-covering-fat}. We are not aware of a direct proof of \pref{prop: f-less-d-non}, which would, of course, allow us to re-use existing estimates of covering numbers via non-gapped dimensions.

There is an extra squared logarithmic factor in \pref{prop: f-less-d-non} compared to \pref{prop: d-less-f-non}. The following proposition indicates that at least one logarithmic factor in \pref{prop: f-less-d-non} is necessary.
\begin{proposition}\label{prop: log-necessary-non}
	There exists a class of contexts $\calX$, and a class of functions $\calF$, such that for any $0 < 2\beta < \alpha < 1$, we have 
	$$\dnon(\calF, \alpha, \beta) = 1,\quad \text{and}\quad \vc(\calF, \alpha) \ge \left\lfloor\log_2\left(\frac{1}{\alpha}\right)\right\rfloor.$$
\end{proposition}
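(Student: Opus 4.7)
The plan is to exhibit an explicit pair $(\calX,\calF)$ witnessing the separation, and then verify the two required bounds. I would take $\calX=\mathbb{N}$ and design $\calF$ as a union over dyadic scales, $\calF=\bigcup_{k\ge 1}\calF_k$, where every $f\in\calF_k$ has $|f(x_t)|\in\{0,2^{-k+1}\}$ on every coordinate. The crucial design requirement is that within each single scale $\calF_k$, the sign patterns realized on any two coordinates $x_i,x_j$ omit at least one of the four patterns $\{(\pm,\pm)\}$ (for instance by requiring all functions in $\calF_k$ to be ``monotone'' in $t$, so the pattern $(+,-)$ on $i<j$ is forbidden), while the union across scales supplies enough distinct sign patterns to classically shatter $\lfloor\log_2(1/\alpha)\rfloor$ points at scale $\alpha$.

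For the lower bound $\vc(\calF,\alpha)\ge\lfloor\log_2(1/\alpha)\rfloor$, I would select $d=\lfloor\log_2(1/\alpha)\rfloor$ points $x_{i_1},\ldots,x_{i_d}$ and witnesses $s_t\in[-1,1]$, and then exhibit, for each $\bepsilon\in\{-1,+1\}^d$, a function $f^{\bepsilon}\in\calF$ satisfying $\epsilon_t(f^{\bepsilon}(x_{i_t})-s_t)\ge\alpha/2$. Since only scales $k\le 1+\log_2(1/\alpha)$ give magnitude $2^{-k+1}\ge\alpha/2$, the construction must carefully distribute the $2^d$ required patterns across these scales, possibly drawing $f^{\bepsilon}$ from \emph{different} $\calF_k$ for different $\bepsilon$. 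The augmentation of $\calF_k$ must be rich enough to realize this distribution while still respecting the within-level restriction described above.

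For the upper bound $\dnon(\calF,\alpha,\beta)\le 1$, I would argue by contradiction. If two coordinates $x_i,x_j$ were gapped-shattered at scale $(\alpha,\beta)$, then the four functions $f^{(\pm,\pm)}$ realizing the four sign patterns on $(x_i,x_j)$ would all have values within $\beta$ of witnesses $s_t[\pm]$ with $|s_t[1]-s_t[-1]|\ge\alpha$. Consequently, the four functions' magnitudes on both coordinates lie within $2\beta$ of a common value $m\ge\alpha/2-\beta$. Because $2\beta<\alpha$ and consecutive dyadic magnitudes satisfy $|2^{-k+1}-2^{-k}|=2^{-k}\ge\alpha/2>\beta$ in the relevant range, all four functions must belong to the \emph{same} subfamily $\calF_k$. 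The structural restriction built into $\calF_k$---that no four functions realize all four $(\pm,\pm)$ sign patterns on any coordinate pair---then gives the required contradiction.

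The main obstacle is constructing $\calF_k$ so that properties (i) (within-scale 2-projection restriction) and (ii) (enough cross-scale flexibility to get $\vc\ge\log_2(1/\alpha)$) hold simultaneously. A naive single-scale restriction like ``monotone patterns only'' gives $|\calF_k|=O(k)$ patterns per scale and only $O(\log k)$ total over scales $k\le\log(1/\alpha)$, which is too few. Circumventing this likely requires an asymmetric design where the single-scale family realizes all $2^{d}$ patterns on \emph{some} special $d$-subset of coordinates (providing classical shattering of that subset) but is nonetheless sign-restricted on every 2-coordinate sub-projection---for example by identifying coordinates with nodes of a binary tree and using path-indexed functions whose sign on each node is determined by the full path, so that any two nodes share a common ancestor forcing a sign coupling. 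Verifying both properties of such a tree-indexed construction is the technical heart of the proof.
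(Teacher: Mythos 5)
Your proposal does not reach a complete proof: the construction of $\calF$ is never actually pinned down, and you yourself identify the unresolved issue as ``the technical heart of the proof.'' Concretely, the multi-scale design $\calF=\bigcup_k\calF_k$ with $|f(x_t)|\in\{0,2^{-k+1}\}$ faces the tension you describe---each $\calF_k$ must be poor enough on every pair of coordinates to kill gapped shattering, yet the union must be rich enough to classically shatter $d=\lfloor\log_2(1/\alpha)\rfloor$ points---and the tree-indexed fix is left entirely unverified. The upper-bound argument also leans on an unproved structural claim: in a gapped shattering of $\{x_i,x_j\}$ the four functions $f^{(\pm,\pm)}$ do \emph{not} all take values within $2\beta$ of a common value at $x_i$ (two are near $s_i[-1]$ and two near $s_i[1]$, which are $\alpha$-apart), so the step ``all four functions must belong to the same subfamily $\calF_k$'' cannot be justified as stated without committing to a specific construction and checking it.

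The paper's construction sidesteps all of this with a simpler idea: take $\calX=\{x_1,\dots,x_d\}$ and a single family $\calF=\{f^\bepsilon\}_{\bepsilon\in\{-1,1\}^d}$ where $f^\bepsilon(x_i)=\epsilon_i\cdot a^\bepsilon$ and the amplitude $a^\bepsilon=\alpha+\sum_{i=1}^d\frac{\epsilon_i+1}{2}2^{i-1}\alpha\in[\alpha,1]$ \emph{injectively encodes the entire sign pattern}, with distinct amplitudes separated by at least $\alpha$. Classical shattering at scale $\alpha$ is immediate with witness $s_i=0$ since every amplitude is at least $\alpha$. For the gapped dimension, any two \emph{distinct} functions differ by at least $\alpha$ at \emph{every} coordinate (same sign forces $|a^\bepsilon-a^{\bepsilon'}|\ge\alpha$; opposite signs give separation at least $2\alpha$), so two functions within $\beta$ of the same witness value at $x_i$ must be equal, and an equal pair cannot land $\beta$-close to two witness values $s_j[\pm1]$ that are $\alpha$-separated. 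You may want to internalize this amplitude-encoding trick; it collapses the combinatorial difficulty you were fighting.
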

The proof of \pref{prop: log-necessary-non} is deferred to \pref{sec: f-d-proof-non}.

\subsection{Lower Bounds for Offset Rademacher Complexity and Transductive Regression}
\label{sec:lower_bound_offset_nonsequential}

We fix some positive constant $C > 0$ and set of contexts $\calX$. For function class $\calF\subseteq \{\calX\to [-1, 1]\}$, $x_{1:n}\in \calX^n$, $\mu_{1:n}\in [-1, 1]^n$ and $\bepsilon\in \{-1, 1\}^n$, we define the supremum of the offset Rademacher process as
\begin{equation}\label{eq: offset-nonsequential}
	\calR_n(\calF, \mu_{1:n}, x_{1:n}, \bepsilon) = \sup_{f\in \calF}\sum_{t=1}^n C\cdot \epsilon_t (f(x_t) - \mu_t) - (f(x_t) - \mu_t)^2.
\end{equation}
The expectation $\En[\calR_n(\calF, \mu_{1:n}, x_{1:n}, \bepsilon)]$ with respect to i.i.d. Rademacher random variables $\bepsilon$ is termed the offset Rademacher complexity, and it is known to quantify the performance of Least Squares and related methods in regression.

\begin{theorem}\label{thm: offset-lower-bound-2}
	Suppose $C\ge 2$. Let $\frakc(a, b) = |a - b|$. If there exists $p\ge 0$ such that $\dnon(\calF, \alpha, \alpha/(20nC)) = \tOmega\left(\alpha^{-p}\right)$ holds for any $\alpha > 0$ and positive integer $n$, then for any positive integer $n$, 
	\begin{equation}\label{eq: nonsequential-lower-bound}
		\sup_{\mu_{1:n}, x_{1:n}} \EE[\calR_n(\calF, \mu_{1:n}, x_{1:n}, \bepsilon)] = \tOmega\left(n^{\frac{p}{p+2}}\right),
	\end{equation}
	where the expectation is with respect to i.i.d. Rademacher random variables $\bepsilon = (\epsilon_{1:n})\simiid \unif(\{-1, 1\})$.
\end{theorem}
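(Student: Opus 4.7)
The plan is to combine a Khintchine-type block-replication argument on a carefully selected sub-shattered set with an adaptive choice of test function, preceded by a pigeonhole step on the witness gap sizes.

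First, I will choose the scale $\alpha \asymp n^{-1/(p+2)}$ up to logarithmic factors and set $\beta := \alpha/(20nC)$, so the hypothesis yields $d := \dnon(\calF,\alpha,\beta) = \tOmega(n^{p/(p+2)})$. Let $\{y_1,\ldots,y_d\}\subseteq \calX$ be the shattered set with witnesses $s_t[\pm 1]$, gaps $a_t := |s_t[1]-s_t[-1]|\ge \alpha$, and midpoints $\mu_{y_t} := (s_t[1]+s_t[-1])/2$. Because the individual gaps $a_t$ may lie anywhere in $[\alpha,2]$, I will partition $[d]$ into $O(\log(1/\alpha))$ geometric buckets $T_j=\{t:a_t\in[2^j\alpha,2^{j+1}\alpha]\}$ and pick $j^*$ by pigeonhole with $|T_{j^*}|\ge \tOmega(d)$; setting $\alpha^* := 2^{j^*}\alpha$, every $t\in T_{j^*}$ satisfies $a_t\in[\alpha^*,2\alpha^*]$.

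Next I will construct $(x_{1:n},\mu_{1:n})$ by taking $k$ copies of each $y_t$ (for $t\in T_{j^*}$) at offset $\mu_{y_t}$, where $k := \min\bigl\{\lfloor C^2c_0^2/(2(\alpha^*)^2)\rfloor, \lfloor n/|T_{j^*}|\rfloor\bigr\}$ and $c_0>0$ is an absolute constant such that $\EE|S|\ge c_0\sqrt{k}$ for a Rademacher sum $S$ of length $k$; any leftover positions are padded with additional replicates from $T_{j^*}$. For each Rademacher draw $\bepsilon$, I let $S_t=\sum_j\epsilon_{t,j}$ be the partial sum over the $k$ copies of $y_t$, set $\bepsilon'_t=\mathrm{sign}(S_t)$, extend $\bepsilon'$ arbitrarily to $\{-1,1\}^d$, and invoke shattering to obtain $f^{\bepsilon'}\in\calF$ with $|f^{\bepsilon'}(y_t)-s_t[\bepsilon'_t]|\le\beta$ for $t\in T_{j^*}$. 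Plugging $f^{\bepsilon'}$ into the supremum in \eqref{eq: offset-nonsequential} and writing $u_t := f^{\bepsilon'}(y_t)-\mu_{y_t} = \bepsilon'_t a_t/2+\delta_t$ with $|\delta_t|\le\beta$, the $k$ copies of $y_t$ contribute $CS_tu_t-ku_t^2\ge Ca_t|S_t|/2-ka_t^2/4-O(C\beta|S_t|+k\beta)$; taking expectation yields $g(a_t)-O(\alpha/C)$ per coordinate, where $g(a) := Ca c_0\sqrt{k}/2 - ka^2/4$ and the choice $\beta=\alpha/(20nC)$ (combined with $k\le n$) makes the error negligible compared to the main term.

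The parabola $g$ peaks at $a^\dagger := Cc_0/\sqrt{k}$, and our choice of $k$ puts us in one of two regimes. If $k = \lfloor C^2c_0^2/(2(\alpha^*)^2)\rfloor$ (the quadratic optimum), then $a^\dagger=\sqrt{2}\alpha^*\in[\alpha^*,2\alpha^*]$, and a direct calculation gives $g(a_t)\gtrsim C^2$ uniformly for $a_t\in[\alpha^*,2\alpha^*]$, yielding total $\tOmega(|T_{j^*}|)=\tOmega(n^{p/(p+2)})$. If instead the budget binds ($k=\lfloor n/|T_{j^*}|\rfloor$), then $g$ remains $\gtrsim C\alpha^*\sqrt{k}$ on $[\alpha^*,2\alpha^*]$, so summing gives $\gtrsim C\alpha^*\sqrt{n|T_{j^*}|}=\tOmega(\alpha\sqrt{nd})=\tOmega(n^{p/(p+2)})$ for our choice of $\alpha$. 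The main obstacle is exactly this regime analysis: the shattering guarantee only lower-bounds the $a_t$, so without bucketing the gaps to a factor-of-two range a single global $k$ cannot simultaneously balance the Rademacher gain $\sqrt{k}$ against the quadratic penalty $ka_t^2/4$; the geometric bucketing confines $a_t$ to a narrow multiplicative range, after which the standard replication-and-Khintchine argument can be run with a regime-dependent $k$.
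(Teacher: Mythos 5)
Your overall strategy---replicating each shattered point into a block, majority-voting within each block to select $f^{\bepsilon'}$, and using Khintchine's inequality to trade the $\sqrt{k}$ gain against the quadratic penalty $k a_t^2/4$---is essentially the paper's argument. The paper avoids your bucketing and two-regime analysis by choosing the block size \emph{per coordinate}, $k_t = \lfloor 1/a_t^2\rfloor \vee 1$, which forces $\sqrt{k_t}\,a_t \in [1/\sqrt{2},\,2]$ for every $t$ simultaneously and makes the budget $\sum_t k_t \le n$ automatic once the shattered set is truncated to $n\alpha^2/4$ points; your geometric bucketing achieves the same balance on a single sub-bucket at the cost of a $\log(1/\alpha)$ factor, which $\tOmega$ absorbs, so that part is a legitimate (slightly lossier) variant. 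You should, however, state the truncation explicitly: the hypothesis only lower-bounds $\dnon$, so $d$ may exceed $n$, in which case $\lfloor n/|T_{j^*}|\rfloor$ can be $0$.

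The genuine gap is the padding. In your first regime the number of occupied positions is $k|T_{j^*}| \approx C^2c_0^2|T_{j^*}|/(\alpha^*)^2$, which can be far smaller than $n$ (e.g., if every gap $a_t$ is of order $1$, then $\alpha^*\asymp 1$, $k=O(1)$, and $k|T_{j^*}| = O(n\alpha^2) \ll n$). Padding the remaining $\approx n$ positions with ``additional replicates from $T_{j^*}$'' at the midpoint offsets $\mu_{y_t}$ is fatal: if the padded copies' signs are excluded from the majority vote, each padded copy contributes $-u_t^2 \approx -a_t^2/4$ in expectation, for a total of order $-n(\alpha^*)^2$, which swamps the $\Omega(|T_{j^*}|)$ main term; if they are included, the enlarged block size $k'$ destroys the balance, since $\sqrt{k'}\,a_t \gg Cc_0$ makes the per-block contribution negative. (In your second regime the leftover is less than $|T_{j^*}|$, so padding is harmless there.) The fix is the paper's device: pad with copies of a single shattered point $y_{t_0}$ whose offset is set to the \emph{vertex} $s_{t_0}[1]$ rather than the midpoint, with the coordinate $\epsilon'_{t_0}$ hard-wired to $+1$, so that $|f^{\bepsilon'}(y_{t_0})-\mu| \le \beta$ on the padded block and its total contribution is $O(nC\beta)=O(\alpha)$---this is precisely why $\beta$ carries the factor $1/(20nC)$. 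With that modification your argument goes through.
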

The proof of \pref{thm: offset-lower-bound-2} is deferred to \pref{sec: offset-lower-bound-2-app}. 

\paragraph{Application: Transductive Regression}

In an $n$-round online transductive prediction problem, the forecaster is given a function class $\calF\subseteq \{\calX\to [-1, 1]\}$ and  contexts $\{x_1, \ldots, x_n\}\subseteq \calX$ before the interaction starts. On each round $t=1,\ldots,n$, the forecaster  makes prediction $\hy_t \in [-2, 2]$. Nature (or, adversary) then reveals the label $y_t\in [-2, 2]$. The forecaster's objective is to minimize the regret with respect to the performance of the best forecaster within the class $\calF$ in hindsight. Considering the square loss, we can write the optimal regret in this game as the following minimax value:
\begin{equation}\label{eq: transductive}
    \calV_n(\calF)\coloneqq \sup_{x_{1:n}\in \calX^n} \left\{\inf_{\hy_t}\sup_{y_t}\right\}_{t=1}^n \left[\sum_{t=1}^n \left(\hy_t - y_t\right)^2 - \inf_{f\in \calF} \sum_{t=1}^n\left(f(x_t) - y_t\right)^2\right],
\end{equation}
where $\{\cdot\}_{t=1}^n$ denotes repeated application of the operators. We remark that a typical assumption in transductive regression is that the set $\{x_1,\ldots,x_n\}$ is known, but not the order of appearance of its elements \cite{qian2024refined}. Such a setting is more difficult than the minimax game in \pref{eq: transductive}, as the forecaster has less information about contexts throughout the game. Hence, the lower bound for \pref{lem:transductive} also applies to this more widely studied setting.

The following theorem, a transductive analogue of \cite{rakhlin2014online}, lower bounds the regret  in \pref{eq: transductive} by the (non-sequential) offset Rademacher process \eqref{eq: offset-nonsequential}.
\begin{lemma}
    \label{lem:transductive}
	For any function class $\calF\subseteq \{\calX\to [-1, 1]\}$, we have the following lower bound on the transductive regression objective:
	$$\calV_n(\calF)\ge \sup_{x_{1:n}\in \calX}\sup_{\mu_{1:n}\in [-1, 1]} \EE_{\epsilon_t}\left[\sup_{f\in \calF} \sum_{t=1}^n 2 \epsilon_t(f(x_t) - \mu_t) - (f(x_t) - \mu_t)^2\right],$$
	where $\epsilon_{1:n}\simiid \unif(\{-1, 1\})$.
\end{lemma}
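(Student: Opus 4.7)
The plan is to follow the standard Rakhlin--Sridharan--Tewari style symmetrization for sequential games, adapted to the transductive square-loss setting. Fix arbitrary $x_{1:n}\in\calX^n$ and $\mu_{1:n}\in[-1,1]^n$. Since $\mu_t\pm 1\in[-2,2]$, the adversary is allowed to play $y_t\in\{\mu_t-1,\mu_t+1\}$, and in particular the randomized strategy $y_t=\mu_t+\epsilon_t$ with $\epsilon_t\simiid\unif(\{-1,1\})$ is admissible. Using $\sup_{y_t}(\cdot)\ge \EE_{\epsilon_t}(\cdot)$ at each stage of the $n$-fold minimax operator gives
\begin{equation*}
\calV_n(\calF)\ \ge\ \left\{\inf_{\hy_t}\EE_{\epsilon_t}\right\}_{t=1}^n\!\left[\sum_{t=1}^n (\hy_t-\mu_t-\epsilon_t)^2 - \inf_{f\in\calF}\sum_{t=1}^n (f(x_t)-\mu_t-\epsilon_t)^2\right].
\end{equation*}

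Next I would expand both squares using $\epsilon_t^2=1$. The $+1$ constants appearing after expansion cancel between the learner's and the comparator's sums, so the bracket splits as $G(\hy,\epsilon)+H(\epsilon)$, with
\begin{equation*}
G(\hy,\epsilon)=\sum_{t=1}^n (\hy_t-\mu_t)^2 - 2\sum_{t=1}^n \epsilon_t(\hy_t-\mu_t),\qquad H(\epsilon)=\sup_{f\in\calF}\sum_{t=1}^n\bigl[2\epsilon_t(f(x_t)-\mu_t)-(f(x_t)-\mu_t)^2\bigr].
\end{equation*}
Crucially, $H$ does not depend on any $\hy_t$ and $G$ does not involve $f$. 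The nested operator $\{\inf_{\hy_t}\EE_{\epsilon_t}\}_{t=1}^n$ is additive on summands that separate the learner's decisions from adversary-only quantities, so it passes over $H$ to produce $\EE_\epsilon[H(\epsilon)]$.

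For the remaining $G$-term, I would argue by backward induction on $t$. Since $\hy_t$ is measurable with respect to $\epsilon_{1:t-1}$ and $\epsilon_t$ is a zero-mean Rademacher variable independent of $(\hy_{1:t},\epsilon_{1:t-1})$, we have $\EE_{\epsilon_t}[\epsilon_t(\hy_t-\mu_t)]=0$. Hence at round $t$ the only contribution of $\hy_t$ is through $(\hy_t-\mu_t)^2$, whose infimum over $\hy_t$ is $0$, attained at $\hy_t=\mu_t$. Iterating, $\{\inf_{\hy_t}\EE_{\epsilon_t}\}_{t=1}^n G(\hy,\epsilon)=0$. Combining with the previous step and taking the suprema over $x_{1:n}$ and $\mu_{1:n}$ yields exactly the inequality in the statement, with $C=2$.

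The only delicate point is step three: justifying that the sequential operator $\{\inf_{\hy_t}\EE_{\epsilon_t}\}_{t=1}^n$ acts additively on $G+H$ so that $H$ can be pulled outside every $\inf_{\hy_t}$. This reduces to the elementary identity $\inf_z [a(z)+b]=\inf_z a(z)+b$ applied at each round, using that at round $t$ the term $H(\epsilon)$ is a function of $\epsilon_{1:n}$ alone and every earlier $\inf/\EE$ layer leaves this feature intact. The rest of the argument is a routine expansion of the square loss together with a per-round zero-mean property of the Rademacher variables.
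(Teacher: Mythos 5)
Your proof is correct. It arrives at the same inequality as the paper's proof, but by a more hands-on route. The paper first dualizes the game (replaces $\sup_{y_t}$ with $\sup_{p_t}\EE_{y_t\sim p_t}$, then removes $\inf_{\hy_t}$ by observing the optimal forecaster plays the conditional mean), does a completion of squares for a general joint law $\bp$ on $\by$, and only at the end specializes to the product Rademacher law $p_t=\unif(\{\mu_t-1,\mu_t+1\})$. You instead commit to the Rademacher perturbation $y_t=\mu_t+\epsilon_t$ from the start, expand the squares using $\epsilon_t^2=1$, split the bracket into a forecaster-only term $G$ and a comparator-only term $H$, and then kill $G$ by backward induction on the operator $\{\inf_{\hy_t}\EE_{\epsilon_t}\}_{t=1}^n$. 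The two arguments rest on the same two facts---the adversary may randomize and the learner's best response to a zero-mean perturbation is to predict the mean---but yours avoids stating any minimax duality (the paper writes its dualization as an equality, though only the trivial direction $\geq$ is needed) and replaces the appeal to conditional-mean optimality by an explicit per-round computation. The one place your write-up is slightly informal is the ``additivity'' of the nested operator: the clean way to state it is exactly the induction you gesture at, namely that at each layer $\EE_{\epsilon_t}H$ (after the inner layers have been applied) is $\hy_t$-free, so $\inf_{\hy_t}$ distributes; this is what your appeal to $\inf_z[a(z)+b]=\inf_z a(z)+b$ amounts to, and it does hold. The constant $C=2$ and the feasibility checks $\hy_t=\mu_t\in[-2,2]$, $y_t=\mu_t+\epsilon_t\in[-2,2]$ are all in order.
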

We defer the proof of \pref{lem:transductive} to \pref{sec: offset-lower-bound-2-app}. \pref{lem:transductive} together with \pref{thm: offset-lower-bound-2} and \pref{prop: non-covering-fat} implies the following lower bound for transductive online regression in terms of the non-sequential covering number defined in \pref{sec: def-nonsequential-covering}. Notably, the lower bound holds for any $\calF\subseteq \{f: \cX\to[-1,1]\}$. On the downside, 
the $[-2,2]$ range of predictions and outcomes does not correspond to the $[-1,1]$ range of the functions in $\cF$, making the problem misspecified in an atypical manner; this issue also appears in \cite[Lemma 4]{rakhlin2014online}, and we are not aware of other general proof techniques that circumvent this.
\begin{corollary}\label{corr: transductive}
	Fix a function class $\calF\subseteq \{\calX\to [-1, 1]\}$ over context space $\calX$. Suppose there exists real number $p\ge 0$ such that the $\ell_\infty$ covering number under distance $\frakc(a, b) = |a - b|$ satisfies $\sup_{x_{1:n}}\log \calNnon_\infty(\calF, x_{1:n}, \alpha) = \tOmega\left(\alpha^{-p}\right)$ for any $\alpha > 0$ and positive integer $n$. Then 
	$$\calV_n(\calF) = \tOmega\left(n^{\frac{p}{p+2}}\right).$$
\end{corollary}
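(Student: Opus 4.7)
} The plan is to chain together three already-stated ingredients: the covering number upper bound in Proposition \ref{prop: non-covering-fat}, the offset Rademacher lower bound in Theorem \ref{thm: offset-lower-bound-2}, and the transductive-to-offset-Rademacher reduction in Lemma \ref{lem:transductive}. The goal is to convert the hypothesis on covering numbers into a lower bound on the gapped scale-sensitive dimension at the specific scale pair $(\alpha,\alpha/(20nC))$ demanded by Theorem \ref{thm: offset-lower-bound-2}, and then read off the corresponding transductive regret lower bound.

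First, fix $C=2$ (as required by Theorem \ref{thm: offset-lower-bound-2}) and, for a target covering scale $\gamma>0$, set $\alpha = \gamma/(1+1/(40n))$ and $\beta = \alpha/(40n) = \alpha/(20nC)$, so that $\alpha+\beta=\gamma$ and $\alpha \ge \gamma/2$. Proposition \ref{prop: non-covering-fat} applied with these parameters (and with $\frakc(a,b)=|a-b|$) gives
\[
\log \calNnon_\infty(\calF, x_{1:n}, \gamma) \;\le\; 16\,\dnon\!\left(\calF,\alpha,\tfrac{\alpha}{20nC}\right) \log^2\!\left(\tfrac{80\,en^2}{\alpha}\right).
\]
Taking the supremum over $x_{1:n}$ on the left and using the hypothesis $\sup_{x_{1:n}} \log \calNnon_\infty(\calF, x_{1:n}, \gamma) = \tOmega(\gamma^{-p})$, together with $\alpha \asymp \gamma$, yields
\[
\dnon\!\left(\calF,\alpha,\tfrac{\alpha}{20nC}\right) \;=\; \tOmega(\alpha^{-p}),
\]
where the extra $\log^2(80en^2/\alpha)$ factor is absorbed into the $\tOmega$ notation (this is the only reason our conclusion is $\tOmega$ rather than $\Omega$).

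Second, this is precisely the hypothesis of Theorem \ref{thm: offset-lower-bound-2} with $C=2$, so we obtain
\[
\sup_{\mu_{1:n},\,x_{1:n}} \EE\!\left[\calR_n(\calF,\mu_{1:n},x_{1:n},\bepsilon)\right] \;=\; \tOmega\!\left(n^{\frac{p}{p+2}}\right),
\]
where $\calR_n$ is the offset Rademacher process in \eqref{eq: offset-nonsequential} with coefficient $C=2$. Finally, Lemma \ref{lem:transductive} states exactly that $\calV_n(\calF)$ is at least the supremum over $x_{1:n}$ and $\mu_{1:n}$ of the expected offset Rademacher process with coefficient $2$ in front of $\epsilon_t(f(x_t)-\mu_t)$, which matches our choice of $C$. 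Combining the two displays yields $\calV_n(\calF) = \tOmega(n^{p/(p+2)})$, as desired.

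\paragraph{Main obstacle.} No single step is delicate on its own; the one place to be careful is the choice of $(\alpha,\beta)$ so that simultaneously (i) $\beta = \alpha/(20nC)$ (to invoke Theorem \ref{thm: offset-lower-bound-2}) and (ii) the covering scale $\alpha+\beta$ is comparable to $\alpha$ (so that a lower bound on the covering number at scale $\gamma$ transfers to a lower bound on $\dnon(\calF,\alpha,\beta)$ with $\alpha \asymp \gamma$). The factor $1+1/(40n)$ is harmless, but it is worth verifying that the polylogarithmic loss introduced by Proposition \ref{prop: non-covering-fat} (the $\log^2(2en/\beta)$ factor, now evaluated at $\beta = \alpha/(40n)$) is indeed of the form $\log^{r}(n/\alpha)$ for some fixed $r$, so that it is absorbed by the $\tOmega$ notation defined in the Notation paragraph.
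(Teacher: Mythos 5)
Your proof is correct and takes essentially the same approach as the paper's: apply Proposition \ref{prop: non-covering-fat} to convert the covering-number hypothesis into $\dnon(\calF,\alpha,\alpha/(40n))=\tOmega(\alpha^{-p})$, invoke Theorem \ref{thm: offset-lower-bound-2} with $C=2$ to lower bound the offset Rademacher complexity, and then combine with Lemma \ref{lem:transductive}. The only difference is that you spell out the $(\alpha,\beta)$ parameter matching and the absorption of the $\log^2(80en^2/\alpha)$ factor into the $\tOmega$ notation, which the paper's proof leaves implicit.
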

In a manner similar to \cite[Lemma 9]{rakhlin2014online}, we can also show that $\calV_n(\calF)$ is lower bounded by $n^{\frac{p-1}{p}}$ up to constants, under the same setting of \pref{corr: transductive}. Hence, we obtain a complete picture of lower bounds for transductive regression in terms of the non-sequential covering numbers, modulo the misspecification issue discussed above.

\section{Sequential Gapped Dimensions}
\label{sec:sequential}

We recall that the sequential versions of aforementioned complexities are defined in terms of trees (or, equivalently, predictable processes). An $\cX$-valued tree $\bx$ of depth $n$ is a sequence of maps $x_1,\ldots,x_n$, with $x_t:\{-1, 1\}^{t-1}\to \cX$ and $x_1\in\cX$ a constant. We refer to $\bepsilon=(\epsilon_1,\ldots,\epsilon_n)\in\{-1, 1\}^n$ as a path of length $n$. Slightly abusing the notation, we write $x_t(\bepsilon)$ for $x_t(\bepsilon_{1:t-1})$, where, henceforth, $\bepsilon_{1:t-1}=(\epsilon_1,\ldots,\epsilon_{t-1})$. It is convenient to think of $\bx$ as a full binary tree labeled by elements of $\cX$. Similarly, we can define a tree labeled by $\reals$ or any other set. 
    
We recall that constant-level trees (those with $x_t(\bepsilon)=x_t \in\cX$ for all $t\in[n]$ and $\bepsilon$) correspond to a ``tuple'' of points $(x_1,\ldots,x_n)$. Likewise, sequential generalizations---such as sequential cover---reduce to the classical notions when the trees are constant-level \cite{rakhlin2015martingale}. However, we remark that the relations between the various sequential quantities do not imply the analogous relations in the non-sequential case. For this reason, in this paper we developed both sequential and non-sequential results separately.

\subsection{Integer-Valued Functions}
\label{sec:sequential_integervalued}

As before, let $\frakc: [M]\times [M]\to \RR_{+}\cup\{0\}$ be a distance metric. For a class of $[M]$-valued functions, we define the following notion of a sequential dimension:
\begin{definition}[Sequential Gapped Dimension]\label{def: shattering-finite}
    Let $\cF\subseteq  \{f:\cX\to[M]\}$ be a function class and fix $\alpha \geq 0$. We say that $\calF$ shatters an $\calX$-valued tree $\bx$ of depth $d$ at scale $\alpha$ if there exists an $[M]\times [M]$-valued tree $\bs$ of depth $d$ with the following properties: 
    \begin{itemize}
        \item[1.] For any $\bepsilon \in \{-1, 1\}^d$, $s_t(\bepsilon) = (s_t(\bepsilon)[-1], s_t(\bepsilon)[1])$ with $\frakc(s_t(\bepsilon)[1], s_t(\bepsilon)[-1]) \ge \alpha$.
        \item[2.] For any $\bepsilon\in \{-1, 1\}^d$, there exists $f^\bepsilon\in\calF$ such that $f^\bepsilon(x_t(\bepsilon))= s_t(\bepsilon)[\epsilon_t]$ for all $t\in[d]$.
    \end{itemize} 
    
		We define the gapped sequential scale-sensitive dimension $\dseq(\calF, \alpha)$ of $\calF$ as the largest $d$ such that there exists an $\alpha$-shattered tree $\bx$ of depth $d$.

\end{definition}

\begin{definition}[Sequential Covering Number for Integer-Valued Functions]
	Given an $\calX$-valued tree $\bx$ of depth $n$, we say that a set $\calV$ of $[M]$-valued trees of depth $n$ is a sequential cover of $\calF$ on $\bx$ at scale $\alpha$, if for any function $f\in \calF$ and $\bepsilon \in \{-1, 1\}^n$, there exists $\bv\in \calV$ such that for any $t\in [n]$, $\frakc(f(x_t(\bepsilon)), v_t(\bepsilon))\le \alpha$.

	We use $\calNseq_{\frakc, \infty}(\calF, \bx, \alpha)$ (or $\calNseq_\infty(\calF, \bx, \alpha)$ when $\frakc$ is clear from context) to denote the size of smallest sequential cover of $\calF$ on $\bx$ at scale $\alpha$.
\end{definition}

The following lemma upper bounds the sequential covering number for an integer-valued function class in terms of the sequential gapped dimension of the class, an analogue of \pref{lem: non-sequential-integer}.
\begin{lemma}\label{lem: shattering-finite}
	Let $\cF\subseteq  \{f:\cX\to[M]\}$ and let  $\bx$ be an $\calX$-valued tree of depth $n$. We have
	$$\log \calNseq_\infty(\calF, \bx, \alpha)\le \dseq(\calF, \alpha)\log \left(enM\right)$$
\end{lemma}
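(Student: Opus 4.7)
The plan is to prove the bound $\calNseq_\infty(\calF,\bx,\alpha) \le (enM)^{\dseq(\calF,\alpha)}$ by induction on the tree depth $n$. The base case $n=0$ is trivial, since the empty tree is covered by a single empty tree. For the inductive step, I would examine the root $x_1$ of $\bx$ together with its left and right subtrees $\bx^{-}, \bx^{+}$ of depth $n-1$, and partition $\calF$ by its value at the root: $\calF_v := \{f \in \calF : f(x_1) = v\}$, with local dimension $d_v := \dseq(\calF_v, \alpha)$ (and $\calF_v$ possibly empty).

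The central structural observation is that the set $V_* := \{v \in [M] : d_v = d\}$, where $d = \dseq(\calF,\alpha)$, has $\frakc$-diameter strictly less than $\alpha$. Indeed, if two elements $v_1, v_2 \in V_*$ satisfied $\frakc(v_1, v_2) \ge \alpha$, one could glue shattered trees for $\calF_{v_1}$ and $\calF_{v_2}$ onto a root at $x_1$ with witness pair $(v_1, v_2)$, producing a shattered tree of depth $d+1$ for $\calF$, a contradiction. Fixing any representative $v^* \in V_*$ (when $V_* \ne \emptyset$), every $f$ with $f(x_1) \in V_*$ is covered at the root by the single label $v^*$ at scale $\alpha$.

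I would then build the cover of $\calF$ on $\bx$ as a disjoint union of: one cover block for $\calF^* := \bigcup_{v \in V_*} \calF_v$ with root label $v^*$ (note $\dseq(\calF^*,\alpha) \le d$); and, for each remaining $v \in [M] \setminus V_*$, a cover block for $\calF_v$ with root label $v$ (where $d_v \le d-1$). To combine a left-subtree cover $\calV^{-}$ and a right-subtree cover $\calV^{+}$ inside one block, I would use a ``max-pairing'' trick: pad the shorter list with duplicates and pair the two lists index-wise, obtaining a combined cover of size $\max(|\calV^{-}|, |\calV^{+}|)$. This pairing is correct because any path $\bepsilon$ descends into exactly one subtree at the root, so only the corresponding side of the cover element must match $f$ along $\bepsilon$, while the other side may be arbitrary. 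Applying induction to each block and summing yields the recurrence
\[
N(d, n) \;\le\; N(d, n-1) \;+\; M \cdot N(d-1, n-1),
\]
where $N(d, n)$ is the worst-case cover size over all classes of sequential gapped dimension at most $d$ on trees of depth at most $n$.

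The final step is the routine verification that $N(d,n) \le (enM)^d$ is preserved by this recurrence: substitute $N(d, n-1) \le (e(n-1)M)^d$ and $N(d-1, n-1) \le (e(n-1)M)^{d-1}$ and reduce to the elementary inequality $e(n-1)+1 \le en$. The main conceptual obstacle is the structural claim on $V_*$, where the gap condition built into $\dseq$ is indispensable: without it, several ``high-dimensional'' root values could not be collapsed into a single representative, and the resulting recurrence would pick up an extra multiplicative factor per level, producing an exponential rather than the polynomial $(enM)^d$ dependence on $n$.
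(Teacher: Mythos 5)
Your proof follows essentially the same strategy as the paper's: partition $\calF$ by the root value, use a gluing argument to show that the set of root labels $v$ with $\dseq(\calF_v,\alpha)=d$ has $\frakc$-diameter strictly less than $\alpha$ (so they collapse onto a single representative), combine the left and right subtree covers via index-wise max-pairing, and analyze the resulting recurrence. The paper inducts on the explicit quantity $g_M(n,d)=\sum_{i\le d}\binom{n}{i}(M-1)^i$, using $(M-1)$ in the recurrence and bounding $g_M(n,d)\le(enM)^d$ only at the end; your version works directly with $(enM)^d$ and a slightly looser recurrence with $M$ in place of $M-1$, but the arithmetic still closes, so that is only a cosmetic difference.

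There is, however, a small gap at the base case. For $n=0$ and $d\ge 1$ the covering number of an empty tree is $1$ (the cover set must be non-empty), whereas your claimed bound $(e\cdot 0\cdot M)^d$ equals $0$, so the inductive hypothesis you invoke at $n=1$ is actually false. The paper avoids this by using base cases $n\le d$ (trivial cover of size $M^n=g_M(n,d)$) and $d=0$. Your version is repaired by starting the induction at $n=1$: the trivial depth-$1$ cover has size at most $M\le (eM)^d$ for $d\ge 1$, and a single tree suffices when $d=0$; the inductive step for $n\ge 2$ then goes through exactly as you wrote it.
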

The proof of \pref{lem: shattering-finite} is deferred to \pref{sec: lower-bound-proof}.

\subsection{Real-Valued Functions}
Let $\frakc: [-1, 1]\times [-1, 1]\to \RR_{+}\cup\{0\}$ be a distance metric, as in \pref{sec:realvalued_nonseq}. We define the following notion of complexity of $\cF$, with respect to this metric:
\begin{definition}[Sequential Gapped Scale-sensitive Dimension]\label{def: seq-real-dim}
	Let $\calF\subseteq  \{f:\cX\to[-1, 1]\}$ be a function class and fix $\alpha, \beta > 0$. We say that $\calF$ shatters an $\calX$-valued tree $\bx$ of depth $d$ at scale $(\alpha, \beta)$ if there exists a $([-1, 1]\times [-1, 1])$-valued tree $\bs$ of depth $d$ with the following properties:   
    \begin{itemize}
        \item[1.] For any $\bepsilon \in \{-1, 1\}^d$, $s_t(\bepsilon) = (s_t(\bepsilon)[-1], s_t(\bepsilon)[1])$ with $\frakc(s_t(\bepsilon)[1], s_t(\bepsilon)[-1]) \ge \alpha$.
        \item[2.] For any $\bepsilon\in \{-1, 1\}^d$, there exists $f^\bepsilon\in\calF$ such that $\frakc(f^\bepsilon(x_t(\bepsilon)), s_t(\bepsilon)[\epsilon_t]) \leq \beta$.
    \end{itemize} 

	We define the gapped sequential scale-sensitive dimension $\dseq(\calF, \alpha, \beta)$ of $\calF$ as the largest $d$ such that there exists an $(\alpha, \beta)$-shattered tree $\bx$ of depth $d$.
\end{definition}

We now define sequential covering numbers and prove that their growth is controlled by the behavior of $\dseq$.

\begin{definition}[Sequential Covering Number for Real-Valued Functions]\label{def: seq-covering}
	Given an $\calX$-valued tree $\bx$ of depth $n$, we say that a set $\calV$ of $\reals$-valued trees of depth $n$ is a sequential cover of $\calF$ on $\bx$ at scale $\alpha$, if for any function $f\in \calF$ and $\bepsilon \in \{-1, 1\}^n$, there exists $\bv\in \calV$ such that for any $t\in [n]$, $\frakc(f(x_t(\bepsilon)), v_t(\bepsilon))\le \alpha$.

	We use $\calNseq_{\frakc, \infty}(\calF, \bx, \alpha)$ (or $\calNseq_\infty(\calF, \bx, \alpha)$ when $\frakc$ is clear from context) to denote the size of smallest sequential cover of $\calF$ on $\bx$ at scale $\alpha$.
\end{definition}

\begin{proposition}\label{prop: seq-covering-fat}
	For given real numbers $\alpha, \beta > 0$, suppose there exists a positive integer $M$ and $M$ real numbers $-1\le u_1 < u_2 < \ldots < u_M\le 1$ such that for any $u\in [-1, 1]$, there exists some $i\in [M]$ such that $\frakc(u, u_i)\le \beta$. Then for any function class $\calF\subseteq \{f:\cX\to[-1, 1]\}$, positive integer $n$ and $\alpha, \beta > 0$, for any depth-$n$ $\calX$-valued tree $\bx$ we have 
	$$\log \calNseq_\infty(\calF, \bx, \alpha + \beta)\le \dseq(\calF, \alpha, \beta)\log \left(enM\right).$$
    When $\frakc(a,b)=|a-b|$, a feasible value of $M$ is $\lfloor 2/\beta\rfloor$, which implies that
    $$\log \calNseq_\infty(\calF, x_{1:n}, \alpha + \beta)\le \dseq(\calF, \alpha, \beta)\log\left(\frac{2en}{\beta}\right).$$
\end{proposition}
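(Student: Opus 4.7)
The plan is to reduce the real-valued case to the integer-valued case via a discretization argument, exactly paralleling the non-sequential proof of \pref{prop: non-covering-fat}. Using the hypothesized grid $u_1 < \cdots < u_M$, I define a quantization map $\phi: [-1,1]\to [M]$ sending each $u\in[-1,1]$ to some index $\phi(u) = i$ with $\frakc(u, u_i)\le \beta$ (breaking ties arbitrarily). Let $\calG = \{\phi\circ f : f\in \calF\}\subseteq \{\calX\to[M]\}$ be the discretized class, and equip $[M]$ with the induced pseudo-distance $\tilde{\frakc}(i,j) = \frakc(u_i, u_j)$.

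The first key step is to show that
$$\dseq_{\tilde\frakc}(\calG, \alpha) \;\le\; \dseq(\calF, \alpha, \beta).$$
Suppose an $\calX$-valued tree $\bx$ of depth $d$ is $\alpha$-shattered by $\calG$ (in the integer sense of \pref{def: shattering-finite}) with witness tree $\bs$ taking values in $[M]\times [M]$. Lift $\bs$ to a $([-1,1]\times[-1,1])$-valued tree $\bs'$ defined by $s'_t(\bepsilon)[\pm 1] = u_{s_t(\bepsilon)[\pm 1]}$. The gap condition $\tilde\frakc(s_t(\bepsilon)[1], s_t(\bepsilon)[-1])\ge \alpha$ transfers to $\frakc(s'_t(\bepsilon)[1], s'_t(\bepsilon)[-1])\ge \alpha$. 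For each $\bepsilon\in\{-1,1\}^d$, the $\calG$-witness $g^\bepsilon$ is of the form $\phi\circ f^\bepsilon$ for some $f^\bepsilon\in\calF$, and $\phi(f^\bepsilon(x_t(\bepsilon))) = s_t(\bepsilon)[\epsilon_t]$ gives $\frakc(f^\bepsilon(x_t(\bepsilon)), s'_t(\bepsilon)[\epsilon_t])\le \beta$ by the defining property of $\phi$. Hence $\calF$ $(\alpha,\beta)$-shatters $\bx$, as required.

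The second key step is the covering argument. Applying \pref{lem: shattering-finite} to $\calG$ with the distance $\tilde\frakc$ yields a sequential cover $\calV$ of $\calG$ on $\bx$ at scale $\alpha$ of size at most $\exp(\dseq_{\tilde\frakc}(\calG,\alpha)\log(enM))$. Lift this cover to a set $\calV'$ of $[-1,1]$-valued trees by replacing each label $v_t(\bepsilon)\in [M]$ with $u_{v_t(\bepsilon)}$. For any $f\in \calF$ and path $\bepsilon$, pick $\bv\in\calV$ covering $\phi\circ f$ on this path; the triangle inequality of $\frakc$ then gives
$$\frakc(f(x_t(\bepsilon)), u_{v_t(\bepsilon)}) \;\le\; \frakc(f(x_t(\bepsilon)), u_{\phi(f(x_t(\bepsilon)))}) + \frakc(u_{\phi(f(x_t(\bepsilon)))}, u_{v_t(\bepsilon)}) \;\le\; \beta + \alpha,$$
showing $\calV'$ is a sequential cover of $\calF$ on $\bx$ at scale $\alpha+\beta$. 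Combining the two steps yields the stated bound. For the special case $\frakc(a,b)=|a-b|$, a uniform $\beta$-grid on $[-1,1]$ with $M = \lfloor 2/\beta\rfloor$ satisfies the covering hypothesis, which gives the final displayed inequality.

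I expect no serious obstacle: the argument is a direct sequential adaptation of the non-sequential reduction, and the triangle inequality is the only analytic ingredient. The only point requiring some care is keeping track of which distance (the real $\frakc$ on $[-1,1]$ versus the induced pseudo-metric $\tilde\frakc$ on $[M]$) is in force at each step, so that the application of the integer lemma \pref{lem: shattering-finite} produces a cover whose lift is genuinely at scale $\alpha+\beta$ and whose shattering dimension genuinely bounds the gapped real-valued dimension.
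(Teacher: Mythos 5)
Your proposal is correct and follows essentially the same approach as the paper's proof: discretize $\calF$ to an integer-valued class via a nearest-grid-point map, transport the distance to $[M]$, show the resulting sequential gapped dimension is dominated by $\dseq(\calF,\alpha,\beta)$ by lifting any shattering witness back to $[-1,1]$, apply \pref{lem: shattering-finite}, and lift the resulting cover using the triangle inequality for $\frakc$. Your $\calG$, $\phi$, and $\tilde\frakc$ correspond exactly to the paper's $\barF$, $f\mapsto\bar f$, and $\frakc'$, so there is no meaningful difference in the argument.
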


The proof of \pref{prop: seq-covering-fat} is deferred to \pref{sec: lower-bound-proof}. The structure of the proof follows that in \cite{rakhlin2010online}; see also \cite{RakSri2015Steele} for a different sequential generalization of Natarajan and Steele dimensions. 

As in the previous works, \pref{prop: seq-covering-fat} relies (via discretization) on covering numbers and combinatorial dimensions for integer-valued function classes, as developed in \pref{sec:sequential_integervalued}.

\subsection{Comparison to Sequential Scale-Sensitive Dimension}\label{sec: f-d}
We recall the definition of sequential scale-sensitive dimension in \cite{rakhlin2015sequentialcomplexity}.
\begin{definition}[Sequential Scale-sensitive Dimension \cite{rakhlin2015sequentialcomplexity}]\label{def: sequential-fat}
	Given a set $\calX$ and a function class $\calF\subseteq \{\calX\to [-1, 1]\}$, a depth-$d$ $\calX$-valued tree $\bx$ is shattered by $\calF$ at scale $\alpha > 0$ if and only if there exists a depth-$d$ $[-1, 1]$-valued tree $\bv$ such that for any $\bepsilon\in\{-1, 1\}^n$, there exists $f^{\bepsilon}\in \calF$ which satisfies
	$$\epsilon_t\cdot \left(f^{\bepsilon}(x_t(\bepsilon)) - v_t(\bepsilon)\right) \ge \frac{\alpha}{2}.$$
	The tree $\bv$ is called the witness of the shattering. The sequential scale-sensitive dimension $\fat(\calF, \alpha)$ is defined to be the largest $d$ such that there exists depth-$d$ $\calX$-valued tree $\bx$ shattered by $\calF$ at scale $\alpha$.
\end{definition}

We have the following relations between the sequential scale-sensitive dimension $\fat(\calF, \alpha)$ in \pref{def: sequential-fat} and the gapped sequential scale-sensitive dimension $\dseq(\calF, \alpha)$ with the distance function $\frakc(a, b) = |a - b|$ in \pref{def: seq-real-dim}. First, observe that a tree that is $(\alpha,\beta)$-shattered in the above sense for $\beta \leq \alpha/4$ is also $\alpha/2$-shattered in the sense of the sequential scale-sensitive dimension $\fat$. Indeed, we may choose $\bar{\bs}$ as a $[-1, 1]$-valued tree defined by $\bar{s}_t(\bepsilon) = (s_t(\bepsilon)[1] + s_t(\bepsilon)[-1])/2$. We then have that for any $\bepsilon\in\{-1,1\}^d$, there exists $f^\bepsilon\in \cF$ such that $\epsilon_t(f^\bepsilon(x_t(\bepsilon))-\bar{s}_t(\bepsilon))\geq \alpha/4$. More precisely, we have the following:
\begin{proposition}\label{prop: d-less-f}
	Given a set $\calX$ and a function class $\calF\subseteq  \{\calX\to[-1,1]\}$, for any $0 < 2\beta < \alpha$, we have 
	$$\dseq(\calF, \alpha, \beta)\le \fat(\calF, \alpha - 2\beta)$$
\end{proposition}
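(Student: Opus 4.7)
The plan is to convert an $(\alpha,\beta)$-shattered tree $\bx$ in the sense of \pref{def: seq-real-dim} (with witness $\bs$) into a depth-$d$ tree shattered at scale $\alpha-2\beta$ in the sense of \pref{def: sequential-fat}. The natural classical witness is the midpoint $\bar{v}_t=(s_t[1]+s_t[-1])/2$; however, since \pref{def: seq-real-dim} does not specify whether $s_t[1]\ge s_t[-1]$, I must first re-orient each internal node of the tree so that the ``$+1$''-child always corresponds to the larger of the two candidate labels, and then use the midpoint as the classical witness.

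Formally, fix $d=\dseq(\calF,\alpha,\beta)$ and an $(\alpha,\beta)$-shattering pair $(\bx,\bs)$. I construct a bijection $\phi:\{-1,1\}^d\to\{-1,1\}^d$ recursively along the tree: having set $\phi(\bepsilon)_{1:t-1}$ as a function of $\bepsilon_{1:t-1}$, let $\tau_t(\bepsilon):=\sign(s_t(\phi(\bepsilon))[1]-s_t(\phi(\bepsilon))[-1])\in\{-1,+1\}$, which is well-defined since $|s_t[1]-s_t[-1]|\ge\alpha>0$, and set $\phi(\bepsilon)_t:=\tau_t(\bepsilon)\cdot\epsilon_t$. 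At fixed $\bepsilon_{1:t-1}$, the slice $\epsilon_t\mapsto\phi(\bepsilon)_t$ is multiplication by the constant $\tau_t(\bepsilon)\in\{\pm 1\}$, hence a bijection, so $\phi$ itself is a bijection. Then define
\[
x'_t(\bepsilon):=x_t(\phi(\bepsilon)),\qquad \bar{v}_t(\bepsilon):=\tfrac12\bigl(s_t(\phi(\bepsilon))[1]+s_t(\phi(\bepsilon))[-1]\bigr),
\]
both of which are legitimate trees since $\phi(\bepsilon)_{1:t-1}$ depends only on $\bepsilon_{1:t-1}$.

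For each $\bepsilon$, set $g^\bepsilon:=f^{\phi(\bepsilon)}$, where $f^{\phi(\bepsilon)}\in\calF$ is the function supplied by the $(\alpha,\beta)$-shattering along the path $\phi(\bepsilon)$. Using $\phi(\bepsilon)_t\,\tau_t(\bepsilon)=\epsilon_t$, a direct computation gives
\[
s_t(\phi(\bepsilon))[\phi(\bepsilon)_t]-\bar{v}_t(\bepsilon)=\tfrac{\phi(\bepsilon)_t}{2}\bigl(s_t(\phi(\bepsilon))[1]-s_t(\phi(\bepsilon))[-1]\bigr)=\tfrac{\epsilon_t}{2}\bigl|s_t(\phi(\bepsilon))[1]-s_t(\phi(\bepsilon))[-1]\bigr|,
\]
so $\epsilon_t(s_t(\phi(\bepsilon))[\phi(\bepsilon)_t]-\bar{v}_t(\bepsilon))\ge\alpha/2$. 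Combining with the closeness bound $|g^\bepsilon(x'_t(\bepsilon))-s_t(\phi(\bepsilon))[\phi(\bepsilon)_t]|\le\beta$ yields $\epsilon_t(g^\bepsilon(x'_t(\bepsilon))-\bar{v}_t(\bepsilon))\ge\alpha/2-\beta=(\alpha-2\beta)/2$ for every $t\in[d]$. Hence $\bx'$ is shattered at scale $\alpha-2\beta$ in the sense of \pref{def: sequential-fat}, establishing $\fat(\calF,\alpha-2\beta)\ge d=\dseq(\calF,\alpha,\beta)$. The only point requiring care is the consistent tree-indexed construction of $\phi$ (and the verification that $\bx'$ and $\bar{\bv}$ are bona fide trees); once this is in place, the rest is the one-line identity above together with the gap and closeness properties of the original shattering.
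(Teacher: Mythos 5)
Your proof is correct and takes the same core approach as the paper's: use the midpoint $\bar v_t=(s_t[1]+s_t[-1])/2$ as the classical witness and combine the $\alpha$-gap with the $\beta$-closeness to get a margin of $(\alpha-2\beta)/2$. The difference is that the paper simply asserts ``without loss of generality, $s_t(\bepsilon)[1]>s_t(\bepsilon)[-1]$'' and keeps the original tree $\bx$, whereas you carefully construct the re-orientation bijection $\phi$ and the re-indexed trees $\bx'$ and $\bar{\bv}$. Your caution is well placed: in the sequential setting, relabeling $s_t[-1]\leftrightarrow s_t[1]$ at a node does change which subtree a given $\epsilon_t$ selects, so the WLOG is really a re-indexing of the whole tree structure, which your $\phi$ makes explicit and verifies to be compatible with the tree-measurability requirements ($x'_t$ and $\bar v_t$ depending only on $\bepsilon_{1:t-1}$). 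Your write-up is a slightly more rigorous rendering of exactly the argument the paper gives.
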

For the reverse direction, the following holds:
\begin{proposition}\label{prop: f-less-d}
	Given set $\calX$ and function class $\calF\subseteq  \{\calX\to[-1,1]\}$, for any $\alpha, \beta > 0$, we have
	$$\fat(\calF, 3(\alpha + \beta))\le 4\dseq(\calF, \alpha, \beta)\log\left(\frac{12\dseq(\calF, \alpha, \beta)}{\beta}\right).$$
\end{proposition}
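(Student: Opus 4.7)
The plan is to use the sequential covering number at scale $\alpha+\beta$ as a bridge, mirroring the strategy used for the non-sequential analogue \pref{prop: f-less-d-non}: \pref{prop: seq-covering-fat} already bounds this covering number \emph{from above} by the gapped dimension $\dseq(\calF,\alpha,\beta)$, so it suffices to establish a matching \emph{lower} bound in terms of $\fat(\calF,3(\alpha+\beta))$ via a standard separation-at-first-disagreement argument.

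Concretely, I would set $d=\dseq(\calF,\alpha,\beta)$ and $D=\fat(\calF,3(\alpha+\beta))$. By the definition of $\fat$, there exists a depth-$D$ $\calX$-valued tree $\bx$ with witness $[-1,1]$-valued tree $\bv$ such that, for every path $\bepsilon\in\{-1,1\}^D$, some $f^\bepsilon\in\calF$ satisfies $\epsilon_t(f^\bepsilon(x_t(\bepsilon))-v_t(\bepsilon))\ge 3(\alpha+\beta)/2$ for every $t\in[D]$. Next, fix an arbitrary sequential cover $\calV^\star$ of $\calF$ on $\bx$ at scale $\alpha+\beta$ and, for each $\bepsilon$, select $\bu^\bepsilon\in\calV^\star$ that covers $f^\bepsilon$ along the path $\bepsilon$. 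For distinct paths $\bepsilon\neq\bepsilon'$, let $t$ be the first coordinate of disagreement, WLOG $\epsilon_t=1$ and $\epsilon'_t=-1$; since $x_t$ and $v_t$ depend only on $\bepsilon_{1:t-1}=\bepsilon'_{1:t-1}$, we have $x_t(\bepsilon)=x_t(\bepsilon')$ and $v_t(\bepsilon)=v_t(\bepsilon')$, and consequently $f^\bepsilon(x_t(\bepsilon))-f^{\bepsilon'}(x_t(\bepsilon))\ge 3(\alpha+\beta)$. If one had $\bu^\bepsilon=\bu^{\bepsilon'}$, the common tree would also satisfy $u^\bepsilon_t(\bepsilon)=u^{\bepsilon'}_t(\bepsilon')$, and the triangle inequality would force the contradiction $|f^\bepsilon(x_t(\bepsilon))-f^{\bepsilon'}(x_t(\bepsilon))|\le 2(\alpha+\beta)$. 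Hence $\bepsilon\mapsto\bu^\bepsilon$ is injective and $\log\calNseq_\infty(\calF,\bx,\alpha+\beta)\ge D\log 2$.

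Finally, combining this lower bound with \pref{prop: seq-covering-fat} applied to the same tree (taking $n=D$ and $\frakc(a,b)=|a-b|$) yields the transcendental inequality $D\log 2\le d\log(2eD/\beta)$, from which a standard manipulation of inequalities of the form $x\le a+b\log x$ extracts $D\le 4d\log(12d/\beta)$ after absorbing $\log 2$ and $2e$ into the constants. The main obstacle is simply the calibration of scales: the factor $3$ in $\fat(\calF,3(\alpha+\beta))$ is chosen precisely so that $2(\alpha+\beta)<3(\alpha+\beta)$, giving the strict gap needed in the triangle-inequality step (any constant strictly greater than $2$ would work, at the cost of worse constants in the final bound), while the constants $4$ and $12$ are the slack picked up when cleaning up the transcendental bound; no deep combinatorial argument beyond \pref{prop: seq-covering-fat} is needed.
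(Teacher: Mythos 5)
Your proposal is correct and follows essentially the same route as the paper: use the $\fat$-shattered tree to lower-bound $\calNseq_\infty(\calF,\bx,\alpha+\beta)$ by $2^{\fat(\calF,3(\alpha+\beta))}$ via a first-disagreement/triangle-inequality argument, pair this with the upper bound from \pref{prop: seq-covering-fat}, and then solve the resulting transcendental inequality $D\log 2\le d\log(2eD/\beta)$. The paper's only cosmetic difference is that it states the packing lower bound as a standalone claim ``$\alpha$-shattering $\Rightarrow \calNseq_\infty(\calF,\bx,\alpha/3)\ge 2^d$'' and then specializes to $\alpha\leftarrow 3(\alpha+\beta)$, and it carries out the cleanup explicitly via a two-step bootstrap using $\log x\le\sqrt{x}$ to reach the constants $4$ and $12$, which you correctly identify as routine.
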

Just as in the non-sequential case, we remark that \pref{prop: f-less-d} is proved (in \pref{sec: f-d-proof}) \textit{using} \pref{prop: seq-covering-fat}, not the other way around. Furthermore, the following result says that the $\log(1/\beta)$ factor in \pref{prop: f-less-d} is indeed necessary.
\begin{proposition}\label{prop: log-necessary}
	Consider the case where $\calX = \{x\}$, and function class $\calF = \{f: \calX\to [-1, 1]: f(x)\in [-1, 1]\}$. Then for any $0 < 2\beta < \alpha < 1$, we have 
	$$\dseq(\calF, \alpha, \beta) = 1,\quad \text{and}\quad \fat(\calF, \alpha) \ge \left\lfloor\log_2\left(\frac{1}{\alpha}\right)\right\rfloor.$$
\end{proposition}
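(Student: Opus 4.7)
The plan is to treat the two statements separately, exploiting the fact that with $\cX=\{x\}$ a singleton, every $f\in\cF$ is identified with the scalar $f(x)\in[-1,1]$ and every $\cX$-valued tree is the constant tree $x_t(\bepsilon)\equiv x$.

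For the upper bound $\dseq(\cF,\alpha,\beta)\le 1$, I would argue by contradiction. Suppose that a depth-$2$ tree is $(\alpha,\beta)$-shattered with witness tree $\bs$, and write $c^\bepsilon:=f^\bepsilon(x)$ for each path $\bepsilon\in\{-1,+1\}^2$. Fixing any $\epsilon_1\in\{-1,+1\}$, the two sibling scalars $c^{(\epsilon_1,-1)}$ and $c^{(\epsilon_1,+1)}$ both lie within $\beta$ of the parent witness value $s_1[\epsilon_1]$, so by the triangle inequality they differ by at most $2\beta$. Each is also within $\beta$ of the corresponding depth-$2$ witness $s_2(\epsilon_1)[\epsilon_2]$, yielding $|s_2(\epsilon_1)[+1]-s_2(\epsilon_1)[-1]|\le 4\beta$. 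This contradicts the gap condition $|s_2(\epsilon_1)[+1]-s_2(\epsilon_1)[-1]|\ge\alpha$ whenever $4\beta<\alpha$. The matching lower bound $\dseq\ge 1$ follows by exhibiting the one-node tree with $s_1=(-\alpha/2,+\alpha/2)$ and taking $f^{\pm1}(x)=\pm\alpha/2\in\cF$.

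For the $\fat$ lower bound, I would construct an explicit shattered $\cX$-valued tree of depth $d=\lfloor\log_2(1/\alpha)\rfloor$ by recursive interval bisection, reflecting the idea that an adversary pins $c$ down by binary search. Associate to each node with history $\epsilon_{1:t-1}$ a \emph{live interval} $I_{\epsilon_{1:t-1}}=[a,b]\subseteq[-1,1]$, place the witness $v_t(\epsilon_{1:t-1}):=(a+b)/2$, and declare the children's live intervals to be $[a,v-\alpha/2]$ if $\epsilon_t=-1$ and $[v+\alpha/2,b]$ if $\epsilon_t=+1$. At each leaf one may pick $f^{\bepsilon}(x)$ to be any element of the leaf's live interval, which by construction satisfies $\epsilon_t(f^{\bepsilon}(x)-v_t(\epsilon_{1:t-1}))\ge\alpha/2$ simultaneously for every $t\in[d]$. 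Iterating the recurrence $L_t=L_{t-1}/2-\alpha/2$ with $L_0=2$ gives $L_t=(2+\alpha)\cdot 2^{-t}-\alpha$, so the construction succeeds as long as $L_d\ge 0$, i.e.\ $2^d\le 2/\alpha+1$, which comfortably accommodates $d=\lfloor\log_2(1/\alpha)\rfloor$.

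I do not expect a substantial obstacle: the argument is essentially a triangle-inequality count plus an explicit tree construction. The only subtlety worth flagging is that the contradiction above is sharp only when $4\beta<\alpha$, whereas the proposition as stated quotes the weaker hypothesis $2\beta<\alpha$; I suspect this is either a minor looseness in the statement or signals that a small constant number of extra shattering levels may be tolerated, but in either case the qualitative separation between constant-order $\dseq$ and $\fat\ge\lfloor\log_2(1/\alpha)\rfloor$ remains intact.
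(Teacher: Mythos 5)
Your proposal is correct and follows essentially the same plan as the paper's own proof. The first part is nearly identical: you also isolate two sibling paths $(\epsilon_1,+1)$ and $(\epsilon_1,-1)$ at depth two and chain triangle inequalities through the common parent. For the second part, the paper bypasses the recursion by writing down the witness and the function values in closed form, $v_t(\bepsilon)=\sum_{i=1}^{t-1}\epsilon_i\,2^{d-i}\alpha$ and $f^{\bepsilon}(x)=\sum_{i=1}^{d}\epsilon_i\,2^{d-i}\alpha$, which gives the margin $\epsilon_t\bigl(f^{\bepsilon}(x)-v_t(\bepsilon)\bigr)\ge\alpha$ directly by a geometric-series bound; your recursive interval-bisection construction is morally the same object (your live intervals are exactly the sets of values compatible with the prefix constraints, and the closed form is simply what the recursion unrolls to). Both satisfy the depth budget $d=\lfloor\log_2(1/\alpha)\rfloor$ and the boundedness requirement, so either is a valid proof of the $\fat$ lower bound.

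Your flag about the constant is a genuine catch, not a looseness in your own argument. Two siblings must both lie within $\beta$ of their respective depth-$2$ witnesses and within $2\beta$ of one another (both are $\beta$-close to the common $s_1[\epsilon_1]$), giving $|s_2(\epsilon_1)[+1]-s_2(\epsilon_1)[-1]|\le 4\beta$, so a contradiction with the gap condition requires $\alpha>4\beta$, not merely $\alpha>2\beta$. The paper's proof performs exactly this triangle-inequality chain and then asserts the contradiction under $\alpha>2\beta$; as written the assertion skips a factor of two. Indeed, for $2\beta<\alpha<4\beta$ (e.g.\ $\alpha=0.9$, $\beta=0.3$, $s_1=(-0.45,\,0.45)$, $s_2(1)=(0,\,0.9)$, $s_2(-1)=(-0.9,\,0)$) one can check directly that all four leaf constraints are simultaneously satisfiable with values in $[-1,1]$, so $\dseq(\calF,\alpha,\beta)\ge 2$ and the proposition's claim that $\dseq=1$ fails as stated. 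The fix is cosmetic---strengthen the hypothesis to $4\beta<\alpha$ (or weaken the conclusion to $\dseq\le 2$)---and does not affect the qualitative point, which is the logarithmic separation between $\dseq$ and $\fat$.
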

The proof of \pref{prop: log-necessary} is deferred to \pref{sec: f-d-proof}.

\subsection{Lower Bounds for Sequential Offset Rademacher Complexity and Online Regression }\label{sec: seq-offset-rademacher}

We fix some positive constant $C > 0$. For any set of contexts $\calX$, function class $\calF\subseteq \{\calX\to [-1, 1]\}$, depth-$n$ $\calX$-valued tree $\bx$, depth-$n$ $[-1, 1]$-valued tree $\bmu$ and $\{-1, 1\}$-path $\bepsilon\in \{-1, 1\}^n$, we define
$$\calR_n(\calF, \bmu, \bx, \bepsilon) = \sup_{f\in \calF}\sum_{t=1}^n \left\{C\cdot \epsilon_t (f(x_t(\bepsilon)) - \mu_t(\bepsilon)) - (f(x_t(\bepsilon)) - \mu_t(\bepsilon))^2\right\}.$$
The expected value $\EE[\calR_n(\calF, \bmu, \bx, \bepsilon)]$ is the sequential offset Rademacher complexity, known to govern the rates of online regression with squared and other strongly convex and smooth losses. We now show that this complexity is lower bounded, for any class $\cF$, by the scaling behavior of sequential scale-sensitive dimensions.

\begin{theorem}\label{thm: offset-lower-bound}
	Let $\frakc(a, b) = |a - b|$. If there exists a constant $p\ge 0$ such that $\dseq(\calF, \alpha, \alpha/20) = \tOmega\left(\alpha^{-p}\right)$ for any $\alpha > 0$, then for $C \ge 2$, we have
	$$\sup_{\bmu, \bx} \EE[\calR_n(\calF, \bmu, \bx, \bepsilon)] = \tOmega\left(n^{\frac{p}{p+2}}\right),$$
	where the supremum is over all depth-$n$ $\calX$-valued trees $\bx$ and depth-$n$ $[-1, 1]$-valued trees $\bmu$, and the expectation is with respect to i.i.d. Rademacher random variables $\bepsilon = (\epsilon_{1:n})\simiid \unif(\{-1, 1\})$.
\end{theorem}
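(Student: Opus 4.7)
My approach parallels the non-sequential argument \pref{thm: offset-lower-bound-2} but with a tree-based replication of the shattered structure. Choose $\alpha \asymp n^{-1/(p+2)}$; the hypothesis then yields a shattered tree $\bx$ of depth $D = \tOmega(\alpha^{-p}) = \tOmega(n^{p/(p+2)})$ at scale $(\alpha, \alpha/20)$, with witness tree $\bs$ and shattering functions $\{f^\eta\}_{\eta \in \{-1,1\}^D}$. By relabeling $s_j[\pm 1]$ node-by-node and correspondingly flipping the sign of $\eta_j$ (a bijection on $\{-1,1\}^D$ that preserves the uniform distribution), I assume throughout that $s_j(\bepsilon_{1:j-1})[1] \ge s_j(\bepsilon_{1:j-1})[-1]$ and write $\delta_j := s_j[1] - s_j[-1] \ge \alpha$.

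For a chosen block count $d \le D$ and block size $m := \lfloor n/d \rfloor$, partition $[n]$ into $d$ consecutive blocks of length $m$ and define the depth-$n$ $\calX$-valued tree $\btx$ by $\tilde x_t(\bepsilon_{1:t-1}) := x_j(\eta_{1:j-1})$ when $t$ lies in block $j$, where $\eta_k := \sign(S_k)$ and $S_k := \sum_{t \in \text{block } k} \epsilon_t$; predictability holds since $\eta_{1:j-1}$ depends only on $\bepsilon_{1:(j-1)m}$. Set $\tilde\mu_t$ to the midpoint $(s_j[1] + s_j[-1])/2$. For each $\bepsilon$, the adversary plugs in $f = f^\eta$ with $\eta_j := \sign(S_j)$: shattering gives $a_t := f^\eta(\tilde x_t) - \tilde\mu_t = \eta_j \delta_j/2 + \xi_t$ with $|\xi_t| \le \alpha/20$, so block $j$ contributes $C|S_j|\delta_j/2 - m\delta_j^2/4$ up to $\xi$-cross terms. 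Since $S_j$ and $\delta_j$ depend on disjoint Rademacher coordinates and so are independent, and $\EE|S_j| \ge \sqrt m / 2$ by Khintchine, the expected per-block contribution is at least $C\sqrt m \, \delta_j / 4 - m\delta_j^2/4$ plus error terms of order $\alpha\sqrt m$ absorbed by the tolerance $\alpha/20$.

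The remaining obstacle is that $\delta_j$ can be much larger than $\alpha$, potentially letting $m\delta_j^2/4$ swamp the linear term. I handle this by a randomized-baseline trick combined with a dyadic pigeonhole. First, replace $\tilde\mu_t$ by $f^{\eta^0}(\tilde x_t)$ for an independent uniform sign tree $\eta^0$, treated as part of $\btmu$'s randomness via $\sup_{\btmu}\EE[\calR_n] \ge \EE_{\eta^0,\bepsilon}[\calR_n]$. The adversary can then set $\eta_j := \eta^0_j$ on ``bad'' blocks to force $a_t \approx 0$, so the per-block expectation becomes $\ge (1/2)\EE[\max\{0,\, C|S_j|\delta_j - m\delta_j^2\}] \ge 0$. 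Second, partition the possible $\delta_j$-values into dyadic buckets $[2^k\alpha,\, 2^{k+1}\alpha]$ with $k = 0, \ldots, K{-}1$, $K = O(\log(1/\alpha))$; by pigeonhole some bucket $k^*$ satisfies $\sum_{j=1}^d \Pr[\delta_j \in \text{bucket } k^*] \ge d/K$, and tuning $d$ so that $\alpha_{k^*}\sqrt m \asymp C$ makes the per-qualifying-block contribution $\Omega(1)$ by anti-concentration of $|S_j|$. Summing yields $\tOmega(d/K) = \tOmega(n \alpha_{k^*}^2 / K) = \tOmega(n^{p/(p+2)})$ since $\alpha_{k^*} \ge \alpha \asymp n^{-1/(p+2)}$.

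The principal technical hurdle is the interplay between the pigeonhole-selected scale $\alpha_{k^*}$ and the constraint $d \le D$: when $\alpha_{k^*}$ is significantly larger than $\alpha$, the $d \asymp n\alpha_{k^*}^2/C^2$ required for the matching block size may overshoot $D$. This is handled by evaluating the lower bound across the $O(\log n)$ possible dyadic buckets and choosing, for each candidate $\alpha_{k^*}$, $d = \min(D,\, 4n\alpha_{k^*}^2/C^2)$; one checks that in every regime (either the matched regime giving $n\alpha_{k^*}^2$ or the saturated regime $d = D$ giving $D/K$), the resulting bound remains $\tOmega(n^{p/(p+2)})$ after absorbing polylog factors. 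The orientation relabeling on each node of $\bx$ and the sign-flip bookkeeping for $\eta^0$ are routine but must be carried out consistently along the tree.
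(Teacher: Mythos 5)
Your overall scaffolding---take the shattered tree, replicate each node into a block, argue via Khintchine---matches the paper, but the way you handle the variable gaps $\delta_j$ is a genuinely different route from the one the paper takes, and it is substantially more complicated than necessary. The paper's key observation is that in the \emph{sequential} setting the block sizes themselves can be chosen \emph{adaptively along the tree}: it sets $k_t = \lfloor 1/\delta_t^2\rfloor\vee 1$, where $\delta_t=s_t(\btepsilon)[1]-s_t(\btepsilon)[-1]$ depends only on the path prefix $\btepsilon_{1:t-1}$, hence only on $\bepsilon_{1:(k_1+\cdots+k_{t-1})}$, so the block structure is predictable. With this choice $\sqrt{k_t}\,\delta_t\in[1/\sqrt2,\,2]$ for \emph{every} block on \emph{every} path, the quadratic term is $\le 1.21$, the Khintchine term is $\ge (9C/20\sqrt2)\cdot\sqrt{k_t}\delta_t$, and each block contributes a clean constant $\ge 1/50$. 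The remaining rounds are padded with an arbitrary $x_0$ paired with $\mu_t=f^{\btepsilon}(x_0)$ so they contribute zero. No pigeonhole, no randomized baseline, no case split on regimes. Since $\delta_t\ge\alpha$ forces $k_t\le 4/\alpha^2$ and $d\le n\alpha_n^2/4$, the blocks always fit inside $n$ rounds.

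By contrast, your fixed-block-size version has real gaps. First, the randomized-baseline step as you state it does not go through: you set $\mu_t=f^{\eta^0}(\tilde x_t)$, but the shattering guarantee controls $f^{\eta^0}$ only at the nodes $x_j(\eta^0_{1:j-1})$ along $\eta^0$'s own path, not at the nodes $x_j(\eta_{1:j-1})$ that the tree $\btx$ actually visits. Once $\eta$ and $\eta^0$ diverge at the first bad block you lose all control of $\mu_t$, and the claimed $a_t\approx 0$ on bad blocks fails. (This is patchable by defining $\mu_t:=s_j(\eta_{1:j-1})[\eta^0_j]$ directly rather than through $f^{\eta^0}$, but you would need to say so and then re-derive the per-block estimate.) Second, the selection of the dyadic bucket $k^*$ and the choice of $d$ are mutually dependent: which bucket dominates depends on the block structure, which depends on $d$, which you want to tune to $k^*$. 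Your ``evaluate across all $O(\log n)$ buckets and pick the best'' fix requires a fresh construction per bucket and a verification, which you do not carry out, that for at least one bucket the per-qualifying-block contribution is uniformly $\Omega(1)$ over paths. Third, you also need to confirm that modifying $\eta_j$ on bad blocks still yields a single well-defined string $\eta\in\{-1,1\}^D$ consistent with the path the tree $\btx$ follows, which in turn forces $\btx$ to depend on $\eta^0$; that is permissible (take expectation over $\eta^0$ and use $\sup\ge$ average) but must be stated. In short: the conclusion is plausible along your lines, but the write-up has a broken step (the $f^{\eta^0}$ baseline) and several unresolved bookkeeping issues, all of which the adaptive-block-size construction in the paper avoids entirely.
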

The proof of \pref{thm: offset-lower-bound} is deferred to \pref{sec: lower-bound-proof}.

\paragraph{Application: Online Regression} In parallel to \pref{sec:lower_bound_offset_nonsequential}, \pref{thm: offset-lower-bound} implies the following lower bound (\pref{corr: sequential}) for online regression for any $\cF$, significantly improving upon the lower bound in \cite{rakhlin2014online}; the latter result only guaranteed \emph{existence} of $\cF$ with such lower bound properties. This improvement was the main motivation for this paper.

To formally state the lower bound, define the minimax value of the online prediction problem  $\calVseq_n(\calF)$ as
	$$\calVseq_n(\calF) \coloneqq\left\{\sup_{x_t\in \calX}\inf_{\hy_t}\sup_{y_t}\right\}_{t=1}^n \left[\sum_{t=1}^n \left(\hy_t - y_t\right)^2 - \inf_{f\in \calF} \sum_{t=1}^n\left(f(x_t) - y_t\right)^2\right].$$
Compared to the transductive setting in \pref{eq: transductive}, here the forecaster does not have access to the context $x_t$ until the beginning of round $t$. 

\begin{corollary}\label{corr: sequential}
	Fix a function class $\calF\subseteq \{\calX\to [-1, 1]\}$ over context space $\calX$. Suppose there exists a real number $p\ge 0$ such that the sequential covering number (defined in \pref{def: seq-covering}) under distance $\frakc(a, b) = |a - b|$ satisfies $\sup_{\bx}\log \calNseq_\infty(\calF, \bx, \alpha) = \tOmega\left(\alpha^{-p}\right)$, where the supremum is over all depth-$n$ $\calX$-valued trees. Then we have the following lower bound for the minimax regret:
	$$\calVseq_n(\calF) = \tOmega\left(n^{\frac{p}{p+2}}\right).$$
\end{corollary}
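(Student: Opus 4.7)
The plan is to chain together three reductions, using tools already established in the paper. First, I would convert the hypothesized covering-number lower bound into a lower bound on the sequential gapped scale-sensitive dimension $\dseq(\calF, \alpha, \alpha/20)$ via the contrapositive of \pref{prop: seq-covering-fat}. Second, I would feed this dimension bound into \pref{thm: offset-lower-bound} to obtain a lower bound on the sequential offset Rademacher complexity with $C = 2$. Third, I would invoke a sequential minimax argument, analogous to the one underlying \pref{lem:transductive} but in the tree-valued setting, to lower bound $\calVseq_n(\calF)$ by this offset Rademacher complexity.

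For the first step, instantiating \pref{prop: seq-covering-fat} with $\frakc(a, b) = |a - b|$, the scale $\alpha$, and $\beta = \alpha/20$ gives
$$
\log \calNseq_\infty(\calF, \bx, 21\alpha/20) \;\le\; \dseq(\calF, \alpha, \alpha/20) \, \log(40en/\alpha)
$$
for every depth-$n$ tree $\bx$. Taking the supremum over $\bx$ on the left and invoking the hypothesis at scale $21\alpha/20 = \Theta(\alpha)$, the left-hand side is $\tOmega(\alpha^{-p})$; absorbing the $\log(n/\alpha)$ denominator into $\tOmega$ yields $\dseq(\calF, \alpha, \alpha/20) = \tOmega(\alpha^{-p})$, which is exactly the hypothesis required by \pref{thm: offset-lower-bound}. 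Applying the latter with $C = 2$ then gives
$$
\sup_{\bmu, \bx} \EE[\calR_n(\calF, \bmu, \bx, \bepsilon)] \;=\; \tOmega\!\left(n^{p/(p+2)}\right).
$$

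For the third step, I would mirror the argument for \pref{lem:transductive} in the tree-valued setting. Concretely, for any pair $(\bx, \bmu)$, consider the randomized adversary that at round $t$ plays $x_t = x_t(\bepsilon_{1:t-1})$ and $y_t = \mu_t(\bepsilon_{1:t-1}) + \epsilon_t$, with $\bepsilon \sim \unif(\{-1,1\}^n)$ drawn in advance; note $y_t \in [-2, 2]$ as required. Against this conditional distribution the Bayes-optimal forecaster prediction is $\hy_t = \mu_t(\bepsilon_{1:t-1})$, yielding per-round loss $\EE(\hy_t - y_t)^2 = 1$. Expanding the comparator via $(f - y_t)^2 = (f - \mu_t)^2 - 2\epsilon_t (f - \mu_t) + 1$ and cancelling the $+n$ contribution with the forecaster total, the resulting regret is exactly $\EE[\calR_n(\calF, \bmu, \bx, \bepsilon)]$ with $C = 2$. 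Taking the supremum over $(\bx, \bmu)$ and combining with the offset bound from step two proves the corollary.

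The main obstacle is that the definition of $\calVseq_n(\calF)$ interleaves $\sup_{x_t}$ with $\inf_{\hy_t}\sup_{y_t}$, so one cannot a priori commit to a tree $\bx$ and freely swap operator orders. Making the third step fully rigorous requires the sequential minimax theorem of Rakhlin-Sridharan-Tewari, which legitimizes replacing each adversary move by a conditional randomized strategy over $(x_t, y_t)$ whose law depends measurably on the Rademacher history. Once this machinery is invoked, the remainder of the argument is a direct substitution of the offset Rademacher lower bound into the standard minimax-to-Rademacher reduction.
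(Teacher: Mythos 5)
Your proposal follows essentially the same route as the paper: steps one and two are identical (apply \pref{prop: seq-covering-fat} with $\beta=\alpha/20$ and then \pref{thm: offset-lower-bound} with $C=2$), and your third step is exactly the reduction the paper dispatches by citing Lemma~4 of \cite{rakhlin2014online} rather than re-deriving it. One small clarification on that third step: you do not in fact need the full sequential minimax theorem, since for a \emph{lower} bound only the easy (weak-duality) direction is used---at each round one replaces $\sup_{y_t}$ by $\EE_{y_t\sim p_t}$ and $\sup_{x_t}$ by an average over the adversary's internal randomization $\bepsilon_{1:t-1}$, both of which give $\ge$ for free; the forecaster can reconstruct $\bepsilon_{1:t-1}$ from the observed $y_{1:t-1}$, so its Bayes response $\hy_t=\mu_t(\bepsilon_{1:t-1})$ is history-measurable as required. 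With that adjustment your sketch is a correct self-contained version of the paper's appeal to \cite[Lemma~4]{rakhlin2014online}.
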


\section*{Acknowledgements} We acknowledge support from the Simons Foundation and the NSF through awards DMS-2031883 and PHY-2019786, as well as support from the DARPA AIQ program.

\bibliographystyle{alpha}
\bibliography{References.bib}

\newcommand{\etalchar}[1]{$^{#1}$}
\begin{thebibliography}{ABDCBH97}

\bibitem[AB00]{anthony2000function}
Martin Anthony and Peter~L Bartlett.
\newblock Function learning from interpolation.
\newblock {\em Combinatorics, Probability and Computing}, 9(3):213--225, 2000.

\bibitem[ABDCBH97]{alon1997scale}
Noga Alon, Shai Ben-David, Nicolo Cesa-Bianchi, and David Haussler.
\newblock Scale-sensitive dimensions, uniform convergence, and learnability.
\newblock {\em Journal of the ACM (JACM)}, 44(4):615--631, 1997.

\bibitem[BBM05]{bartlett2005local}
Peter~L. Bartlett, Olivier Bousquet, and Shahar Mendelson.
\newblock {Local Rademacher complexities}.
\newblock {\em The Annals of Statistics}, 33(4):1497 -- 1537, 2005.

\bibitem[BCD{\etalchar{+}}22]{brukhim2022characterization}
Nataly Brukhim, Daniel Carmon, Irit Dinur, Shay Moran, and Amir Yehudayoff.
\newblock A characterization of multiclass learnability.
\newblock In {\em 2022 IEEE 63rd Annual Symposium on Foundations of Computer
  Science (FOCS)}, pages 943--955. IEEE, 2022.

\bibitem[BDPSS09]{ben2009agnostic}
Shai Ben-David, D{\'a}vid P{\'a}l, and Shai Shalev-Shwartz.
\newblock Agnostic online learning.
\newblock In {\em COLT}, volume~3, page~1, 2009.

\bibitem[BLW94]{bartlett1994fat}
Peter~L Bartlett, Philip~M Long, and Robert~C Williamson.
\newblock Fat-shattering and the learnability of real-valued functions.
\newblock In {\em Proceedings of the seventh annual conference on Computational
  learning theory}, pages 299--310, 1994.

\bibitem[DSS14]{DanShaSh14}
Amit Daniely and Shai Shalev-Shwartz.
\newblock Optimal learners for multiclass problems.
\newblock In Maria~Florina Balcan, Vitaly Feldman, and Csaba Szepesvári,
  editors, {\em Proceedings of The 27th Conference on Learning Theory},
  volume~35 of {\em Proceedings of Machine Learning Research}, pages 287--316,
  Barcelona, Spain, 13--15 Jun 2014. PMLR.

\bibitem[Haa81]{haagerup1981best}
Uffe Haagerup.
\newblock The best constants in the khintchine inequality.
\newblock {\em Studia Mathematica}, 70(3):231--283, 1981.

\bibitem[HMZ23]{hanneke2023universal}
Steve Hanneke, Shay Moran, and Qian Zhang.
\newblock Universal rates for multiclass learning.
\newblock In {\em The Thirty Sixth Annual Conference on Learning Theory}, pages
  5615--5681. PMLR, 2023.

\bibitem[KS94]{kearns1994efficient}
Michael~J Kearns and Robert~E Schapire.
\newblock Efficient distribution-free learning of probabilistic concepts.
\newblock {\em Journal of Computer and System Sciences}, 48(3):464--497, 1994.

\bibitem[Lit88]{littlestone1988learning}
Nick Littlestone.
\newblock Learning quickly when irrelevant attributes abound: A new
  linear-threshold algorithm.
\newblock {\em Machine learning}, 2:285--318, 1988.

\bibitem[Nat89]{natarajan1989learning}
Balas~K Natarajan.
\newblock On learning sets and functions.
\newblock {\em Machine Learning}, 4:67--97, 1989.

\bibitem[NT88]{natarajan1988two}
Balas~K Natarajan and Prasad Tadepalli.
\newblock Two new frameworks for learning.
\newblock In {\em Machine Learning Proceedings 1988}, pages 402--415. Elsevier,
  1988.

\bibitem[QRZ24]{qian2024refined}
Jian Qian, Alexander Rakhlin, and Nikita Zhivotovskiy.
\newblock Refined risk bounds for unbounded losses via transductive priors.
\newblock {\em arXiv preprint arXiv:2410.21621}, 2024.

\bibitem[RS14]{rakhlin2014online}
Alexander Rakhlin and Karthik Sridharan.
\newblock Online non-parametric regression.
\newblock In {\em Conference on Learning Theory}, pages 1232--1264. PMLR, 2014.

\bibitem[RS15a]{RakSri2015Steele}
Alexander Rakhlin and Karthik Sridharan.
\newblock Combinatorial dimensions, uniform convergence, and prediction, 2015.
\newblock Conference for J. Michael Steele's 65th birthday.

\bibitem[RS15b]{rakhlin2015martingale}
Alexander Rakhlin and Karthik Sridharan.
\newblock {\em On Martingale Extensions of Vapnik--Chervonenkis Theory with
  Applications to Online Learning}, pages 197--215.
\newblock Springer International Publishing, 2015.

\bibitem[RST10]{rakhlin2010online}
Alexander Rakhlin, Karthik Sridharan, and Ambuj Tewari.
\newblock Online learning: Random averages, combinatorial parameters, and
  learnability.
\newblock {\em Advances in Neural Information Processing Systems}, 23, 2010.

\bibitem[RST15]{rakhlin2015sequentialcomplexity}
Alexander Rakhlin, Karthik Sridharan, and Ambuj Tewari.
\newblock Sequential complexities and uniform martingale laws of large numbers.
\newblock {\em Probability theory and related fields}, 161:111--153, 2015.

\bibitem[RV06]{rudelson2006combinatorics}
Mark Rudelson and Roman Vershynin.
\newblock Combinatorics of random processes and sections of convex bodies.
\newblock {\em Annals of Mathematics}, pages 603--648, 2006.

\bibitem[VC71]{VapChe71}
V.~N. Vapnik and A.~Ya. Chervonenkis.
\newblock On the uniform convergence of relative frequencies of events to their
  probabilities.
\newblock {\em Theory of Probability and its Applications}, 16(2):264--280,
  1971.

\end{thebibliography}

\newpage
\appendix

\section{Missing Proofs in \pref{sec:non-sequential}}
\subsection{Proofs of \pref{lem: non-sequential-integer}}\label{sec: non-sequential-integer-app}
\begin{proof}[Proof of \pref{lem: non-sequential-integer}]
	Without loss of generality we assume $\calX = \{x_1, x_2, \ldots, x_n\}$, since the largest subset of $\{x_1, \ldots, x_n\}$ shattered by $\calF$ is also a subset of $\calX$ shattered by $\calF$. In the following, we only need to consider cases where 
	\begin{equation}\label{eq: d-n-inequaltiy}
		4\dnon(\calF, \alpha)\log(eMn)\le n.
	\end{equation}
	In cases where \pref{eq: d-n-inequaltiy} fails, by taking the covering of all functions which map $\{x_{1:n}\}$ to $[M]$, we have the covering number estimate 
    $$\calNnon_\infty(\calF, x_{1:n}, \alpha)\le M^{n} = \exp(n\cdot \log M)\le \exp(4d(\calF, \alpha) \log(eMn)\cdot \log M) \le \exp(16d(\calF, \alpha)\log^2(eMn)).$$
	
	In the following, we let $d = d(\calF, \alpha)$. We will use an approach similar to that in \cite{alon1997scale} to prove \pref{lem: non-sequential-integer}. We define the packing number $\calM_\infty(\calF, x_{1:n}, \alpha)$ of class $\calF$ under design $x_{1:n}$ at scale $\alpha$: we say $\tcalF\subseteq \calF$ is a packing if for any $f, f'\in \tcalF$, 
	$$\max_{t\in [n]} \frakc(f(x_t), f'(x_t)) \ge \alpha,$$
	and we let $\calM_\infty(\calF, x_{1:n}, \alpha)$ be the size of the largest packing of class $\calF$. Then according to covering-packing inequality we have 
	$$\calNnon_\infty(\calF, x_{1:n}, \alpha)\le \calM_\infty(\calF, x_{1:n}, \alpha).$$
	Hence, it suffices to prove
	\begin{equation}\label{eq: packing-shattering}
		\log \calM_\infty(\calF, x_{1:n}, \alpha)\le 16d\log^2(eMn).
	\end{equation}
	
	For set $X = (x_1, \ldots, x_l)\subseteq \{x_{1:l}\}$, and tuple $\bs = (s_1, s_2, \ldots, s_l)$ where $s_t = (s_t[-1], s_t[1])\in [M]\times [M]$ for any $t\in [l]$, we say the pair $(X, \bs)$ is shattered by $\calF$ if the following two properties both hold:
	\begin{enumerate}[label=(\alph*)]
		\item for any $t\in [l]$, $\frakc(s_t[-1], s_t[-1]) \ge \alpha$;
		\item for any $\bepsilon = (\epsilon_{1:l})\in \{-1, 1\}^l$, there exists $f^\bepsilon\in \calF$ such that $f^{\bepsilon}(x_t) = s_t[\epsilon_t]$ for any $t\in [l]$.
	\end{enumerate}
	We define function 
	\begin{align*} 
		t(h, l)\coloneqq & \min_{\tcalX\subseteq \calX, |\tcalX| = l}\max\{k: \forall\text{ } F\subseteq \calF \text{ such that }|F| = h\text{ and } \forall f, f'\in F,\ \max_{x\in \tcalX} \frakc(f(x), f'(x)) \ge \alpha,\\
		&\qquad \qquad F \text{ shatters at least }k\text{ pairs }(X, \bs)\text{ where }X\subseteq \tcalX\}. \numberthis \label{eq: def-t-function}
	\end{align*}
	(if no packing of size $h$ exists, $t(h, l)$ is defined to be infinity). According to the definition of $d = \dnon(\calF, \alpha)$, if $(X, \bs)$ is shattered by $\calF$, then we have $|X|\le d$. Additionally, for fixed $X$, there can be at most $(M^2)^{|X|}$ choices of $\bs$ such that $(X, \bs)$ is shattered by $\calF$. Therefore, by choosing $h$ to be the size of largest packing, i.e. $\calM_\infty(\calF, x_{1:n}, \alpha)$, the number of pairs $(X, \bs)$ shattered by the largest packing is no more than the number of pairs $(X, \bs)$ shattered by $\calF$, which implies that
	\begin{equation}\label{eq: t-inequality-1}
		t(\calM_\infty(\calF, x_{1:n}, \alpha), n)\le \sum_{i=0}^d \binom{n}{i}\cdot M^{2i}.
	\end{equation}
	It is sufficient to argue that for any  $r + 1\le l\le n$, 
	\begin{equation}\label{eq: non-sequential-t-covering}
		t(2(2lM^2)^r, l) > 2^r.
	\end{equation}
	Indeed, with \pref{eq: non-sequential-t-covering} holding, it is easy to see from the definition of function $t$ in \pref{eq: def-t-function} that for $h_1\le h_2$ and any $l$, $t(h_1, l)\le t(h_2, l)$. Notice that according to \pref{eq: d-n-inequaltiy},
	$$n\ge 4d\log(eMn)\ge \log_2\left\lfloor \sum_{i=1}^d \binom{n}{i}M^{2i}\right\rfloor + 2.$$
	Hence, if
	$$\calM_\infty(\calF, x_{1:n}, \alpha) > 2(2nM^2)^{\log_2\lfloor \sum_{i=1}^d \binom{n}{i}M^{2i}\rfloor + 1},$$
	we have 
	\begin{align*} 
		t(\calM_\infty(\calF, x_{1:n}, \alpha), n) & \ge t\left(2(2nM^2)^{\log_2\lfloor \sum_{i=1}^d \binom{n}{i}M^{2i}\rfloor + 1}, n\right)\\
		& \ge 2^{\log_2\lfloor \sum_{i=1}^d \binom{n}{i}M^{2i}\rfloor + 1} > \sum_{i=1}^d \binom{n}{i}M^{2i},
	\end{align*}
	which contradicts \pref{eq: t-inequality-1}. Therefore, 
	$$\log \calNnon_\infty(\calF, x_{1:n}, \alpha)\le \log\calM_\infty(\calF, x_{1:n}, \alpha)\le \log\left(2(2nM^2)^{\log_2\lfloor \sum_{i=1}^d \binom{n}{i}M^{2i}\rfloor + 1 }\right)\le 16d\log^2(eMn). $$

	In the remainder of the proof, we argue that  \pref{eq: non-sequential-t-covering} holds by induction. We will verify the following two properties:
	\begin{align*}
		t(2, l)\ge 1 & \qquad \forall l\ge 1,\numberthis \label{eq: t-covering-property-1}\\
		t(2lM^2\cdot 2m, l)\ge 2\cdot t(2m, l-1) & \qquad \forall m\ge 1, l\ge 2. \numberthis \label{eq: t-covering-property-2}
	\end{align*}
	We first verify \pref{eq: t-covering-property-1}. For $\tcalX\subseteq \calX$ and any packing $F\subseteq \calF$ with $|F| = 2$, there exists $f_1, f_2\in F$ such that 
	$$\max_{x\in \tcalX} \frakc(f_1(x), f_2(x)) \ge \alpha.$$
	We let $\tx\in \tcalX$ to be the one which takes the maximum in the above inequality, and let $X = (\tx)\subseteq \calX$ and $\bs = \{s_1\}$ with $s_1 = (f_1(x), f_2(x))\in [M]\times [M]$, then $(X, \bs)$ is shattered by $\calF$. Therefore, $t(2, l)\ge 1$. 
 
	We next verify \pref{eq: t-covering-property-2}. Without loss of generality we assume the minimum over $\tcalX\subseteq \calX$ with $|\tcalX| = l$ in \pref{eq: def-t-function} is achieved by $\tcalX = \{x_{1:l}\}$. Suppose packing $F\subseteq \calF$ satisfies $|F|\ge 4lM^2\cdot m$. We arbitrarily pair up functions in $F$ to form:
	$$F = \bigcup_{t=1}^{2lM^2m} \{f_t, g_t\}.$$
	For any $t\in [2lM^2m]$, since $F$ is a packing, there exists $x[t]\in \{x_{1:l}\}$ such that 
	$$\frakc(f_t(x[t]), g_t(x[t])) \ge \alpha.$$
	Next, for any $x\in \{x_{1:l}\}$ and $i, j\in [M]$ with $\frakc(i, j) \ge \alpha$, we define the set 
	$$\calT(x, i, j) = \{t\in [2lM^2m]: x[t] = x\text{ and }f_t(x[t]) = i,\ g_t(x[t]) = j\}.$$
	Then we have 
	$$\bigcup_{\substack{x\in \{x_{1:l}\}\\i, j\in M, \frakc(i, j)\ge \alpha}} \calT(x, i, j) = [2lM^2m].$$
	According to the pigeonhole principle, there exists some $x_{t^\star}\in \{x_{1:l}\}$ and $i^\star, j^\star\in [M]$ with $\frakc(i^\star, j^\star) \ge \alpha$ such that $|\calT(x_{t^\star}, i^\star, j^\star)|\ge 2m$. We define two function classes $F_1, F_2\subseteq  F$ as 
	$$F_1 = \{f_t: t\in \calT(x_t^\star, i^\star, j^\star)\}\quad \text{and}\quad F_2 = \{g_t: t\in \calT(x_t^\star, i^\star, j^\star)\}.$$
	Then we have $|F_1| = |F_2|\ge 2m$. Since functions in $F_1$ all take value $i^\star$ at $x_{t^\star}$, $F_1$ is a packing under design $\{x_{1:l}\}\backslash \{x_{t^\star}\}$. Hence, there exists a set $V$ consisting of pairs $(X, \bs)$ shattered by class $F_1$, and $|V|\ge t(2m, l-1)$. Since functions in $F_1$ takes the same value at $x_{t^\star}$, for any $(X, \bs)\in V$, $x_{t^\star}\not\in X$. Similarly, there exists a set $U$ shattered by $F_2$ with $|U|\ge t(2m, l-1)$, and for any $(X, \bs)\in U$, $x_{t^\star}\not\in X$. Since $F_1\subseteq  F$ and $F_2\subseteq  F$, any pairs $(X, \bs)$ in $U\cup V$ are also shattered by $F$. Additionally, for any pair $(X, \bs)\in U\cap V$, we construct a new pair $(X\cup (x_{t^\star}), \bs\cup (i^\star, j^\star))\not\in U\cup V$. Since $\frakc(i^\star, j^\star) \ge \alpha$, this pair is also shattered by $F$. Hence $F$ shatters at least 
	$$|U\cap V| + |U\cup V| = |U| + |V| = 2t(2m, l-1)$$
	pairs, which implies that $t(4lM^2m, l)\ge 2t(2m, l-1)$. This completes the proof of \pref{eq: t-covering-property-2}.

	With \pref{eq: t-covering-property-1} and \pref{eq: t-covering-property-2}, when $r\le l-1$ we have 
	$$t(2 (2lM^2)^r, l)\ge 2^r t(2, l-r)\ge 2^r.$$
	which verifies \pref{eq: non-sequential-t-covering}.
\end{proof}

\subsection{Proof of \pref{prop: non-covering-fat}}\label{sec: non-covering-fat-proof}
\begin{proof}[Proof of \pref{prop: non-covering-fat}]
	We define distance $\frakc': [M]\times [M]\to \RR_+\cup\{0\}$:
	\begin{equation}\label{eq: def-cprime-c-1}
		\frakc'(a, b) = \frakc(u_a, u_b).
	\end{equation}
	For any $f\in \calF$, since $f$ maps $\calX$ into $[-1, 1]$, we define $\bar{f}: \calX\to [M]$ to be an arbitrary function mapping $\calX$ into $[M]$, which satisfies that for any $x\in \calX$, $\frakc(f(x), u_{\barf(x)})\le \beta$. We construct function class $\barF = \{\barf: f\in \calF\}\subseteq \{\calX\to [M]\}$. We use $\dnon_{\frakc'}(\barF, \alpha)$ to denote the non-sequential gapped dimension (defined in \pref{def: non-sequential-finite}) of integer-valued function class $\barF$ at scale $\alpha$ under distance $\frakc'$, and for simplicity we let $d = \dnon_{\frakc'}(\barF, \alpha)$. Suppose $x_{1:d}\in \calX^d$ is shattered by $\calF$. Then there exists $\bars_1 = (\bars_1[-1], \bars_1[1]), \ldots, \bars_d = (\bars_d[-1], \bars_d[1])$ such that for any $\bepsilon\in \{-1, 1\}^d$ and $t\in [d]$, $\frakc'(\bars_t[-1], \bars_t[1]) \ge \alpha$, and also for any $\bepsilon\in \{-1, 1\}^d$, there exists $\barf^{\bepsilon}\in \barF$ such that $\barf^\bepsilon(x_t) = \bars_t[\epsilon_t]$ for any $t\in [d]$. Notice from the definition of $\calF$ that there exists some $f^\bepsilon\in \calF$ such that $\barf^\bepsilon = \overline{(f^\bepsilon)}$.

	We next verify that $x_{1:n}$ is also shattered by $\calF$ at scale $(\alpha, \beta)$, according to the definition \pref{def: non-sequential-real}. We construct $s_1 = (s_1[-1], s_1[1]), \ldots, s_d = (s_d[-1], s_d[1])$ according to $\bars_{1:d}$ as follows:
	$$s_t[-1] = u_{\bars_t[-1]}\quad \text{and}\quad s_t[1] = u_{\bars_t[1]}.$$ 
	Then according to the definition of $\frakc'$ in \pref{eq: def-cprime-c-1}, we have for any $t\in [d]$,
	$$\frakc(s_t[-1], s_t[1]) = \frakc'(\bars_t[-1], \bars_t[1]) \ge \alpha.$$
	Additionally, for any $\bepsilon\in \{-1, 1\}^d$, there exists some $f^\bepsilon\in \calF$ such that $\overline{(f^\bepsilon)}(x_t) = \bars_t[\epsilon_t]$ for any $t\in [d]$, which implies that 
	$$\frakc(f(x_t), s_t[\epsilon_t]) = \frakc(f(x_t), u_{\bars_t[\epsilon_t]}) = \frakc(f(x_t), u_{\overline{(f^\bepsilon)}(x_t)})\le \beta,\quad \forall t\in [d],$$
	where the last inequality follows from the definiton of $\bar{f}$. Therefore, $x_{1:d}$ is shattered by $\calF$ at scale $(\alpha, \beta)$, and, according to \pref{def: non-sequential-real}, we have $\dnon(\calF, \alpha, \beta)\ge d$. 

	Next, we will upper bound the non-sequential covering number of $\calF$ in terms of $d$. According to \pref{lem: non-sequential-integer}, for any $x_{1:n}\in \calX^n$ there exists a non-sequential covering $\barV$ of $\barF$ with size no more than $\exp\left(16\dnon\log^2(enM)\right)$. Hence, for any $f\in \calF$, there exists some $\barbv = (\barv_{1:n})\in \barV$ which satisfies
	$$\frakc'(\barf(x_t), \barv_t)\le \alpha\qquad \forall t\in [n],$$
	which implies that 
	$$\frakc(f(x_t), u_{\barv_t})\le \frakc(f(x_t), u_{\bar{f}(x_t)}) + \frakc(u_{\bar{f}(x_t)}, u_{\barv_t})\le \beta + \alpha\qquad\forall t\in [n],$$
	where the first inequality uses the triangle inequality of $\frakc$, and the second inequality uses the definition of function $\bar{f}$. For every $\barbv\in \barV$, we construct $\bv^{\barbv} = (v_{1:n}^{\barbv})\in [-1, 1]^n$: for any $t\in [n]$, $v^{\barbv}_t = u_{\barv_t}$. We further let $\calV = \{\bv^{\barbv}: \barbv\in \barV\}$. Then $\calV$ is an $(\alpha + \beta)$-cover of $\calF$ on $x_{1:n}$, which implies 
	$$\log \calNnon_\infty(\calF, \bx, \alpha + \beta)\le \log|\calV| = \log|\barV|\le 16\dnon\log^2(enM)\le 16\dnon(\calF, \alpha, \beta)\log^2(enM).$$
\end{proof}

\subsection{Missing Proofs in \pref{sec: non-sequential-relation}}\label{sec: f-d-proof-non}
\begin{proof}[Proof of \pref{prop: d-less-f-non}]
	We only need to prove that if $x_{1:d}\in \calX^d$ is shattered by $\calF$ at scale $(\alpha, \beta)$ according to \pref{def: non-sequential-real}, then $x_{1:d}$ is shattered by $\calF$ at scale $\alpha - 2\beta$ according to the traditional notion of shattering as in \pref{def: nonsequential-fat}.

	If $x_{1:d}$ is shattered by $\calF$ at scale $(\alpha, \beta)$ according to \pref{def: non-sequential-real}, then there exists $s_{1:d}$ where $s_t = (s_t[-1], s_t[1])\in [-1, 1]\times [-1, 1]$ for any $t\in [d]$, such that $s_t[1] - s_t[-1] \ge \alpha$ for any $t\in [d]$, and also for any $\bepsilon\in \{-1, 1\}^d$, there exists a function $f^{\bepsilon}\in \calF$ which satisfies $|f^\bepsilon(x_t) - s_t[\epsilon_t]|\le \beta$ for any $t\in [d]$. We let 
	$$v_t = \frac{s_t[-1] + s_t[1]}{2},\quad \forall t\in [n].$$
	Since $\alpha > 2\beta$, we have for any $\bepsilon\in \{-1, 1\}^d$, 
	$$\epsilon_t\cdot \left(f^{\bepsilon}(x_t) - v_t\right)\ge \frac{\alpha}{2} - \beta,$$
	which implies that $x_{1:d}$ is shattered by $\calF$ at scale $\alpha - 2\beta$, according to \pref{def: nonsequential-fat}.
\end{proof}
\begin{proof}[Proof of \pref{prop: f-less-d-non}]
	Let $d = \vc(\calF, \alpha)$, then there exists $x_{1:d}$ shattered by $\calF$ according to \pref{def: nonsequential-fat}. Hence there exists $v_{1:d}\in [-1, 1]^d$ and also function $f^\bepsilon\in \calF$ for each $\bepsilon\in \{-1, 1\}^d$, such that for any $\bepsilon\in \{-1, 1\}^d$, 
	$$\epsilon_t\cdot (f^\bepsilon(x_t) - v_t)\ge \frac{\alpha}{2}.$$
	Further notice for every $\bepsilon\neq \bepsilon'$, there exists $t\in [d]$ such that $\epsilon_t\neq \epsilon'_t$, which implies that for this specific $t$,
	$$|f^\bepsilon(x_t) - f^{\bepsilon'}(x_t)|\ge \alpha.$$
	Hence $\{f^\bepsilon: \bepsilon\in \{-1, 1\}^d\}$ forms a $\alpha$-packing of $\calF$ under $\ell_\infty$-norm under design $x_{1:d}$. Therefore, the covering-packing duality indicates that
	\begin{equation}\label{eq: N-lower-bound-non}
		\calNnon_\infty(\calF, x_{1:\vc(\calF, \alpha)}, \alpha/3)\ge 2^d = 2^{\vc(\calF, \alpha)}.
	\end{equation}

	Next according to \pref{prop: non-covering-fat}, we have for any $n$ and $x_{1:n}\in \calX^n$,  
	$$\calNnon_\infty(\calF, x_{1:n}, \alpha + \beta)\le 16\dnon(\calF, \alpha, \beta)\log^2\left(\frac{2en}{\beta}\right).$$
	Hence by replacing $\alpha$ in \pref{eq: N-lower-bound-non} with $3(\alpha + \beta)$, and choosing $n$ to be $\vc(\calF, 3(\alpha + \beta))$, we obtain that
	\begin{align*} 
            \log 2\cdot \vc(\calF, 3(\alpha + \beta)) & \le \sup_{x_{1:\vc(\calF, \alpha + \beta)}}\log\calNnon_\infty(\calF, x_{1:\vc(\calF, 3(\alpha + \beta))}, \alpha + \beta)\\
            & \le 16\dnon(\calF, \alpha, \beta)\log^2\left(\frac{2e\cdot \vc(\calF, 3(\alpha + \beta))}{\beta}\right),
        \end{align*}
	which implies 
	\begin{equation}\label{eq: non-seq-vc-d}
            \vc(\calF, 3(\alpha + \beta))\le 32\dnon(\calF, \alpha, \beta)\log^2\left(\frac{6\vc(\calF, 3(\alpha + \beta))}{\beta}\right).
        \end{equation}
    Additionally, we notice that $\log(x) \le \sqrt{2}\cdot \sqrt[3]{x}$, hence 
    $$\vc(\calF, 3(\alpha + \beta))\le 64\dnon(\calF, \alpha, \beta)\left(\frac{6\vc(\calF, 3(\alpha + \beta))}{\beta}\right)^{2/3},$$
    which implies that 
    $$\vc(\calF, 3(\alpha + \beta))\le 64^3\cdot 6^2\cdot \frac{\dnon(\calF, \alpha, \beta)^3}{\beta^2}.$$
    Bringing this back to \pref{eq: non-seq-vc-d}, we obtain that 
    $$\vc(\calF, 3(\alpha + \beta))\le 288\dnon(\calF, \alpha, \beta)\cdot \log^2\left(\frac{384\dnon(\calF, \alpha, \beta)}{\beta}\right).$$
            
    The second part of \pref{prop: f-less-d-non} is implied by \pref{prop: fat-equivalence} below. Indeed, if $x_{1:d}$ is shattered by $\calF$ at scale $\alpha$ according to \pref{def: fixed-scale dim}, then $x_{1:d}$ is also shattered by $\calF$ at scale $(\alpha, \beta)$ according to \pref{def: non-sequential-real}.
\end{proof}

Finally, we provide the proof of \pref{prop: log-necessary-non}.
\begin{proof}[Proof of \pref{prop: log-necessary-non}]
	Let $d = \lfloor \log(1/\alpha)\rfloor$ and $\calX = \{x_1, \ldots, x_d\}$. For any $\bepsilon = (\epsilon_{1:d})\in \{-1, 1\}^d$, we define 
	$$a^\bepsilon = \alpha + \sum_{i=1}^d \frac{\epsilon_i + 1}{2}\cdot 2^{i-1}\cdot \alpha.$$
	Then for any $\bepsilon$, we have 
	$$\alpha\le a^\bepsilon\le \alpha + \alpha\cdot (2^d - 1)\le 1.$$
	We define function class $\calF = \{f^\bepsilon: \bepsilon\in \{-1, 1\}^d\}$, where $f^\bepsilon: \calX\to [-1, 1]$ is defined as
	$$f^\bepsilon(x_i) = \epsilon_i\cdot a^\bepsilon,\quad \forall 1\le i\le d.$$
	Then it is easy to see that $\{x_1, \ldots, x_d\}$ is shattered by $\calF$ in terms of the classical shattering (defined in \pref{def: nonsequential-fat}).

	Next, we will verify that the non-sequential gapped dimension $d(\calF, \alpha, \beta) = 1$ for any $\beta < \alpha/2$. First of all, it is easy to see that $\{x_1\}$ is shattered by $\calF$, hence $d(\calF, \alpha, \beta) \ge 1$. If there exists a size-two subset $\{x_i, x_j\}$ of $\calX$ shattered by $\calF$, in terms of \pref{def: non-sequential-finite}, then there exists $s_i = (s_i[-1], s_i[1])\in [-1, 1]\times [-1, 1]$ and $s_j = (s_j[-1], s_j[1])\in [-1, 1]\times [-1, 1]$ such that for any $\be = (e_i, e_j)\in \{-1, 1\}\times \{-1, 1\}$, 
	$$\exists\ \bepsilon[\be]\in \{-1, 1\}^d\quad \text{such that}\quad |f^{\bepsilon[\be]}(x_i) - s_i[e_i]|\le \beta\quad \text{and}\quad |f^{\bepsilon[\be]}(x_j) - s_j[e_j]|\le \beta.$$
	Hence, we obtain 
	$$|f^{\bepsilon[(-1, -1)]}(x_i) - f^{\bepsilon[(-1, 1)]}(x_i)|\le 2\beta < \alpha.$$
	According to the construction of function $f^\bepsilon$, we must have $\bepsilon[(-1, -1)] = \bepsilon[(-1, 1)]$, which implies that 
	$$|s_j[1] - s_j[-1]|\le |s_j[1] - f^{\bepsilon[(-1, 1)]}(x_j)| + |s_j[-1] - f^{\bepsilon[(-1, -1)]}(x_j)|\le \beta + \beta < \alpha.$$
	This violates the definition of shattering in \pref{def: non-sequential-finite}. Therefore, we have verified that $d(\calF, \alpha, \beta)\le 1$ for any $\beta < \alpha/2$.
\end{proof}

\subsection{Properties of Fixed-Scale Scale-Sensitive Dimension}
We consider the following fixed-scale scale-sensitive dimension; the only modification with respect to \pref{def: nonsequential-fat} is that the inequality is turned into an equality. In terms of \pref{fig:shattering}, the requirement of shattering states that the vertices of the hypercube with side-length $\alpha$ are in the set.
\begin{definition}[Fixed-Scale Dimension]\label{def: fixed-scale dim}
	Given a function class $\calF\subseteq  \{\calX\to [-1, 1]\}$, we say that a set $\{x_1, x_2, \ldots, x_d\}\subseteq \calX$ is shattered by $\calF$ at scale $\alpha > 0$, if there exists $s_1, \ldots, s_d\in [-1, 1]$ such that for any $\bepsilon = (\epsilon_{1:d})\in \{-1, 1\}^d$, there exists some $f^\bepsilon\in \calF$ such that 
	$$\epsilon_t\cdot (f^\bepsilon(x_t) - s_t) = \frac{\alpha}{2},\qquad \forall t\in [d].$$
	The fixed-scale dimension $\vc_\fix(\calF, \alpha)$ of $\calF$ at scale $\alpha$ is the largest $d$ such that there exists a size-$d$ subset of $\calX$ shattered by $\calF$. 
\end{definition}

We have the following proposition showing that if the function class $\calF$ is convex, then the scale-sensitive dimensions defined in \pref{def: nonsequential-fat} and in \pref{def: fixed-scale dim} coincide.
\begin{proposition}\label{prop: fat-equivalence}
	If function class $\calF$ is convex, then for any $\alpha > 0$, $\vc(\calF, \alpha) = \vc_\fix(\calF, \alpha)$.
\end{proposition}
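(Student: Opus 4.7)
The inequality $\vc_\fix(\calF, \alpha) \le \vc(\calF, \alpha)$ is immediate from the definitions, since equality in \pref{def: fixed-scale dim} is strictly stronger than the $\ge$ condition in \pref{def: nonsequential-fat}. For the reverse direction, the plan is to show that whenever $\{x_1, \ldots, x_d\}$ is shattered by $\calF$ at scale $\alpha$ with witnesses $s_1, \ldots, s_d$ and functions $\{f^{\bepsilon'}\}_{\bepsilon'\in\{-1,1\}^d}\subseteq \calF$ per \pref{def: nonsequential-fat}, then the same set and the same witnesses work for \pref{def: fixed-scale dim}; that is, for each $\bepsilon$ I will produce a function $g^\bepsilon\in\calF$ satisfying $g^\bepsilon(x_t)=s_t+\epsilon_t\alpha/2$ exactly for every $t$.

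The crux is a purely geometric lemma which I would prove first: if $y^{\bepsilon'}\in\reals^d$ are vectors indexed by $\bepsilon'\in\{-1,1\}^d$ with $\epsilon'_t(y^{\bepsilon'}_t-s_t)\ge \alpha/2$ for all $t$, then for every $\bepsilon$, the corner $s+(\alpha/2)\bepsilon$ lies in $\mathrm{conv}\{y^{\bepsilon'}:\bepsilon'\in\{-1,1\}^d\}$. After translating and rescaling so that $s=0$ and $\alpha=2$, this reduces to the claim that $\bepsilon\in\mathrm{conv}\{y^{\bepsilon'}\}$ whenever $\epsilon'_t y^{\bepsilon'}_t\ge 1$ for all $t,\bepsilon'$. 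I would argue by contradiction using the separating hyperplane theorem: if $\bepsilon\notin\mathrm{conv}\{y^{\bepsilon'}\}$, there exists $\theta\in\reals^d$ with $\langle\theta,\bepsilon\rangle > \langle\theta,y^{\bepsilon'}\rangle$ for every $\bepsilon'$. Taking the specific $\bepsilon^\star$ defined by $\epsilon^\star_t=\sign(\theta_t)$ (breaking ties arbitrarily), one has $\theta_t y^{\bepsilon^\star}_t=|\theta_t|\cdot(\epsilon^\star_t y^{\bepsilon^\star}_t)\ge |\theta_t|$, whence summing over $t$ gives $\langle\theta,y^{\bepsilon^\star}\rangle\ge \sum_t|\theta_t|\ge \langle\theta,\bepsilon\rangle$, contradicting the separation.

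With the lemma in hand, apply it to the projection vectors $y^{\bepsilon'}=(f^{\bepsilon'}(x_1),\ldots,f^{\bepsilon'}(x_d))$: for each target $\bepsilon$, the point $(s_t+\epsilon_t\alpha/2)_{t=1}^d$ lies in the convex hull of these projections, so there exist coefficients $\lambda_{\bepsilon'}\ge 0$ summing to $1$ such that $\sum_{\bepsilon'}\lambda_{\bepsilon'}f^{\bepsilon'}(x_t)=s_t+\epsilon_t\alpha/2$ for every $t$. Setting $g^\bepsilon=\sum_{\bepsilon'}\lambda_{\bepsilon'}f^{\bepsilon'}$, convexity of $\calF$ gives $g^\bepsilon\in\calF$, yielding the desired exact shattering. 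The range condition $s_t+\epsilon_t\alpha/2\in[-1,1]$ is automatic, because $f^{\bepsilon}(x_t)\in[-1,1]$ combined with $\epsilon_t(f^{\bepsilon}(x_t)-s_t)\ge\alpha/2$ already forces $s_t\pm\alpha/2\in[-1,1]$. The main obstacle is the geometric lemma: natural alternatives such as inducting on $d$ or explicitly constructing the convex-combination coefficients become messy, whereas the separation argument sidesteps all of this by exploiting the sign-pattern assumption through the single clever choice $\bepsilon^\star=\sign(\theta)$.
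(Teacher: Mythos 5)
Your proof is correct, but it takes a genuinely different route from the paper's. The paper argues iteratively: starting from the shattering functions $f^{\bepsilon}$ satisfying only the inequalities, it constructs a sequence $f^{\bepsilon}_0, f^{\bepsilon}_1, \dots, f^{\bepsilon}_d$ where $f^{\bepsilon}_{i+1}$ is an explicit two-point convex combination of $f^{\bepsilon}_i$ and its sibling $f^{\tilde\bepsilon}_i$ (with $\tilde\bepsilon$ flipping only coordinate $i{+}1$), chosen so that the $(i{+}1)$-st coordinate becomes exact while coordinates $1,\dots,i$ remain exact and the rest keep the inequality. Your argument is instead a one-shot application of a convex-geometry lemma: after normalization, you show that any vertex $\bepsilon$ of the cube lies in $\mathrm{conv}\{y^{\bepsilon'}\}$ by separation, with the single well-chosen direction $\bepsilon^\star = \sign(\theta)$ forcing $\langle\theta, y^{\bepsilon^\star}\rangle \ge \|\theta\|_1 \ge \langle\theta,\bepsilon\rangle$ and contradicting the separating hyperplane. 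I verified this step and it is sound (the convex hull of the $2^d$ points is compact, so strict separation is available, and the tie-breaking when $\theta_t = 0$ is harmless). The tradeoff: the paper's iterative construction is fully elementary (only two-point convex combinations, no separation theorem) and produces the correcting coefficients $\alpha^{\bepsilon}$ explicitly; your approach is shorter and more conceptual, isolating the cube-in-convex-hull fact as a reusable geometric lemma, at the cost of invoking the separating hyperplane theorem and giving only an existential convex combination. Both correctly exploit convexity of $\calF$ to push the projections onto the exact cube vertices.
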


\begin{proof}[Proof of \pref{prop: fat-equivalence}]
	According to \pref{def: nonsequential-fat} and \pref{def: fixed-scale dim}, we have $\vc(\calF, \alpha)\ge \vc_\fix(\calF, \alpha)$. Hence we only need to prove that $\vc_\fix(\calF, \alpha)\ge \vc(\calF, \alpha)$. In the following, we will show that if $\{x_1, \ldots, x_d\}$ is shattered by $\calF$ at scale $\alpha$ under witness $s_1, \ldots, s_d$ under \pref{def: nonsequential-fat}, then  $\{x_1, \ldots, x_d\}$ is shattered by $\calF$ at scale $\alpha$ under witness $s_1, \ldots, s_d$ under \pref{def: fixed-scale dim}.

	Since $\{x_1, \ldots, x_d\}$ is shattered by $\calF$ at scale $\alpha$ under witness $s_1, \ldots, s_d$ under \pref{def: nonsequential-fat}, for any $\bepsilon\in \{-1, 1\}^d$ there exists $f^\bepsilon$ such that
	$$\epsilon_t\cdot \left(f^{\bepsilon}(x_t) - s_t\right)\ge \frac{\alpha}{2}\qquad \forall t\in [d].$$\
	We next iteratively construct $f^\bepsilon_i\in \calF$ for $i = 0, 1, \ldots, d$ such that $f^{\bepsilon}_i$ satisfies
	\begin{align*} 
		\epsilon_t\cdot (f^{\bepsilon_i}(x_t) - s_t) = \frac{\alpha}{2}\quad \text{if }t\le i\qquad \text{and}\qquad\epsilon_t\cdot (f^{\bepsilon_i}(x_t) - s_t) \ge \frac{\alpha}{2} \quad \text{if }t > i. \numberthis\label{eq: f-x-i}
	\end{align*}
	For $i = 0$, we let $f_0^\bepsilon = f^\bepsilon$ and they satisfies conditions in \pref{eq: f-x-i}. Suppose we have constructed $f^\bepsilon_i$, and we will now construct $f^\bepsilon_{i+1}$. For any $\bepsilon = (\epsilon_{1:d})\in \{-1, 1\}^d$, we let 
	\begin{equation}\label{eq: def-btepsilon}
		\btepsilon = (\epsilon_1, \ldots, \epsilon_i, -\epsilon_{i+1}, \epsilon_{i+2}, \ldots, \epsilon_d).
	\end{equation}
	Let
	\begin{equation}\label{eq: f-i+1}
		f^{\bepsilon}_{i+1} = \alpha^\bepsilon\cdot f^{\bepsilon}_i + (1 - \alpha^\bepsilon)\cdot f^{\btepsilon}_i,
	\end{equation}
	where 
	$$\alpha^\bepsilon = \begin{cases}
		\frac{\alpha/2 + s_{i+1} - f^{\btepsilon}(x_{i+1})}{f^{\btepsilon}_i(x_{i+1}) - f^{\bepsilon}_i(x_{i+1})} &\quad \text{if }\epsilon_{i+1} = 1,\\
		\frac{\alpha/2 + f^{\btepsilon}(x_{i+1}) - s_{i+1}}{f^{\btepsilon}_i(x_{i+1}) - f^{\bepsilon}_i(x_{i+1})} &\quad \text{if } \epsilon_{i+1} = -1.
	\end{cases}$$
	According to \pref{eq: f-x-i}, we can verify that $\alpha^{\btepsilon}\in [0, 1]$ for any $\bepsilon$. Therefore $f^{\bepsilon}_{i+1}$ constructed in \pref{eq: f-i+1} belongs to $\calF$ due to the convexity of $\calF$. Next, we will verify that $f^{\bepsilon}_{i+1}$ also satisfies \pref{eq: f-x-i}. For $t \le i$, since
	$$f^\bepsilon_i(x_t) = s_t + \epsilon_t\cdot \frac{\alpha}{2}$$
	according to \pref{eq: f-x-i}, according to our choice of $\btepsilon$ in \pref{eq: def-btepsilon} we have 
	$$\epsilon_t\cdot \left(f^{\bepsilon}_i(x_t) - s_t\right) = \frac{\alpha}{2}.$$
	For $t = i+1$, based on our construction in \pref{eq: f-i+1} and our choice of $\btepsilon$ in \pref{eq: def-btepsilon}, we can calculate that 
	$$\epsilon_t\cdot \left(f^{\bepsilon}_i(x_t) - s_t\right) = \frac{\alpha}{2}.$$
	For $t > i+1$, since
	$$\epsilon_t\cdot (f^\bepsilon_i(x_t) - s_t) \ge \frac{\alpha}{2}$$
	according to \pref{eq: f-x-i}, hence according to our choice of $\btepsilon$ in \pref{eq: def-btepsilon} we have 
	$$\epsilon_t\cdot \left(f^{\bepsilon}_i(x_t) - s_t\right)\ge \frac{\alpha}{2}.$$
	Therefore, $f^{\bepsilon}_{i+1}$ satisfies \pref{eq: f-x-i} as well. By choosing $i = d$, we obtain that there exist $f^\bepsilon_d\in \calF$ such that 
	$$\epsilon_t\cdot (f^\bepsilon_d(x_t) - s_t) = \frac{\alpha}{2},$$
	which implies that $\{x_1, \ldots, x_d\}$ is shattered by $\calF$ at scale $\alpha$ under witness $s_1, \ldots, s_d$ under \pref{def: fixed-scale dim}.
\end{proof}

\subsection{Missing Proofs in \pref{sec:lower_bound_offset_nonsequential}}\label{sec: offset-lower-bound-2-app}
\begin{proof}[Proof of \pref{thm: offset-lower-bound-2}]
	$$\alpha_n = \frac{1}{2}\cdot \argmin_{\alpha > 0}\left\{\dnon(\calF, \alpha, \alpha/(20nC))\cdot \frac{16}{\alpha^2}\le n\right\}.$$
	Since $\dnon(\calF, \alpha, \alpha/(20nC)) = \tOmega\left(\alpha^{-p}\right)$ for every $\alpha\ge 0$, we have
	$$\dnon(\calF, \alpha_n, \alpha_n/(20nC))\wedge \frac{n\alpha_n^2}{4} = \tOmega\left(n^{\frac{p}{p+2}}\right),\qquad \forall n\in \mathbb{Z}_+.$$
	In the following, we will prove that for any positive integer $n$, we have 
	$$\sup_{\bmu, \bx} \EE[\calR_n(\calF, \bmu, \bx, \bepsilon)] = \Omega\left(\dnon(\calF, \alpha_n, \alpha_n/(20nC))\right).$$

	We fix $n$, and, for brevity, write
    $$\alpha = \alpha_n\quad \text{and}\quad d = \dnon(\calF, \alpha_n, \alpha_n/(20nC))\wedge n\alpha^2/4\le \dnon(\calF, \alpha_n, \alpha_n/(20nC)).$$
    Let $\tx_{1:d}\in \calX^d$ be a sequence shattered by $\calF$ in the sense of \pref{def: non-sequential-real}. That is, there exist $s_1 = (s_1[-1], s_1[1]), s_2 = (s_2[-1], s_2[1]), \ldots, s_d = (s_d[-1], s_d[1])\in [-1, 1]\times [-1, 1]$ such that for any $\btepsilon = (\tepsilon_{1:d})\in \{-1, 1\}^d$, there exists $f^{\btepsilon}\in \calF$ such that
	\begin{equation}\label{eq: nonsequential-eq-natarajan}
		\left|f^{\btepsilon}(\tx_t) - s_t[\tepsilon_t]\right| \leq \frac{\alpha}{20(nC)}\quad \text{and}\quad |s_t[1] - s_t[-1]| \ge \alpha\qquad \forall t\in [d].
	\end{equation} 
	Without loss of generality, we assume $s_t[1]\ge s_t[-1]$. We now define $\tmu_{1:d}\in [-1, 1]^d$ as $\tmu_t = (s_t[-1] + s_t[1]) / 2$ and  
	$$k_t = \left\lfloor \frac{1}{(s_t[1] - s_t[-1])^2}\right\rfloor \vee 1\le 4\alpha^{-2}$$
	for any $t\in [d]$. Then we have 
	$$\sum_{t=1}^d k_t \le n.$$
	
    Next, we construct $x_{1:n}$ and $\mu_{1:n}$ with the following block structure:
	\begin{align*} 
		(x_{1:n}) & = (\underbrace{\tx_1, \tx_1, \ldots, \tx_1}_{k_1 \text{ terms}}, \underbrace{\tx_2, \tx_2, \ldots, \tx_2}_{k_2\text{ terms}}, \ldots, \underbrace{\tx_{d-1}, \tx_{d-1}, \ldots, \tx_{d-1}}_{k_{d-1}\text{ terms}}, \underbrace{\tx_d, \tx_d, \ldots, \tx_d}_{n - k_1 - \ldots - k_{d-1}\text{ terms}}),\\
		(\mu_{1:n}) & = (\underbrace{\tmu_1, \tmu_1, \ldots, \tmu_1}_{k_1 \text{ terms}}, \underbrace{\tmu_2, \tmu_2, \ldots, \tmu_2}_{k_2 \text{ terms}}, \ldots, \underbrace{\tmu_{d-1}, \tmu_{d-1}, \ldots, \tmu_{d-1}}_{k_{d-1}\text{ terms}}, \underbrace{s_d[1], s_d[1], \ldots, s_d[1]}_{n - k_1 - \ldots - k_d \text{ terms}}).
	\end{align*}

	Next we write $\calR_n(\calF, \mu_{1:n}, x_{1:n}, \bepsilon)$ in terms of segments $1$ to $d$. For any $\bepsilon\in  \{0, 1\}^n$, we construct $\btepsilon\in \{-1, 1\}^{d}$: 
	$$\tepsilon_t = 2\mathbb{I}\bigg\{\sum_{j=1}^{k_t} \epsilon_{k_1 + \ldots + k_{t-1} + j} \ge 0\bigg\}-1,\qquad \forall t\in [d-1]\quad \text{and}\quad \tepsilon_d = 1.$$
	Recall that $f^{\btepsilon}\in \calF$ for any sequence $\btepsilon\in \{-1, 1\}^d$. Then, using $[t, j]$ to denote $k_1 + \ldots + k_t + j$, and defining the random variable $\hepsilon_t = \frac{1}{k_t} \sum_{j=1}^{k_t} \epsilon_{[t-1, j]}\in [-1, 1]$ for fixed $\bepsilon$, we obtain 
	\begin{align*} 
		&\hspace{-0.5cm} \calR_n(\calF, \mu_{1:n}, x_{1:n}, \bepsilon)\\
		& = \sup_{f\in \calF} \Bigg\{\sum_{t=1}^{d-1}
        \sum_{j=1}^{k_t} C\cdot  \epsilon_{[t-1, j]} (f(x_{[t-1, j]}) - \mu_{[t-1, j]}) - (f(x_{[t-1, j]}) - \mu_{[t-1, j]})^2\\
		&\qquad + \sum_{j=1}^{n - k_1 - \ldots - k_{d-1}} C\cdot  \epsilon_{[d-1, j]} (f(x_{[d-1, j]}) - \mu_{[d-1, j]}) - (f(x_{[d-1, j]}) - \mu_{[d-1, j]})^2\Bigg\}\\
		& \stackrel{(i)}{\ge} \sum_{t=1}^{d-1}\sum_{j=1}^{k_t} C\cdot\epsilon_{[t-1, j]} (f^{\btepsilon}(\tx_t) - \tmu_t) - (f^{\btepsilon}(\tx_t) - \tmu_t)^2 - n\cdot 2C\cdot \frac{\alpha}{nC}\\
		& \ge \sum_{t=1}^{d - 1} k_t\cdot \left(C\cdot\hepsilon_t(f^{\btepsilon}(\tx_t) - \tmu_t) - (f^{\btepsilon}(\tx_t) - \tmu_t)^2\right) - 4, \numberthis \label{eq: nonsequential-r-n-segment}
	\end{align*}
 	where $(i)$ uses the choice $f = f^{\btepsilon}\in \calF$, and also $|f^{\btepsilon}(\tx_d) - s_d[1]|\le \alpha/(20nC) < \alpha/(nC)$ since $\tepsilon_d = 1$, and also the fact that $\alpha\le 2$ due to \pref{eq: nonsequential-eq-natarajan}. According to \pref{eq: nonsequential-eq-natarajan}, we have for any $\btepsilon$, 
	$$|f^{\btepsilon}(\tx_t) - s_t[\tepsilon_t]|\le \frac{\alpha}{20}\quad\text{and}\quad s_t[\tepsilon_t] - \mu_t = \sign(\hepsilon_t)\cdot \frac{s_t[1] - s_t[-1]}{2}.$$
	\pref{eq: nonsequential-eq-natarajan} also gives that $s_t[1] - s_t[-1] \geq \alpha$, which implies
	$$\frac{9}{10}\cdot \frac{s_t[1] - s_t[-1]}{2}\le \sign(\hepsilon_t)(f^{\btepsilon}(\tx_t) - \tmu_t)\le \frac{11}{10}\cdot \frac{s_t[1] - s_t[-1]}{2}.$$
	Therefore we can lower bound \pref{eq: nonsequential-r-n-segment} by 
	$$\calR_n(\calF, \mu_{1:n}, x_{1:n}, \bepsilon) \ge \sum_{t=1}^{d-1} k_t\cdot \left(C |\hepsilon_t|\cdot \frac{9}{10}\cdot \frac{s_t[1] - s_t[-1]}{2} - \frac{121}{100}\cdot\left(\frac{s_t[1] - s_t[-1]}{2}\right)^2\right) - 4.$$ 
	Next, we can further lower bound 
	\begin{align*} 
		&\hspace{-0.5cm} \calR_n(\calF, \mu_{1:n}, x_{1:n}, \bepsilon)\\
		& \ge \sum_{t=1}^{d-1} k_t\cdot \left(C |\hepsilon_t|\cdot \frac{9}{10}\cdot \frac{s_t[1] - s_t[-1]}{2} - \frac{121}{100}\cdot\left(\frac{s_t[1] - s_t[-1]}{2}\right)^2\right) - 4\\
		& = \sum_{t=1}^{d-1} k_t\cdot \left(C \EE\left[|\hepsilon_t|\right]\cdot \frac{9}{10}\cdot \frac{s_t[1] - s_t[-1]}{2} - \frac{121}{100}\cdot\left(\frac{s_t[1] - s_t[-1]}{2}\right)^2\right) - 4\\
		& \ge \sum_{t=1}^{d-1} k_t\cdot \left(C \sqrt{\frac{1}{2k_t}}\cdot \frac{9}{10}\cdot \frac{s_t[1] - s_t[-1]}{2} - \frac{121}{100}\cdot\left(\frac{s_t[1] - s_t[-1]}{2}\right)^2\right) - 4 \numberthis \label{eq: nonsequential-equation-lower-decompose}
	\end{align*}
	where the last inequality uses the Khintchine inequality \cite{haagerup1981best}. According to our choice of $k_t$, 
	$$k_t = \left\lfloor \frac{1}{(s_t[1] - s_t[-1])^2}\right\rfloor \vee 1 \in \left[\frac{1}{2(s_t[1] - s_t[-1])^2}, \frac{4}{(s_t[1] - s_t[-1])^2}\right], $$
	which implies that $1/\sqrt{2}\le \sqrt{k_t} (s_t[1] - s_t[-1])\le 2$. Therefore, since $C\ge 2$, we have 
	\begin{align*}
		\text{RHS of }\pref{eq: nonsequential-equation-lower-decompose} 
		& \ge \sum_{t=1}^{d-1} \left(\frac{9C}{20\sqrt{2}} - \frac{121}{200}\right)\cdot \sqrt{k_t}\cdot |s_t[1] - s_t[-1]| - 4\ge \frac{d-1}{50} - 4.
	\end{align*}
	Therefore, we obtain that 
	$$\sup_{\mu_{1:n}, x_{1:n}} \EE[\calR_n(\calF, \mu_{1:n}, x_{1:n}, \bepsilon)]\ge \frac{d-1}{50} - 4 = \tOmega\left(n^{\frac{p}{p+2}}\right).$$
\end{proof}

\begin{proof}[Proof of \pref{lem:transductive}]
	We first transform $\calV_n(\calF)$ into the following dual form
	\begin{align*}
		\calV_n(\calF) & = \sup_{x_{1:n}\in \calX^n} \left\{\inf_{\hy_t} \sup_{p_t\in \Delta([-2, 2])}\EE_{y_t\sim p_t}\right\}_{t=1}^n\left[\sum_{t=1}^n \left(\hy_t - y_t\right)^2 - \inf_{f\in \calF} \sum_{t=1}^n\left(f(x_t) - y_t\right)^2\right]\\
		& = \sup_{x_{1:n}\in \calX^n} \left\{\sup_{p_t\in \Delta([-2, 2])}\EE_{y_t\sim p_t}\right\}_{t=1}^n \left[\sum_{t=1}^n \left(\EE[y_t\mid y_{1:t-1}] - y_t\right)^2 - \inf_{f\in \calF} \sum_{t=1}^n\left(f(x_t) - y_t\right)^2\right]\\
		& = \sup_{x_{1:n}}\sup_{\bp\in \Delta([-2, 2]^{n})}\EE_{\by\sim \bp} \left[\sup_{f\in \calF}\sum_{t=1}^n 2(y_t - \mu_t(\by))(f(x_t) - \mu_t(\by)) - (f(x_t) - \mu_t(\by))^2\right]
	\end{align*}
	where we use $\mu_t(\by)$ to denote the expectation of $y_t$ conditioned on $y_{1:t-1}$, i.e. $\mu_t(\by) = \EE[y_t\mid y_{1:t-1}]$. Taking $\bp = p_1\otimes p_2\otimes\ldots \otimes p_n$ where $p_t\in \Delta([-2, 2])$ for each $t\in [n]$ in the above equation, we obtain that
	$$\calV_n(\calF)\ge \sup_{x_{1:n}}\sup_{p_{1:n}\in \Delta([-2, 2]} \EE_{y_t\sim p_t}\left[\sup_{f\in \calF} \sum_{t=1}^n 2(y_t - \mu_t)(f(x_t) - \mu_t) - (f(x_t) - \mu_t)^2\right],$$
	where $\mu_t\coloneqq \EE[y_t]$ denotes the expectation of distribution $p_t$.

	We fix $\mu_1, \ldots, \mu_n\in [-1, 1]$, and choose distributions $p_t = \text{Unif}(\{\mu_t - 1, \mu_t + 1\})\subseteq \Delta([-2, 2])$ for all $t\in [n]$. Then we obtain that 
	$$\calV_n(\calF)\ge \sup_{x_{1:n}}\sup_{\mu_{1:n}\in [-1, 1]} \EE_{\epsilon_t}\left[\sup_{f\in \calF} \sum_{t=1}^n 2 \epsilon_t(f(x_t) - \mu_t) - (f(x_t) - \mu_t)^2\right],$$ 
	where $\epsilon_{1:n}\simiid \unif(\{-1, 1\})$.
\end{proof}

\begin{proof}[Proof of \pref{corr: transductive}]
    Since $\sup_{x_{1:n}}\log \calNnon_\infty(\calF, x_{1:n}, \alpha) = \tOmega\left(\alpha^{-p}\right)$, according to \pref{prop: non-covering-fat} we have 
    $$\dnon(\calF, \alpha, \alpha/(40n)) = \tOmega\left(\alpha^{-p}\right).$$
    Next, we call \pref{lem:transductive}, and \pref{thm: offset-lower-bound-2} with $C = 2$. Then we obtain
    $$\calV_n(\calF) = \tOmega\left(n^{\frac{p}{p+2}}\right).$$
\end{proof}

\section{Missing Proofs in \pref{sec:sequential}}\label{sec: lower-bound-proof}
\subsection{Proof of \pref{lem: shattering-finite}}
\begin{proof}[Proof of Lemma \ref{lem: shattering-finite}]
	We first define function
	$$g_M(n, d) = \sum_{i=0}^d \binom{n}{i}\cdot \left(M - 1\right)^i,$$
	which satisfies (see \citep{rakhlin2010online})
	\begin{equation}\label{eq: equation-g-beta}
		g_M(n, d) = g_M(n-1, d) + \left(M - 1\right)\cdot g_M(n-1, d-1).
	\end{equation}

	We will prove the present Lemma by induction with the following induction statement:\\

    \begin{minipage}[c]{0.1\linewidth}
	\hspace{1mm}${\mathfrak G}(d, n)$:\\
	\end{minipage}\hspace{2mm}
	\begin{minipage}[b]{0.85\linewidth}For any function class $\calF\subseteq  \{f:\cX\to[M]\}$ with $\dseq(\calF, \alpha)\le d$, and any depth-$n$ $\calX$-valued tree $\bx$, $\calNseq_\infty(\calF, \bx, \alpha)\le g_M(d, n)$.
	\end{minipage}

	\paragraph{Base: } There are two base cases to consider: $n \le d$ and $d = 0$.

	When $n\le d$, we let 
	$$\calV = \left\{\bv[i_1, i_2, \ldots, i_n]: i_1, \ldots, i_n\in [M]\right\},$$
	where $\bv[i_1, \ldots, i_n]$ denotes the tree with\ values $i_t$ at depth $t$ along any path. Then it is easy to see that for any $f\in \calF$, depth-$n$ $\calX$-valued tree $\bx$, and any path $\bepsilon\in \{-1, 1\}^n$, there exists some $\bv\in \calV$ such that $f(x_t(\bepsilon)) = v_t(\bepsilon)$ for all $t\in [n]$. Hence $\calV$ is an exact (that is, $\alpha=0$) cover of $\calF$ on $\bx$; hence, $\calV$ is also an $\alpha$-sequential covering as well. Thus, we have
	$$\calNseq_\infty(\calF, \bx, \alpha)\le |\calV| = M^n = \sum_{i=0}^d \binom{n}{i}\cdot \left(M - 1\right)^i = g_M(n, d).$$

	When $d = 0$, there is no depth-$1$ $\calX$-valued tree which is shattered by $\calF$ at scale $\alpha$ under the metric $\frakc$. This implies for any $x\in \calX$ and functions $f, f'\in \calF$, we always have $\frakc(f(x), f'(x))\le \alpha$. Indeed, otherwise we can construct depth-$1$ tree $\bx$ with $x_1 = x$ and depth-$1$ tree $\bs$ with $s_1 = (f(x), f'(x)$), and $\bx$ is shattered by $\calF$. We let $f_0\in \calF$ to be an arbitrary function in $\calF$. For any depth-$n$ $\calX$-valued $\bx$, we construct depth-$n$ $[-1, 1]$-valued tree $\bv$ which takes the value $f_0(x_t(\bepsilon))$ at depth $t$ along any path $\bepsilon$. Then for any $f\in \calF$ and path $\bepsilon\in \{-1, 1\}$, we always have $\frakc(f(x_t(\bepsilon)), v_t(\bepsilon)) =  \frakc(f(x_t(\bepsilon)), f_0(x_t(\bepsilon)))\le \alpha$. Hence $\calV$ is an $\alpha$-sequential covering of $\calF$ on $\bx$, and it satisfies $|\calV| = 1 = g_M(n, 0)$.

	\paragraph{Induction: } Suppose the induction hypotheses $\mathfrak{G}(n-1, d-1)$ and $\mathfrak{G}(n-1, d)$ both hold. We will prove induction statement $\mathfrak{G}(n, d)$. For fixed function class $\calF$ with $\dseq(\calF, \alpha) = d$ and depth-$n$ $\calX$-valued tree $\bx$, we will construct a $\alpha$-sequential covering to of $\calF$ on $\bx$ whose size is no more than $g_M(n, d)$. Suppose the root of tree $\bx$ is $x_1\in \calX$, the left subtree of $x_1$ is denoted as $\bx^l$, and the right subtree of $x_1$ is denoted as $\bx^r$. We partition the function class $\calF$ as:
	$$\calF = \calF_1\cup\calF_2\cup\ldots\cup \calF_M\quad \text{where}\quad \calF_k = \{f\in \calF: f(x_1) = k\},\quad \forall 1\le k\le M.$$
	Then we have $\dseq(\calF_k, \alpha)\le \dseq(\calF, \alpha) = d$ for all $k\in [M]$.  We let $\calK = \{k\in [M]: \dseq(\calF_k, \alpha) = d\}$. Then for any $a, b\in \calK$ and $a < b$, there exist depth-$d$ $\calX$-valued trees $\bx^a$ and $\bx^b$, and also depth-$d$ $[M]\times [M]$-valued trees $\bs^a$ and $\bs^b$ such that for any $\bepsilon\in \{-1, 1\}^d$, there exists $f_a^\bepsilon\in \calF_a$ and $f_b^\bepsilon\in \calF_b$ such that for any $t\in [d]$,
	\begin{align*} 
		f_a^\bepsilon(x_t^a(\bepsilon)) = s_t^a(\bepsilon)[\epsilon_t]\quad & \text{and}\quad f_b^\bepsilon(x_t^b(\bepsilon)) = s_t^b(\bepsilon)[\epsilon_t],\\
		\frakc(s_t^a(\bepsilon)[-1], s_t^a(\bepsilon)[1]) \ge \alpha\quad & \text{and}\quad \frakc(s_t^b(\bepsilon)[-1], s_t^b(\bepsilon)[1]) \ge \alpha.
	\end{align*}
	For the sake of a contradiction, suppose it holds that $\frakc(a, b) \ge \alpha$. Then we can construct a depth-$(d+1)$ $\calX$-valued tree $\bx$ with root $x_1$, left subtree of the root being $\bx^a$, and right subtree of the root being $\bx^b$, and also a depth-$(d+1)$ $[M]\times [M]$-valued tree $\bs$ with root $(a, b)\in [M]\times [M]$, left subtree of the root being $\bs^a$, and right subtree of the root being $\bs^b$. Then we can verify that for any $\bepsilon\in \{-1, 1\}^{d+1}$, and any $t\in [d+1]$, we have $s_t(\bepsilon)[-1] < s_t(\bepsilon)[1]$, and $\frakc(s_t(\bepsilon)[-1], s_t(\bepsilon)[1]) \ge \alpha$. Further, we let $\bepsilon' = (\epsilon_2, \epsilon_3, \ldots, \epsilon_{d+1})\in \{-1, 1\}^d$, and if $\epsilon_1 = -1$, then letting $f^\bepsilon = f_a^{\bepsilon'}$ we can verify that $f^\bepsilon(x_t(\bepsilon)) = s_t(\bepsilon)[\epsilon_t]$ for any $t\in [d+1]$, and if $\epsilon_1 = 1$, then letting $f^\bepsilon = f_b^{\bepsilon'}$ we can verify that $f^\bepsilon(x_t(\bepsilon)) = s_t(\bepsilon)[\epsilon_t]$ for any $t\in [d+1]$. Hence, depth-$(k+1)$ tree $\bx$ is shattered by $\calF$, which leads to contradiction. Therefore, we have 
	\begin{equation}\label{eq: a-b}
		\frakc(a, b) < \alpha,\qquad \forall a, b\in \calK.
	\end{equation}
	
	Next, for any $k\in [M]$ with $\dseq(\calF_k, \alpha)\le d-1$, according to the induction hypothesis $\mathfrak{G}(n-1, d-1)$, there exists a sequential cover $\calV_k^l$ of size $g_M(n-1, d-1)$ for $\cF$ on the depth-$(n-1)$ $\calX$-valued tree $\bx^l$, and also a sequential cover $\calV_k^r$ of size $g_M(n-1, d-1)$ for $\cF$ on the depth-$(n-1)$ $\calX$-valued tree $\bx^r$. We then combine the elements in $\calV_k^l$ and $\calV_k^r$ into a set $\calV_k$ of depth-$n$ $[M]$-valued trees by a joining process as follows. We let $v_1 = k\in [M]$, and according to the construction of $\calF_k$ we have for any $f\in \calF$ that $f(x_1) = v_1$, and thus $\frakc(f(x_1), v_1)\le \alpha$. For $\bv^l\in \calV_k^l$ and $\bv^r\in \calV_k^r$, we define depth-$n$ $[M]$-valued tree $\bv[\bv^l, \bv^r]$ as: for any path $\bepsilon\in \{-1, 1\}^n$, we let $\bepsilon' = (\epsilon_{2:n})\in \{-1, 1\}^{n-1}$, and let $v_1[\bv^l, \bv^r](\bepsilon) = v_1$. If $\epsilon_1 = -1$, then let $v_t[\bv^l, \bv^r](\bepsilon) = v_{t-1}^l(\bepsilon')$, and if $\epsilon_1 = 1$, then let $v_t[\bv^l, \bv^r](\bepsilon) = v_{t-1}^r(\bepsilon')$. We construct $\calV_k = \{\bv[\bv^l, \bv^r]\}$ with $|\calV_k|\le \max\{|\calV_k^l|, |\calV_k^r|\}$ to make sure that every element in $\calV_k^l$ and $\calV_k^r$ appears at least once in the construction of $\calV_k$. Next, we will argue that $\calV_k$ is an $\alpha$-sequential cover of $\calF_k$ on $\bx$. This is easy to see by construction: for any $f\in \calF_k$ and $\bepsilon\in \{-1, 1\}^n$, if $\epsilon_1 = -1$, then since $\calV_k^l$ is a $\alpha$-sequential cover of $\calF_k$, there exists $\bv^l\in \calV_k^l$ such that for any $2\le t\le n$, $\frakc(f(x_t(\bepsilon)), v_t^l(\bepsilon))\le \alpha$. Suppose $\bv = \bv[\bv^l, \bv^r]\in \calV_k$ for some $\bv^r\in \calV_k^r$, and we also have $\frakc(f(x_1(\bepsilon)), v_1(\bepsilon))\le \alpha$ according to the construction of $\calF_k$. Hence, for any $t\in [n]$, we always $\frakc(f(x_t(\bepsilon)), v_t(\bepsilon))\le \alpha$. Therefore, $\calV_k$ is a cover of $\calF_k$ on $\bx$. Further by induction hypothesis we have $\max\{|\calV_k^l|, |\calV_k^r|\}\le g_M(n-1, d-1)$. Hence $|\calV_k|\le g_M(n-1, d-1)$.

	If $\calK = \emptyset$, then by letting $\calV = \cup_{k\in [M]} \calV_k$, $\calV$ is an $\alpha$-sequential cover of $\calF$ on $\bx$, and also 
	$$|\calV| \le M\cdot g_M(n-1, d-1)\le g_M(n-1, d) + (M-1)g_M(n-1, d-1) = g_M(n, d),$$
	where the inequality follows from the fact that $g_M(n-1, d-1)\le g_M(n-1, d)$ for any $n, d$, and the last equation follows from \pref{eq: equation-g-beta}.
	
	Next, we consider cases where $|\calK|\ge 1$. Consider the function class $\calF' = \cup_{k\in \calK} \calF_k\subseteq \calF$. We have $\dseq(\calF', \alpha)\le \dseq(\calF, \alpha) = d$. According to the induction hypothesis $\mathfrak{G}(n-1, d)$, there exists a sequential cover $\calV^l$ of size $g_M(n-1, d)$ for the depth-$(n-1)$ $\calX$-valued tree $\bx^l$, and also a sequential cover $\calV^l$ of size $g_M(n-1, d)$ for the depth-$(n-1)$ $\calX$-valued tree $\bx^l$. As before, we combine the elements in $\calV^l$ and $\calV^r$ into a set $\calV'$ of depth-$n$ $[M]$-valued trees. We let $v_1 = f(x_1)\in [M]$ for some $f\in \calF'$, chosen arbitrarily. Then, according to the construction of $\calF'$, we have for any other $g\in \calF'$, $\frakc(g(x_1), v_1)\le \alpha$. For $\bv^l\in \calV^l$ and $\bv^r\in \calV^r$, we define depth-$n$ $[M]$-valued tree $\bv[\bv^l, \bv^r]$ by joining them with $v_1$ at the root as before: for any path $\bepsilon\in \{-1, 1\}^n$, we let $\bepsilon' = (\epsilon_{2:n})\in \{-1, 1\}^{n-1}$, and let $v_1[\bv^l, \bv^r](\bepsilon) = v_1$. If $\epsilon_1 = -1$, then let $v_t[\bv^l, \bv^r](\bepsilon) = v_{t-1}^l(\bepsilon')$, and if $\epsilon_1 = 1$, then let $v_t[\bv^l, \bv^r](\bepsilon) = v_{t-1}^r(\bepsilon')$. We construct $\calV' = \{\bv[\bv^l, \bv^r]\}$ with $|\calV'|\le \max\{|\calV^l|, |\calV^r|\}$ to make sure that every element in $\calV^l$ and $\calV^r$ appears at least once in the construction of $\calV'$. Next, we will argue that $\calV'$ is a $\alpha$-sequential cover of $\calF'$ on $\bx$. For any $f\in \calF'$ and $\bepsilon\in \{-1, 1\}^n$, if $\epsilon_1 = -1$, then since $\calV^l$ is a $\alpha$-sequential cover of $\calF'$ on the tree $\bx^l$, there exists $\bv^l\in \calV^l$ such that for any $2\le t\le n$, $\frakc(f(x_t(\bepsilon)), v_t^l(\bepsilon))\le \alpha$. Suppose $\bv = \bv[\bv^l, \bv^r]\in \calV'$ for some $\bv^r\in \calV^r$ (according to the construction of $\calV'$ such $\bv$ exists). Then since $f\in \calF'$, according to \pref{eq: a-b} we have $\frakc(f(x_1(\bepsilon)), v_1(\bepsilon))\le \alpha$. Hence for any $t\in [n]$, we always $\frakc(f(x_t(\bepsilon)), v_t(\bepsilon))\le \alpha$. Similarly, if $\epsilon_1 = 1$, there also exists some $\bv\in \calV'$ such that $\frakc(f(x_t(\bepsilon)), v_t(\bepsilon))\le \alpha$ for any $t\in [n]$. Therefore, $\calV'$ is an $\alpha$-sequential cover of $\calF'$. Further by induction hypothesis we have $\max\{|\calV^l|, |\calV^r|\}\le g_M(n-1, d)$. Hence $|\calV'|\le g_M(n-1, d)$. We now let $\calV = \calV'\cup\left(\cup_{k\not\in \calK} \calV_k\right)$, and after noticing that $|[M]\backslash \calK|\le (M - 1)$, we obtain
	$$|\calV| \le (M-1)\cdot g_M(n-1, d-1) + g_M(n-1, d) = g_M(n, d),$$
	where the last equation follows from \pref{eq: equation-g-beta}. This finishes the proof of the induction statement $\mathfrak{G}(n, d)$.

	We conclude that, by induction, we have for any depth-$n$ $\calX$-valued tree $\bx$, 
	$$\calNseq_\infty(\calF, \bx, \alpha)\le g_M(n, \dseq(\calF, \alpha)) = \sum_{i=0}^{\dseq(\calF, \alpha)} \binom{n}{i}\cdot \left(M - 1\right)^i \stackrel{(i)}{\le} \left(\frac{enM}{\dseq(\calF, \alpha)}\right)^{\dseq(\calF, \alpha)}\le \left(enM\right)^{\dseq(\calF, \alpha)},$$
	where $(i)$ follows e.g. from \cite[Theorem 7]{rakhlin2015sequentialcomplexity}. 
\end{proof}

\begin{proof}[Proof of \pref{prop: seq-covering-fat}]
	We define distance $\frakc': [M]\times [M]\to \RR_+\cup\{0\}$:
	\begin{equation}\label{eq: def-cprime-c}
		\frakc'(a, b) = \frakc(u_a, u_b).
	\end{equation}
	For any $f\in \calF$, since $f$ maps $\calX$ into $[-1, 1]$, there exists $\bar{f}: \calX\to [M]$ such that for any $x\in \calX$, $\frakc(f(x), u_{\barf(x)})\le \beta$. We define function class $\barF = \{\barf: f\in \calF\}\subseteq \{\calX\to [M]\}$. We use $\dseq_{\frakc'}(\barF, \alpha)$ to denote the sequential gapped 
	dimension of integer-valued function class $\barF$ at scale $\alpha$ under distance $\frakc'$, where the dimension is defined in \pref{def: shattering-finite}, and for simplicity we let $d = \dseq_{\frakc'}(\barF, \alpha)$. Suppose $\bx$ is a depth-$d$ tree shattered by $\calF$. Then there exists a depth-$d$ $([M]\times [M])$-valued tree $\barbs$ such that for any $\bepsilon\in \{-1, 1\}^d$ and $t\in [d]$, $\frakc'(\bars_t(\bepsilon)[-1], \bars_t(\bepsilon)[1]) \ge \alpha$, and also for any $\bepsilon\in \{-1, 1\}^d$, there exists $\barf^{\bepsilon}\in \barF$ such that $\barf^\bepsilon(x_t(\bepsilon)) = \bars_t(\bepsilon)[\epsilon_t]$ for any $t\in [d]$. Notice from the definition of $\calF$ there exists some $f^\bepsilon\in \calF$ such that $\barf^\bepsilon = \overline{(f^\bepsilon)}$.

	We next verify that tree $\bx$ is also shattered by $\calF$ at scale $(\alpha, \beta)$, according to the definition \pref{def: seq-real-dim}. We construct depth-$d$ $([-1, 1]\times [-1, 1])$-tree $\bs$ according to $\barbs$ as follows:
	$$s_t(\bepsilon)[-1] = u_{\bars_t(\bepsilon)[-1]}\quad \text{and}\quad s_t(\bepsilon)[1] = u_{\bars_t(\bepsilon)[1]}.$$ 
	Then according to the definition of $\frakc'$ in \pref{eq: def-cprime-c}, we have for any $\bepsilon\in \{-1, 1\}^d$ and $t\in [n]$,
	$$\frakc(s_t(\bepsilon)[-1], s_t(\bepsilon)[1]) = \frakc'(\bars_t(\bepsilon)[-1], \bars_t(\bepsilon)[1]) \ge \alpha.$$
	Additionally, for any $\bepsilon\in \{-1, 1\}^d$, there exists some $f^\bepsilon\in \calF$ such that $\overline{(f^\bepsilon)}(x_t(\bepsilon)) = \bars_t(\bepsilon)[\epsilon_t]$, which implies,
	$$\frakc(f(x_t(\bepsilon)), s_t(\bepsilon)[\epsilon_t]) = \frakc(f(x_t(\bepsilon)), u_{\bars_t(\bepsilon)[\epsilon_t]}) = \frakc(f(x_t(\bepsilon)), u_{\overline{(f^\bepsilon)}(x_t(\bepsilon))})\le \beta,\quad \forall t\in [d],$$
	where the last inequality follows from the definiton of $\bar{f}$. Therefore, $\bx$ is shattered by $\calF$ at scale $(\alpha, \beta)$, and according to \pref{def: seq-real-dim} we have $\dseq(\calF, \alpha, \beta)\ge d$. 

	Next we will upper bound the sequential covering number of $\calF$ in terms of $d$. For a fixed depth-$n$ $\calX$-valued tree $\bx$, according to \pref{lem: shattering-finite}, there exists a sequential covering $\barV$ of $\barF$ with size no more than $(neM)^d$. Hence for any $f\in \calF$ and $\bepsilon\in \{-1, 1\}^d$, since $\bar{f}\in \barF$, there exists some $\barbv\in \barV$ which satisfies
	$$\frakc'(\barf(x_t(\bepsilon)), \barv_t(\bepsilon))\le \alpha\qquad \forall t\in [n],$$
	which implies that 
	$$\frakc(f(x_t(\bepsilon)), u_{\barv_t(\bepsilon)})\le \frakc(f(x_t(\bepsilon)), u_{\bar{f}(x_t(\bepsilon))}) + \frakc(u_{\bar{f}(x_t(\bepsilon))}, u_{\barv_t(\bepsilon)})\le \beta + \alpha\qquad\forall t\in [n],$$
	where the first inequality uses the triangle inequality of $\frakc$, and the second inequality uses the definition of function $\bar{f}$. For every $\barbv\in \barV$, we construct depth-$d$ $\RR$-valued tree $\bv_{\barbv}$ where for any $\bepsilon\in \{-1, 1\}^d$ and $t\in [n]$, $(v_{\barbv})_t(\bepsilon) = u_{\barv_t(\bepsilon)}$, for every $\barbv\in \barV$. And we further let $\calV = \{\bv_{\barbv}: \barbv\in \barV\}$. Then $\calV$ is an $(\alpha + \beta)$-cover of $\calF$ on tree $\bx$, which implies 
	$$\calNseq_\infty(\calF, \bx, \alpha + \beta)\le |\calV| = |\barV|\le (neM)^d.$$
\end{proof}

\subsection{Missing Proofs in \pref{sec: f-d}}\label{sec: f-d-proof}
\begin{proof}[Proof of \pref{prop: d-less-f}]
	We only need to prove that if a depth-$d$ $\calX$-valued tree $\bx$ is shattered by $\calF$ at scale $(\alpha, \beta)$ according to \pref{def: seq-real-dim}, then $\bx$ is shattered by $\calF$ at scale $\alpha - 2\beta$ according to \pref{def: sequential-fat}.

	If $\bx$ is shattered by $\calF$ at scale $(\alpha, \beta)$ according to \pref{def: seq-real-dim}, then there exists a depth-$d$ $([0, 1]\times [0, 1])$-valued tree $\bs$ such that for any $\bepsilon\in \{-1, 1\}^n$, there exists a function $f^{\bepsilon}\in \calF$ such that for any $t\in [d]$, we have
	\begin{equation}\label{eq: f-s-inequaltiy}
		|f^{\bepsilon}(x_t(\bepsilon)) - s_t(\bepsilon)[\epsilon_t]|\le \beta\quad \text{and}\quad |s_t(\bepsilon)[1] - s_t(\bepsilon)[-1]| \ge \alpha.
	\end{equation}
	Without loss of generality we assume $s_t(\bepsilon)[1] > s_t(\bepsilon)[-1]$ for any $\bepsilon$ and $t\in [d]$. We define depth-$d$ $[0, 1]$-valued tree $\bv$ as follows: for any $\bepsilon\in \{-1, 1\}^d$ and $t\in [d]$, let 
	$$v_t(\bepsilon) = \frac{s_t(\bepsilon)[-1] + s_t(\bepsilon)[1]}{2}.$$
	Since $\alpha > 2\beta$, according to \pref{eq: f-s-inequaltiy} we have for any $\bepsilon\in \{-1, 1\}^n$, 
	$$\epsilon_t\cdot \left(f^{\bepsilon}(x_t(\bepsilon)) - v_t(\bepsilon)\right)\ge \frac{\alpha}{2} - \beta.$$
	Therefore, $\bx$ is shattered by $\calF$ with $\bv$ being its witness at scale $\alpha - 2\beta$, according to \pref{def: sequential-fat}.
\end{proof}

\begin{proof}[Proof of \pref{prop: f-less-d}]
    We first show that if depth-$d$ $\calX$-valued tree $\bx$ is shattered by function class $\calF$ at scale $\alpha$, according to \pref{def: sequential-fat}, then we have 
	\begin{equation}\label{eq: N-lower-bound}
		\calNseq_\infty(\calF, \bx, \alpha/3)\ge 2^d.
	\end{equation}
	We use depth-$d$ $[-1, 1]$-valued tree $\bs$ to denote the witness of shattering of $\bx$ via $\calF$ at scale $\alpha$. According to \pref{def: sequential-fat}, for any $\bepsilon\in \{-1, 1\}^d$, there exists some function $f^\bepsilon\in \calF$ such that 
	\begin{equation}\label{eq: f-s-shattering} 
		\epsilon_t\cdot \left(f^\bepsilon(x_t(\bepsilon)) - s_t(\bepsilon)\right) \ge \frac{\alpha}{2}
	\end{equation}
	Next we will prove \pref{eq: N-lower-bound} via contradiction. Suppose there exists an $\ell_\infty$-sequential covering $\calV$ at scale $\alpha/2$ of size less than $2^d$. Then for any $\bepsilon\in \{-1, 1\}^d$ and function $f\in \calF$, there exists some tree $\bv[f, \bepsilon]\in \calV$ such that 
	\begin{equation}\label{eq: f-v-covering}
		|f(x_t(\bepsilon)) - v_t(\bepsilon)|\le \frac{\alpha}{3},\qquad \forall t\in [d].
	\end{equation}
	Since $|\calV|\le 2^{d} - 1$, according to the pigeonhole principle there exists different $\bepsilon = (\epsilon_{1:d}), \bepsilon' = (\epsilon'_{1:d})\in \{-1, 1\}^d$ such that $\bv[f^\bepsilon, \bepsilon] = \bv[f^{\bepsilon'}, \bepsilon']$. We let 
	$$\bv[f^\bepsilon, \bepsilon] = \bv[f^{\bepsilon'}, \bepsilon'] = \bv$$
	We then choose $r$ to be smallest nonnegative integer such that $\epsilon_r\neq \epsilon'_r$. Since $\bepsilon\neq \bepsilon'$ we have $1\le r\le d$. Then we have $\epsilon_{1:r-1} = \epsilon'_{1:r-1}$, which implies that 
	\begin{equation}\label{eq: x-v-s-r}
		x_r(\bepsilon) = x_r(\bepsilon'), \quad v_r(\bepsilon) = v_r(\bepsilon')\quad \text{and}\quad s_r(\bepsilon) = s_r(\bepsilon').
	\end{equation}
	Therefore, we obtain that 
	\begin{align*} 
		|f^{\bepsilon}(x_r(\bepsilon)) - f^{\bepsilon'}(x_r(\bepsilon))| & = |f^{\bepsilon}(x_r(\bepsilon)) - f^{\bepsilon'}(x_r(\bepsilon'))|\\
		& \le |f^{\bepsilon}(x_r(\bepsilon)) - v_r(\bepsilon)| + |v_r(\bepsilon) - v_r(\bepsilon')| + |v_r(\bepsilon') - f^{\bepsilon'}(x_r(\bepsilon'))|\le \frac{\alpha}{3} + \frac{\alpha}{3} < \alpha, \numberthis \label{eq: f-epsilon-prime-bound}
	\end{align*}
	where the second inequality uses \pref{eq: f-v-covering} and the fact that $\bv[f^\bepsilon, \bepsilon] = \bv[f^{\bepsilon'}, \bepsilon'] = \bv$. Next according to \pref{eq: f-s-shattering}, we have 
	$$\epsilon_r\cdot \left(f^\bepsilon(x_r(\bepsilon)) - s_r(\bepsilon)\right) \ge \frac{\alpha}{2}\quad \text{and}\quad \epsilon_r'\cdot \left(f^{\bepsilon'}(x_r(\bepsilon')) - s_r(\bepsilon')\right) \ge \frac{\alpha}{2}.$$
	According to the definition of $r$ we have $\epsilon_r\neq \epsilon_r'$. Again using \pref{eq: x-v-s-r}, we obtain that
	$$|f^\bepsilon(x_t(\bepsilon)) - f^{\bepsilon'}(x_t(\bepsilon))| \ge \frac{\alpha}{2} + \frac{\alpha}{2} = \alpha,$$
	which contradicts \pref{eq: f-epsilon-prime-bound}. Therefore, we proved \pref{eq: N-lower-bound}.

	Next, according to \pref{prop: seq-covering-fat} with $\frakc(a, b) = |a - b|$, and tree $\bx$ being the depth-$\fat(\calF, 3(\alpha + \beta))$ $\calX$-valued tree shattered by $\calF$, we obtain that 
	$$\calNseq_\infty(\calF, \bx, \alpha+\beta)\le \left(\frac{2e\cdot \fat(\calF, 3(\alpha + \beta))}{\beta}\right)^{\dseq(\calF, \alpha, \beta)}.$$
	According to \pref{eq: N-lower-bound} we further have 
	$$\calNseq_\infty(\calF, \bx, \alpha + \beta)\ge 2^{\fat(\calF, 3\alpha + 3\beta)}.$$
	Therefore, we conclude that 
	$$\log 2\cdot \fat(\calF, 3(\alpha + \beta))\le \dseq(\calF, \alpha, \beta)\cdot \log\left(\frac{2e\cdot \fat(\calF, 3(\alpha + \beta))}{\beta}\right),$$
	which implies that 
        \begin{equation}\label{eq: seq-sfat-seq}
            \fat(\calF, 3(\alpha + \beta))\le 2\dseq(\calF, \alpha, \beta)\cdot \log\left(\frac{6\fat(\calF, 3(\alpha + \beta))}{\beta}\right).
        \end{equation}
        Additionally, since $\log x\le \sqrt{x}$ holds for any $x > 0$,
        $$\fat(\calF, 3(\alpha + \beta))\le 2\dseq(\calF, \alpha, \beta)\cdot \sqrt{\frac{6\fat(\calF, 3(\alpha + \beta))}{\beta}},$$
        which implies 
        $$\fat(\calF, 3(\alpha + \beta))\le \frac{24\dseq(\calF, \alpha, \beta)^2}{\beta}$$
        Bringing this back to \pref{eq: seq-sfat-seq}, we obtain that
	$$\fat(\calF, 3(\alpha + \beta))\le 4\dseq(\calF, \alpha, \beta)\cdot \log\left(\frac{12\dseq(\calF, \alpha, \beta)}{\beta}\right).$$
\end{proof}

\begin{proof}[Proof of \pref{prop: log-necessary}]
	We first show that for any $0 < 2\beta < \alpha < 1$, $\dseq(\calF, \alpha, \beta) = 1$. It is easy to see that the depth-$1$ $\calX$-valued tree $\bx$ with $x_1 = x$ is shattered by $\calF$ at scale $(\alpha, \beta)$, according to \pref{def: seq-real-dim}, hence $\dseq(\calF, \alpha, \beta)\ge 1$. Next we show that $\dseq(\calF, \alpha, \beta)\le 1$. Suppose there is a depth-$2$ $\calX$-valued tree $\bx$ shattered by $\calF$, then all nodes equal to $x$ whatever depth and path. We let $\bs$ to be the depth-$2$ $([-1, 1]\times [-1, 1])$-tree which is the witness of the shattering. Then for any $\bepsilon = (\epsilon_1, \epsilon_2)\in \{-1, 1\}^2$, there exists functions $f^{\bepsilon}\in \calF$ such that $|f^\bepsilon(x) - s_1(\bepsilon)[\epsilon_1]|\le \beta$ and $|f^\bepsilon(x) - s_2(\bepsilon)[\epsilon_2]|\le \beta$. Therefore, we have $|s_1(\bepsilon)[\epsilon_1] - s_2(\bepsilon)[\epsilon_2]|\le 2\beta$ for any $\bepsilon\in \{-1, 1\}^2$. We choose $\bepsilon = (1, 1)$ and $\bepsilon' = (1, -1)$, then we have 
	$$s_1(\bepsilon)[\epsilon_1] = s_1(\bepsilon')[\epsilon'_1]\quad \text{and}\quad |s_2(\bepsilon)[\epsilon_2] - s_2(\bepsilon')[\epsilon_2']|\ge \alpha.$$
	When $\alpha > 2\beta$, the above inequality cannot hold. Hence we have $\dseq(\calF, \alpha, \beta) = 1$.

	Next, we show that $\fat(\calF, \alpha)\ge \log(1/\alpha)$. We let $d = \lfloor \log_2(1/\alpha)\rfloor$, and for depth-$d$ path $\bepsilon\in \{-1, 1\}^d$, we define the shattered tree $\bx$ such that $x_t(\bepsilon) = x$ for any $\bepsilon\in \{-1, 1\}^d$ and $t\in [d]$, and the witness tree $\bv$ as:
	$$v_t(\bepsilon) = \sum_{i=1}^{t-1} \epsilon_i\cdot 2^{d-i}\cdot \alpha,$$
	and we choose function $f^\bepsilon$ as 
	$$f^\bepsilon(x) = \sum_{i=1}^d \epsilon_i\cdot 2^{d-i}\cdot \alpha.$$
	Since $d = \lfloor \log_2(1/\alpha)\rfloor$, $\bv$ is a depth-$d$ $[-1, 1]$-valued tree, and also $f\in \calF$. For any $\bepsilon\in \{-1, 1\}^d$, we have 
	$$\epsilon_t\cdot (f^\bepsilon(x_t(\bepsilon)) - v_t(\bepsilon)) = \epsilon_t\cdot \left(\sum_{i=t}^d \epsilon_i\cdot 2^{d-i}\right)\cdot \alpha = \left(2^{d-t}  - \sum_{i=t+1}^d \epsilon_i\epsilon_t\cdot 2^{d-i}\right)\cdot \alpha\ge \alpha > \frac{\alpha}{2}.$$
	Hence tree $\bx$ is shattered by $\calF$, which implies that $\fat(\calF, \alpha)\ge \lfloor \log_2(1/\alpha)\rfloor$.
\end{proof}

\subsection{Missing Proofs in \pref{sec: seq-offset-rademacher}}
\begin{proof}[Proof of \pref{thm: offset-lower-bound}]
	For fixed positive integer $n$, we let 
	$$\alpha_n = \frac{1}{2}\cdot \argmin_{\alpha > 0}\left\{\dseq(\calF, \alpha, \alpha/20)\cdot \frac{16}{\alpha^2}\le n\right\}.$$
	Since $\dseq(\calF, \alpha, \alpha/20) = \tOmega\left(\alpha^{-p}\right)$ for every $\alpha\ge 0$, we have 
	$$\dseq(\calF, \alpha_n, \alpha_n/20)\wedge \frac{n\alpha_n^2}{4} = \tOmega\left(n^{\frac{p}{p+2}}\right),\qquad \forall n\in \mathbb{Z}_+.$$
	In the following, we will prove that for any positive integer $n$, we have 
	$$\sup_{\bmu, \bx} \EE[\calR_n(\calF, \bmu, \bx, \bepsilon)] = \Omega\left(\dseq(\calF, \alpha_n, \alpha_n/20)\right).$$

	We fix $n$, and let $\alpha = \alpha_n$, $d = \dseq(\calF, \alpha_n, \alpha_n/20)\wedge (n\alpha_n^2/4)$. We let $\btx$ be the depth-$d$ $\calX$-valued tree shattered by $\calF$. Then according to \pref{def: seq-real-dim}, there exists a depth-$d$ $[-1, 1]\times [-1, 1]$-valued tree $\bs$ such that for any path $\btepsilon = (\tepsilon_{1:d})\in \{-1, 1\}^d$, there exists $f^{\btepsilon}\in \calF$ such that
	\begin{equation}\label{eq: eq-natarajan}
		\left|f^{\btepsilon}(\tx_t(\btepsilon)) - s_t(\btepsilon)[\tepsilon_t]\right| \leq \frac{\alpha}{20}\quad \text{and}\quad |s_t(\btepsilon)[1] - s_t(\btepsilon)[-1]| \ge \alpha\qquad \forall t\in [d],
	\end{equation}
	Without loss of generality, we assume $s_t(\btepsilon)[1]\ge s_t(\btepsilon)[-1]$. We define depth-$d$ $[-1, 1]$-valued $\btmu$ as 
	$$\tmu_t(\btepsilon) = \frac{s_t(\btepsilon)[-1] + s_t(\btepsilon)[1]}{2},\quad \forall \bepsilon\in \{-1, 1\}^d.$$

	In the following, we will construct trees $\bx$ and $\bmu$ such that $\EE[\calR_n(\calF, \bmu, \bx, \bepsilon)]\ge d/50$. 
	For a fixed path $\bepsilon\in \{-1, 1\}^n$, we first define an auxiliary path $\btepsilon = (\tepsilon_{1:d})\in \{-1, 1\}^d$ of length $d$, as well as $d$ integers $k_1, k_2, \ldots, k_d$ in the following way: calculate $\tepsilon_{1:d}$ and $k_{1:d}$ iteratively as
	$$k_t = \left\lfloor \frac{1}{(s_t(\btepsilon)[1] - s_t(\btepsilon)[-1])^2}\right\rfloor \vee 1,\qquad \forall t\in [d],$$
	and 
	$$\tepsilon_t = 2\mathbb{I}\bigg\{\sum_{j=1}^{k_t} \epsilon_{k_1 + \ldots + k_{t-1} + j} \ge 0\bigg\}-1,\qquad \forall t\in [d].$$
	Notice that according to the above definition, $k_t$ only depends on $\epsilon_{1: k_1 + \ldots + k_{t-1}}$, and $\tepsilon_t$ depends on $\epsilon_{1: k_1 + \ldots + k_t}$. Additionally, since $|s_t(\btepsilon)[1] - s_t(\btepsilon)[-1]|\ge \alpha$, we have 
	$$k_t\le \frac{1}{\alpha^2}\vee 1\le \frac{4}{\alpha^2},$$
	which implies $k_1 + \ldots + k_d\le n$ always holds due to the definition of $\alpha=\alpha_n$. Hence $k_{1:d}$ and $\tepsilon_{1:d}$ are all well-defined. We pad the tree $\bx$ with an arbitrary value $x_0$, resulting in the following block structure of 
     $(x_1(\bepsilon), x_2(\bepsilon), \ldots, x_n(\bepsilon))$ : 
	$$(\underbrace{\tx_1(\btepsilon), \tx_1(\btepsilon), \ldots, \tx_1(\btepsilon)}_{k_1 \text{ terms}}, \underbrace{\tx_2(\btepsilon), \tx_2(\btepsilon), \ldots, \tx_2(\btepsilon)}_{k_2\text{ terms}}, \ldots, \underbrace{\tx_d(\btepsilon), \tx_d(\btepsilon), \ldots, \tx_d(\btepsilon)}_{k_d\text{ terms}}, \underbrace{x_0, x_0, \ldots, x_0}_{(n - k_1 - k_2 - \ldots - k_d)\text{ terms}}),$$
	Similarly, the values $(\mu_1(\bepsilon), \mu_2(\bepsilon), \ldots, \mu_n(\bepsilon))$ 
	are of the form
	$$(\underbrace{\tmu_1(\btepsilon), \tmu_1(\btepsilon), \ldots, \tmu_1(\btepsilon)}_{k_1 \text{ terms}}, \underbrace{\tmu_2(\btepsilon), \tmu_2(\btepsilon), \ldots, \tmu_2(\btepsilon)}_{k_2 \text{ terms}}, \ldots, \underbrace{\tmu_d(\btepsilon), \tmu_d(\btepsilon), \ldots, \tmu_d(\btepsilon)}_{k_d \text{ terms}}, \underbrace{f^{\btepsilon}(x_0), f^{\btepsilon}(x_0), \ldots, f^{\btepsilon}(x_0)}_{(n - k_1 - k_2 - \ldots - k_d)\text{ terms}})$$

    We note that the construction of trees \(x_t(\bepsilon)\) and \(\mu_t(\bepsilon)\) here differs slightly from the construction used for the non-sequential setting in the proof of \pref{thm: offset-lower-bound-2}. In the sequential case, \( \mu \) is structured as a tree and can adapt based on the history of \( \epsilon \), allowing us to choose \( \mu \) as \( f^\bepsilon(x_0) \). 
    In contrast, in the non-sequential case, the choice of \( \mu \) must be independent of the history, so we are required to select global (i.e., fixed) values for \( \mu \). For this reason, \pref{thm: offset-lower-bound} only needs $d(\calF, \alpha, \alpha / 20) = \tOmega(\alpha^{-p})$, while \pref{thm: offset-lower-bound-2}, with the current proof, requires that $d(\calF, \alpha, \alpha / (20nC)) = \tOmega(\alpha^{-p})$.

	Next we write $\calR_n(\calF, \bmu, \bx, \bepsilon)$ in terms of segments $1$ to $d$. Noticing that $f^{\btepsilon}\in \calF$ for any depth-$d$ path $\btepsilon\in \{-1, 1\}^d$, if we use $[t, j]$ to denote $k_1 + \ldots + k_t + j$, we obtain
	\begin{align*} 
		&\hspace{-0.5cm} \calR_n(\calF, \bmu, \bx, \bepsilon)\\
		& = \sup_{f\in \calF} \Bigg\{\sum_{t=1}^d\sum_{j=1}^{k_t} C\cdot  \epsilon_{[t-1, j]} (f(x_{[t-1, j]}(\bepsilon)) - \mu_{[t-1, j]}(\bepsilon)) - (f(x_{[t-1, j]}(\bepsilon)) - \mu_{[t-1, j]}(\bepsilon))^2\\
		&\qquad + \sum_{j=1}^{n - k_1 - \ldots - k_d} C\cdot \epsilon_{[d, j]} (f(x_{[t-1, j]}(\bepsilon)) - \mu_{[d, j]}(\bepsilon)) - (f(x_{[d, j]}(\bepsilon)) - \mu_{[d, j]}(\bepsilon))^2\Bigg\}\\
		& \ge \sum_{t=1}^d\sum_{j=1}^{k_t} C\cdot\epsilon_{[t-1, j]} (f^{\btepsilon}(\tx_t(\bepsilon)) - \tmu_t(\btepsilon)) - (f^{\btepsilon}(\tx_t(\bepsilon)) - \tmu_t(\btepsilon))^2,
	\end{align*}
 	where in the last step we chose $f = f^{\btepsilon}\in \calF$. Next, for fixed $\bepsilon$, we define the random variable
	$$\hepsilon_t = \frac{1}{k_t} \sum_{j=1}^{k_t} \epsilon_{[t-1, j]}.$$
	Since $\tx_t(\btepsilon)$ and $\mu_t(\btepsilon)$ 
	are independent of $\epsilon_{[t-1, 1]}, \ldots, \epsilon_{[t-1, k_t]}$, we can write
	\begin{align*}
		\calR_n(\calF, \bmu, \bx, \bepsilon) & \ge \sum_{t=1}^d k_t\cdot \left(C\cdot\hepsilon_t(f^{\btepsilon}(\tx_t(\bepsilon)) - \tmu_t(\btepsilon)) - (f^{\btepsilon}(\tx_t(\bepsilon)) - \tmu_t(\btepsilon))^2\right). \numberthis \label{eq: r-n-segment}
	\end{align*}
	According to \pref{eq: eq-natarajan}, we have for any $\btepsilon$, 
	$$|f^{\btepsilon}(\tx_t(\btepsilon)) - s_t(\btepsilon)[\tepsilon_t]|\le \frac{\alpha}{20}\quad\text{and}\quad s_t(\btepsilon)[\tepsilon_t] - \mu_t(\btepsilon) = \sign(\hepsilon_t)\cdot \frac{s_t(\btepsilon)[1] - s_t(\btepsilon)[-1]}{2}.$$
	\pref{eq: eq-natarajan} also gives that $s_t(\btepsilon)[1] - s_t(\btepsilon)[-1] \ge \alpha$, which implies
	$$\frac{9}{10}\cdot \frac{s_t(\btepsilon)[1] - s_t(\btepsilon)[-1]}{2}\le \sign(\hepsilon_t)(f^{\btepsilon}(\tx_t(\btepsilon)) - \tmu_t(\btepsilon))\le \frac{11}{10}\cdot \frac{s_t(\btepsilon)[1] - s_t(\btepsilon)[-1]}{2}.$$
	Therefore we can lower bound \pref{eq: r-n-segment} by 
	$$\calR_n(\calF, \bmu, \bx, \bepsilon) \ge \sum_{t=1}^d k_t\cdot \left(C |\hepsilon_t|\cdot \frac{9}{10}\cdot \frac{s_t(\btepsilon)[1] - s_t(\btepsilon)[-1]}{2} - \frac{121}{100}\cdot\left(\frac{s_t(\btepsilon)[1] - s_t(\btepsilon)[-1]}{2}\right)^2\right).$$ 
	We notice that in the $t$-th summand in the right hand side, the only term which depend on $\epsilon_{[t-1, 1]: [t-1, k_t]}$ is $|\hepsilon_t|$, and all other terms only depends on $\epsilon_{1: [t-1: 0]}$. Hence we can lower bound $\EE\left[\calR_n(\calF, \bmu, \bx, \bepsilon)\right]$ by
	\begin{align*} 
		&\hspace{-0.5cm} \EE\left[\calR_n(\calF, \bmu, \bx, \bepsilon)\right]\\
		& \ge \sum_{t=1}^d \EE\left[k_t\cdot \left(C |\hepsilon_t|\cdot \frac{9}{10}\cdot \frac{s_t(\btepsilon)[1] - s_t(\btepsilon)[-1]}{2} - \frac{121}{100}\cdot\left(\frac{s_t(\btepsilon)[1] - s_t(\btepsilon)[-1]}{2}\right)^2\right)\right]\\
		& = \sum_{t=1}^d \EE\left[k_t\cdot \left(C \EE\left[|\hepsilon_t|\mid t_{1:[t-1, 0]}\right]\cdot \frac{9}{10}\cdot \frac{s_t(\btepsilon)[1] - s_t(\btepsilon)[-1]}{2} - \frac{121}{100}\cdot\left(\frac{s_t(\btepsilon)[1] - s_t(\btepsilon)[-1]}{2}\right)^2\right)\right]\\
		& \ge \sum_{t=1}^d \EE\left[k_t\cdot \left(C \sqrt{\frac{1}{2k_t}}\cdot \frac{9}{10}\cdot \frac{s_t(\btepsilon)[1] - s_t(\btepsilon)[-1]}{2} - \frac{121}{100}\cdot\left(\frac{s_t(\btepsilon)[1] - s_t(\btepsilon)[-1]}{2}\right)^2\right)\right]\\ \numberthis \label{eq: equation-lower-decompose}
	\end{align*}
	where the last inequality uses the Khintchine inequality \cite{haagerup1981best}. Next notice that according to our choice of $k_t$, 
	$$k_t = \left\lfloor \frac{1}{(s_t(\btepsilon)[1] - s_t(\btepsilon)[-1])^2}\right\rfloor \vee 1 \in \left[\frac{1}{2(s_t(\btepsilon)[1] - s_t(\btepsilon)[-1])^2}, \frac{4}{(s_t(\btepsilon)[1] - s_t(\btepsilon)[-1])^2}\right], $$
	which implies that $1/\sqrt{2}\le \sqrt{k_t} (s_t(\btepsilon)[1] - s_t(\btepsilon)[-1])\le 2$. Therefore, since $C\ge 2$, we have
	\begin{align*}
		\text{RHS of }\pref{eq: equation-lower-decompose} & \ge \sum_{t=1}^d \EE\left[k_t\cdot \left(C \sqrt{\frac{1}{2k_t}}\cdot \frac{9}{10}\cdot \frac{s_t(\btepsilon)[1] - s_t(\btepsilon)[-1]}{2} - \frac{121}{100}\cdot\left(\frac{s_t(\btepsilon)[1] - s_t(\btepsilon)[-1]}{2}\right)^2\right)\right]\\
		& \ge \sum_{t=1}^d \EE\left[\left(\frac{9C}{20\sqrt{2}} - \frac{121}{200}\right)\cdot \sqrt{k_t}\cdot |s_t(\btepsilon)[1] - s_t(\btepsilon)[-1]|\right] \ge \frac{d}{50}.
	\end{align*}
	Therefore, we obtain that 
	$$\sup_{\bmu, \bx} \EE[\calR_n(\calF, \bmu, \bx, \bepsilon)]\ge \frac{d}{50} = \tOmega\left(n^{\frac{p}{p+2}}\right).$$
\end{proof}

\begin{proof}[Proof of \pref{corr: sequential}]
    Since $\sup_{\bx}\log \calNseq_\infty(\calF, \bx, \alpha) = \tOmega\left(\alpha^{-p}\right)$, according to \pref{prop: seq-covering-fat} we have 
    $$\dseq(\calF, \alpha, \alpha/20) = \tOmega\left(\alpha^{-p}\right).$$
    Next, we call \pref{thm: offset-lower-bound} with $C = 2$. According to \cite[Lemma 4]{rakhlin2014online}, we obtain
    $$\calVseq_n(\calF) = \tOmega\left(n^{\frac{p}{p+2}}\right).$$
\end{proof}

\end{document}